\RequirePackage{fix-cm} %
\documentclass{applemlr}

\usepackage{amsmath}
\usepackage{enumerate}
\usepackage{algorithm}
\usepackage{algpseudocode}
\usepackage{amsfonts}
\usepackage{amsthm}
\usepackage{cleveref}
\usepackage{diagbox}
\usepackage{colortbl}
\usepackage{amssymb}
\usepackage{xspace}
\usepackage{wrapfig}
\usepackage{adjustbox}
\usepackage{tabularx}
\usepackage{booktabs}
\usepackage{mathtools}
\usepackage{tikz}
\usepackage{enumitem}
\usepackage{silence}
\usepackage{dsfont}
\usepackage[table]{xcolor}
\usepackage[dvipsnames]{xcolor}
\usepackage{multirow}
\usepackage{makecell}
\usepackage{xfakebold}

\usepackage{amsmath,amsfonts,bm}

\def\eqref#1{equation~\ref{#1}}

\def\1{\bm{1}}

\DeclareMathAlphabet{\mathsfit}{\encodingdefault}{\sfdefault}{m}{sl}
\SetMathAlphabet{\mathsfit}{bold}{\encodingdefault}{\sfdefault}{bx}{n}

\definecolor{textgray}{HTML}{6E6E73}
\usetikzlibrary{positioning, calc}
\usetikzlibrary{decorations.pathmorphing}

\makeatletter
\patchcmd{\wrong@fontshape}{\@gobbletwo}{}{}{}
\makeatother
\numberwithin{equation}{section}
\makeatletter
\AtBeginDocument{
  \urlstyle{sf}
  
}
\makeatother

\definecolor{light}{RGB}{125, 125, 125}
\crefname{tcb@cnt@pbox}{code}{code}
\Crefname{tcb@cnt@pbox}{Code}{Code}
\crefname{assumption}{assumption}{assumption}
\Crefname{assumption}{Assumption}{Assumptions}

\newtcolorbox[auto counter]{pbox}[2][]{
  colback=white,
  title=Code~\thetcbcounter: #2,
  #1,fonttitle=\sffamily,
  fontupper=\sffamily,
  arc=2pt,
  colframe=bgcolor,
  coltitle=fgcolor,
  colbacktitle=bgcolor,
  toptitle=0.25cm,
  bottomtitle=0.125cm
}

\makeatletter
\newcommand\applefootnote[1]{%
  \begingroup
  \renewcommand\thefootnote{}%
  \renewcommand\@makefntext[1]{\noindent##1}%
  \footnote{#1}%
  \addtocounter{footnote}{-1}%
  \endgroup
}
\makeatother

\definecolor{cverbbg}{gray}{0.90}

\newif\ifapplebuild
\newcommand{\appscale}{\ifapplebuild0.8\else1\fi}

\usepackage{hyperref}
\definecolor{darkorange}{HTML}{BF572E}
\hypersetup{
  colorlinks=true,
  linkcolor=darkorange,
  citecolor=darkorange,
  urlcolor=darkorange
}
\usepackage{titletoc}
\usepackage{url}
\usepackage{graphicx}
\usepackage{booktabs}
\usepackage{array}
\newcolumntype{P}[1]{>{\raggedright\arraybackslash}p{#1}}
\usepackage{lipsum}
\usepackage{amsthm}
\usepackage{algorithm}
\usepackage{algpseudocode}
\usepackage{subcaption}
\usepackage{wrapfig}
\usepackage{placeins}
\usepackage{cleveref}
\usepackage{enumitem}
\usepackage{xspace}
\usepackage{twemojis}

\DeclareMathSizes{6.8}{6.8}{5}{5}

\newcommand{\method}{PUMBA\xspace}
\crefname{equation}{Eq.}{Eqs.}
\Crefname{equation}{Eq.}{Eqs.}
\crefname{appsec}{Appendix}{Appendices}
\Crefname{appsec}{Appendix}{Appendices}
\newtheorem{proposition}{Proposition}
\newtheorem{definition}{Definition}
\newtheorem{theorem}{Theorem}

\newtheorem{corollary}{Corollary}
\theoremstyle{definition}
\newtheorem{assumption}{Assumption}

\crefname{assumption}{Assumption}{Assumptions}
\crefname{theorem}{Theorem}{Theorems}
\crefname{proposition}{Proposition}{Propositions}
\crefname{lemma}{Lemma}{Lemmas}
\crefname{corollary}{Corollary}{Corollaries}
\crefname{definition}{Definition}{Definitions}
\Crefname{assumption}{Assumption}{Assumptions}

\usepackage[colorinlistoftodos,textsize=tiny]{todonotes}        %

\title{On Trajectory-Aware Training for Masked\\ Diffusion Language Models} %

\applebuildtrue
\usepackage{needspace}

\author[\dagger]{Manuel Madeira}
\author{Amitis Shidani}
\author{Alice Bizeul}
\author{Victor Turrisi}
\author{Louis B\'ethune}
\author[\ddag]{Bhavika Devnani}
\author{Dan Busbridge}
\author{Pierre Ablin}
\author{Jo\~ao Monteiro}

\affiliation{Apple}
\affiliation[\dagger]{EPFL}
\affiliation[\ddag]{Georgia Institute of Technology}

\abstract{Masked diffusion models (MDMs) generate text by unmasking several tokens per step, but they are trained and sampled under different conditions.
The model is trained on randomly masked sequences, whereas inference follows a trajectory shaped by the model's own predictions. Additionally, each step has no access to what the previous one computed.
Recent methods narrow these limitations from separate angles, leaving open how these choices interact.
We introduce \method, a unified framework for trajectory-aware training that trains the denoiser on consecutive steps of policy-induced trajectories, passes information between steps, and optimizes them jointly by backpropagation through time.
A controlled study of this design space shows that \emph{i)} exact train--inference alignment fails due to local overfitting, whereas a looser alignment still brings training masks closer to those seen at inference;
\emph{ii)} passing continuous information outperforms discrete gradient estimators through the commitment at each step; and \emph{iii)} performance improves as backpropagation through time spans more steps, which we support theoretically.
Combined, these components match the best checkpoint of a same-size autoregressive model.
Building on these findings, we scale \method to supervised fine-tuning of LLaDA-8B, where it improves the trade-off between performance and number of function evaluations (NFEs) in both full-canvas and block diffusion generation.
At matched performance, it needs up to 22\% fewer NFEs than standard fine-tuning with twice the budget in full-canvas generation, and up to 26\% fewer than standard fine-tuning for the same number of steps in block diffusion. \looseness=-1

}

\metadata[Correspondence]{\sffamily
\url{manuel.madeira@epfl.ch}; \url{bdevnani3@gatech.edu};\newline
\mbox{\href{mailto:amitis_shidani@apple.com,abizeul@apple.com,v_turrisi@apple.com,l_bethune@apple.com,dbusbridge@apple.com,p_ablin@apple.com,jmonteiro2@apple.com}{\{amitis\_shidani, abizeul, v\_turrisi, l\_bethune, dbusbridge, p\_ablin, jmonteiro2\}@apple.com}}}
\metadata[Contributions]{\sffamily Work done while MM and BD were interns at Apple.}
\date{\sffamily\today}

\begin{document}

\maketitle

\vspace{-5pt}
\section{Introduction}
\label{sec:intro}
\vspace{-5pt}

Masked diffusion models (MDMs) \citep{shi2024simplified,sahoo2024mdlm} generate text by iteratively unmasking several tokens at a time, and have recently been scaled up \citep{nie2025llada,ye2025dream,labs2025mercury,wu2025fastdllm,nie2026illada,bie2025llada2}.
These models are trained to unmask partially masked sequences with randomly drawn masks.
At inference, however, an unmasking policy uses the model's predictions to decide which positions to reveal iteratively.
This disagreement between training and inference opens two gaps.
First, the model and policy jointly shape the inference trajectory, whose states differ from the training distribution (\Cref{fig:overview}).
Second, since training sequences are drawn independently, the model never learns to use information from previous steps.
Recent work addresses these gaps in different ways: by passing continuous information between steps \citep{jo2026loopholing,hu2026rcd}; by training along trajectories induced by the inference policy, known as Progressive Unmasking (PU)\footnote{
PU denotes the progressive unmasking framework; PUMA denotes the instantiation of \citet{kim2026puma}, which randomizes and anneals the number of tokens revealed per step.
} \citep{kim2026puma}; and by training on several consecutive steps jointly (\citealp{xia2026metastate,rozonoyer2026relay}).

\newif\ifnewfigone
\newfigonefalse
\newfigonetrue
\begin{figure}[t]
    \centering
\ifnewfigone
    \begingroup\fontfamily{ptm}\selectfont %
\usetikzlibrary{arrows.meta, calc}
\definecolor{ink}{HTML}{1d2227}      \definecolor{ink2}{HTML}{56616b}
\definecolor{hair}{HTML}{c9d0d6}     \definecolor{maskf}{HTML}{bfc7d0}
\definecolor{gt}{HTML}{16877b}       \definecolor{smp}{HTML}{7450c0}     \definecolor{err}{HTML}{d62839}
\definecolor{live}{HTML}{bf572e}
\definecolor{ok}{HTML}{1e9650}       \definecolor{no}{HTML}{d62839}
\definecolor{mMDM}{HTML}{4e5d64} \definecolor{mPU}{HTML}{4782a6}
\definecolor{mC1}{HTML}{fee0ae}  \definecolor{mCW}{HTML}{bf572e}

\def\fa{\fontsize{8}{9.6}\selectfont}     %
\def\fb{\fontsize{6.8}{8.2}\selectfont}   %

\def\cw{0.19}\def\ch{0.28}\def\cg{0.03}\def\km{m}\def\kg{g}\def\kp{p}
\def\sw{1.29}   %
\def\mw{0.50}   %
\def\gp{0.14}   %
\def\strip#1#2#3{%
  \foreach \c [count=\i] in {#3} {
    \pgfmathsetmacro{\xx}{#1+(\i-1)*(\cw+\cg)}
    \ifx\c\km\def\f{maskf}\else\ifx\c\kg\def\f{gt}\else\ifx\c\kp\def\f{smp}\else\def\f{err}\fi\fi\fi
    \fill[\f, rounded corners=0.4pt] (\xx,{#2-\ch/2}) rectangle ++(\cw,\ch);
  }}
\tikzset{
  arr/.style={-{Latex[length=2.3pt, width=2.1pt]}, line width=0.65pt},
  fth/.style={draw=ink2!85, fill=white, line width=0.5pt, rounded corners=1.5pt,
              minimum width=\mw cm, minimum height=0.34cm, inner sep=0pt, font=\fb, text=ink},
  lbl/.style={anchor=west, font=\fa\bfseries, text=ink, inner sep=0pt},
  sub/.style={anchor=west, font=\fb, text=ink2, inner sep=0pt},
  tiny/.style={font=\fb, inner sep=1pt},
}
\def\Y{\tikz[baseline=-0.5ex]\draw[ok, line width=1.1pt, line cap=round, line join=round]
                  (0,0.02) -- (0.065,-0.05) -- (0.19,0.09);}
\def\N{\tikz[baseline=-0.5ex]{\draw[no, line width=1.0pt, line cap=round]
                  (0,-0.06) -- (0.13,0.07) (0,0.07) -- (0.13,-0.06);}}

\def\chip#1#2{%
  \fill[\chipc] (#1,#2) circle (0.125);
  \draw[white, line width=0.85pt, line cap=round, line join=round]
    ({#1-0.060},{#2+0.002}) -- ({#1-0.018},{#2-0.042}) -- ({#1+0.064},{#2+0.050});}
\def\nochip#1#2{\draw[ink2!45, line width=0.6pt] (#1,#2) circle (0.110);}
\tikzset{iarr/.style={-{Latex[length=1.9pt, width=1.8pt]}, line width=0.55pt}}
\def\icw{0.075}\def\ich{0.15}\def\icg{0.016}
\def\icstrip#1#2#3#4#5{%
  \foreach \c [count=\i] in {#3} {
    \ifx\c\km\def\f{#5}\else\def\f{#4}\fi
    \fill[\f, rounded corners=0.2pt] ({#1+(\i-1)*(\icw+\icg)},{#2-\ich/2}) rectangle ++(\icw,\ich);}}
\def\icbox#1#2#3{%
  \filldraw[draw=#3, fill=white, line width=0.4pt, rounded corners=1pt] ({#1-0.20},{#2-0.125}) rectangle ({#1+0.20},{#2+0.125});
  \node[inner sep=0pt, font=\fontsize{5.6}{6}\selectfont, text=#3] at (#1,#2) {$f_\theta$};}
\def\ictraj#1#2#3#4{%
  \icstrip{#1-0.515}{#2}{m,m,m}{#3}{#4} \icstrip{#1-0.129}{#2}{m,g,m}{#3}{#4} \icstrip{#1+0.258}{#2}{g,g,m}{#3}{#4}
  \foreach \x in {-0.228,0.158} \draw[iarr, #3] ({#1+\x},#2) -- ++(0.07,0);}
\def\icself#1#2#3#4{%
  \icbox{#1-0.30}{#2}{#3} \icbox{#1+0.30}{#2}{#3}
  \draw[iarr, #4, rounded corners=1.2pt] ({#1-0.19},{#2-0.125}) -- ++(0,-0.12) -| ({#1+0.19},{#2-0.125});}
\def\icbptt#1#2#3#4{%
  \icself{#1}{#2}{#3}{#4}
  \draw[iarr, #4, dash pattern=on 1.2pt off 0.8pt, rounded corners=1.2pt]
    ({#1+0.30},{#2+0.125}) -- ++(0,0.12) -| ({#1-0.30},{#2+0.125});}

\ifdefined\figonelegend\else\newsavebox\figonelegend\fi
\sbox\figonelegend{\begin{tikzpicture}[every node/.style={anchor=west, font=\fb, text=ink2, inner sep=0pt}]
  \def\lg{0.55}   %
  \fill[maskf] (0,-0.09) rectangle ++(0.17,0.18);  \node (l1) at (0.24,0) {Masked};
  \fill[gt]  ($(l1.east)+(\lg,-0.09)$) rectangle ++(0.17,0.18);  \node (l2) at ($(l1.east)+(\lg+0.24,0)$) {Ground truth};
  \fill[smp] ($(l2.east)+(\lg,-0.09)$) rectangle ++(0.17,0.18);  \node (l3) at ($(l2.east)+(\lg+0.24,0)$) {Sampled};
  \fill[err] ($(l3.east)+(\lg,-0.09)$) rectangle ++(0.17,0.18);  \node (l4) at ($(l3.east)+(\lg+0.24,0)$) {Wrong, kept};
  \draw[arr, live] ($(l4.east)+(\lg,0)$) -- ++(0.42,0);             \node (l5) at ($(l4.east)+(\lg+0.50,0)$) {Carry $h_j$};
  \draw[arr, live, dash pattern=on 1.4pt off 1pt] ($(l5.east)+(\lg+0.42,0)$) -- ++(-0.42,0); \node (l6) at ($(l5.east)+(\lg+0.50,0)$) {Gradient $\nabla\ell$};
  \node (l7) at ($(l6.east)+(\lg,0)$) {$\mathrm{sg}[\cdot]$: stop-gradient};
\end{tikzpicture}}

\resizebox{\linewidth}{!}{%
\begin{tikzpicture}
\def\sa{3.30}\def\sb{5.75}\def\sc{8.20}                    %
\def\yH{5.20}\def\yM{4.40}\def\yP{3.45}\def\yC{2.65}\def\yB{1.55}\def\yI{-0.42}
\def\ka{11.06}\def\kb{12.40}\def\kc{13.74}                 %
\def\xl{10.21}\def\xr{10.43}\def\xe{14.40}   %

\def\step#1#2#3#4{%
  \strip{#1}{#2}{#3}
  \node[fth] (#4) at ({#1+\sw+\gp+\mw/2},#2) {$f_\theta$};
  \draw[ink2!80, line width=0.45pt] ({#1+\sw+0.02},#2) -- (#4.west);}
\def\lane#1#2#3#4{%
  \filldraw[fill=#2, draw=#2!70!black, line width=0.25pt] (0,{#1-0.26}) rectangle (0.055,{#1+0.20});
  \node[lbl] at (0.14,{#1+0.10}) {#3};
  \node[sub] at (0.14,{#1-0.17}) {#4};}
\def\carry#1#2#3#4{%
  \draw[arr, #3, rounded corners=1.8pt] ($(#1.south)+(0.16,0)$) -- ++(0,{-#4}) -| ($(#2.south)+(-0.16,0)$);}

\foreach \x/\j in {\sa/0,\sb/1,\sc/2} \node[tiny, text=ink2, anchor=south] at ({\x+\sw/2},{\yI+0.17}) {$\mathbf{x}_{t_\j}$};
\def\yIc{5.90}\def\yHl{5.50}   %
\foreach \x/\t in {\ka/{Policy\\trajectory}, \kb/{Self-\\conditioning}, \kc/{BPTT}}
  \node[tiny, text=ink, align=center, anchor=north] at (\x,{\yI+0.40}) {\t};

\node[lbl] at (0.14,{\yI+0.10}) {Inference};
\node[sub] at (0.14,{\yI-0.17}) {Own tokens, masks from $g$};
\step{\sa}{\yI}{m,m,m,m,m,m}{I0} \step{\sb}{\yI}{m,p,m,m,p,m}{I1} \step{\sc}{\yI}{p,p,m,m,p,e}{I2}
\draw[arr, smp] (I0.east) -- ({\sb-0.05},\yI);  \draw[arr, smp] (I1.east) -- ({\sc-0.05},\yI);
\node[tiny, text=smp!80!black, anchor=south] at ($(I0.east)!0.5!(\sb,\yI)+(0,0.19)$) {$g$};

\foreach \u/\v in {-0.08/\xl, \xr/\xe} \draw[hair, line width=0.5pt] (\u,{(\yM+\yP)/2}) -- (\v,{(\yM+\yP)/2});
\foreach \u/\v in {-0.08/\xl, \xr/\xe} \draw[hair, line width=0.5pt] (\u,{\yI+0.53}) -- (\v,{\yI+0.53});

\lane{\yM}{mMDM}{MDM}{Fresh $\mathbf{x}_t \sim q(\cdot\mid\mathbf{x}_0)$}
\step{4.40}{\yM}{g,m,m,g,m,g}{M0} \step{7.07}{\yM}{m,g,g,m,m,g}{M1}
\foreach \mx/\n/\t in {4.40/M0/{t}, 7.07/M1/{t'}} {
  \draw[ink2!45, line width=0.45pt, dash pattern=on 1.6pt off 1.2pt, rounded corners=2pt]
    ({\mx-0.18},{\yM-0.30}) rectangle ($(\n.east)+(0.16,0.30)$);
  \node[tiny, text=ink2, anchor=south] at ({\mx+\sw/2},{\yM+0.31}) {$\mathbf{x}_{\t}$};}

\def\pulane#1#2{%
  \step{\sa}{#1}{m,m,m,m,m,m}{#20} \step{\sb}{#1}{m,g,m,m,g,m}{#21} \step{\sc}{#1}{g,g,m,m,g,g}{#22}
  \draw[arr, gt] (#20.east) -- ({\sb-0.05},#1);  \draw[arr, gt] (#21.east) -- ({\sc-0.05},#1);}

\lane{\yP}{mPU}{+ PU}{One $\mathbf{x}_0$, masks from $g$}
\pulane{\yP}{P}
\node[tiny, text=gt!80!black, anchor=south] at ($(P0.east)!0.5!(\sb,\yP)+(0,0.19)$) {$g,\,\mathbf{x}_0$};

\lane{\yC}{mC1}{+ Carry}{Passes $h_j$ on, $W{=}1$}
\pulane{\yC}{C}
\carry{C0}{C1}{live}{0.20} \carry{C1}{C2}{live}{0.20}
\foreach \a/\b/\j in {C0/C1/1,C1/C2/2}
  \node[tiny, text=live, anchor=north] at ($(\a.south)!0.5!(\b.south)+(0,-0.21)$) {$\mathrm{sg}[h_\j]$};

\foreach \u/\v in {-0.08/\xl, \xr/\xe} \fill[live!10] (\u,{\yB-1.33}) rectangle (\v,{\yB+0.36});
\lane{\yB}{mCW}{+ BPTT}{PUMBA (ours)}
\pulane{\yB}{B}
\draw[arr, live, rounded corners=1.8pt] ($(B0.south)+(0.16,0)$) -- ++(0,-0.20) -| ($(B1.south)+(-0.16,0)$);
\draw[arr, live, rounded corners=1.8pt] ($(B1.south)+(0.16,0)$) -- ++(0,-0.20) -| ($(B2.south)+(-0.16,0)$);
\draw[arr, live, dash pattern=on 1.4pt off 1pt, rounded corners=1.8pt]
  ($(B2.south)+(-0.06,0)$) -- ++(0,-0.66) -| ($(B1.south)+(0.06,0)$);
\node[tiny, text=live, anchor=north] at ($(B0.south)!0.5!(B1.south)+(0,-0.21)$) {$\mathrm{sg}[h_1]$};
\node[tiny, text=live, anchor=north] at ($(B1.south)!0.5!(B2.south)+(0,-0.21)$) {$h_2$};
\node[tiny, text=live, fill=live!10, anchor=center] at ($(B1.south)!0.5!(B2.south)+(0,-0.66)$) {$\nabla\ell$};
\draw[live, line width=0.45pt] (\sb,{\yB-0.98}) -- ++(0,-0.05) -- ($(B2.east|-{0,\yB-1.03})$) -- ++(0,0.05);
\node[tiny, text=live, anchor=north] at ($(\sb,{\yB-1.03})!0.5!(B2.east|-{0,\yB-1.03})$) {Window, $W{=}2$};

\def\gh{ink2!22}
\foreach \yy/\a/\b/\c in {\yM/0/0/0, \yP/1/0/0, \yC/1/1/0, \yB/1/1/1} {
  \ifnum\a=1 \ictraj{\ka}{\yy}{gt}{maskf}\else\ictraj{\ka}{\yy}{\gh}{ink2!10}\fi
  \ifnum\b=1 \icself{\kb}{\yy}{ink2!85}{live}\else\icself{\kb}{\yy}{\gh}{\gh}\fi
  \ifnum\c=1 \icbptt{\kc}{\yy}{ink2!85}{live}\else\icbptt{\kc}{\yy}{\gh}{\gh}\fi}

\node[inner sep=0pt] at ({(\xe-0.08)/2},{\yI-0.72}) {\usebox\figonelegend};
\end{tikzpicture}}
\endgroup
    \caption{\textbf{From MDM training to \method.}
    Each row shows the inputs to the denoiser $f_\theta$ over consecutive steps; the bottom row corresponds to inference, where the policy $g$ commits the model's own samples, errors included.
    MDM trains on one independently masked $\mathbf{x}_t$.
    PU follows the trajectory of $g$ with ground-truth tokens, the carry passes the hidden state $h_j$ to the next step, and BPTT backpropagates the loss through $h_j$ within a window of $W$ steps.
    Right: the axes each row uses.}
\else
    \includegraphics[width=\linewidth]{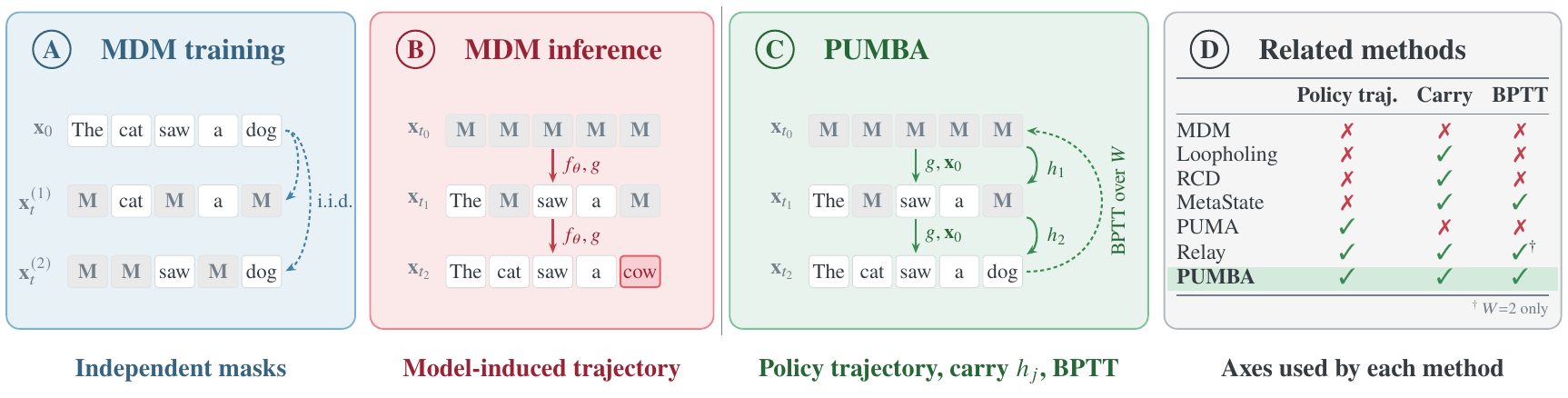}
    \caption{Train--inference mismatch and trajectory-aware training.
    \textbf{(A)} Standard MDM training masks $\mathbf{x}_0$ independently at each step.
    \textbf{(B)} At inference, the denoiser $f_\theta$ and policy $g$ induce a trajectory whose states differ from the training distribution.
    \textbf{(C)} \method trains along teacher-forced trajectories of $g$, passes a carry $h_j$ between steps, and backpropagates through $W$ steps.
    \textbf{(D)} Axes used by related methods (\Cref{sec:related,app:closest_work}).}
\fi
    \vspace{-15pt}
    \label{fig:overview}
\end{figure}

We unify these approaches in a single formulation of \emph{trajectory-aware} training, which we name \method (Progressive UnMasking with Backpropagation Across steps), where the denoiser is trained on consecutive steps of a trajectory rather than isolated sequences (\Cref{sec:exp_tinygsm}).
It spans three axes: the policy that builds each training trajectory; the \emph{carry}, \emph{i.e.} the information one step passes to the next; and the window $W$ of consecutive steps whose losses are optimized jointly by backpropagation through time (BPTT). Prior methods each address some of these axes.

To map this space, we first study a controlled setting where we train on TinyGSM \citep{liu2023tinygsm} and evaluate on GSM8K \citep{cobbe2021gsm8k}, analyzing how each axis contributes to reasoning performance (\Cref{sec:exp_tinygsm}).
First, we identify a \emph{local overfitting} phenomenon that prevents PU from revealing as few tokens per step in training as at inference (\Cref{sec:puma}). We show that even when training reveals several times more tokens per step, PU still reduces the train--inference mask discrepancy.
Second, we demonstrate that a continuous carry is the most effective way to cross the discrete commitment at each unmasking step (\Cref{sec:carry}).
Third, performance improves as $W$ grows, which we support theoretically.
Crucially, the three axes compound: each adds generative performance at every decoding budget. Together they match the performance of an autoregressive (AR) model of the same size trained on the same data, while decoding more than one token per step.
\looseness=-1

Building on these findings, we scale \method to the supervised fine-tuning (SFT) setting using LLaDA-8B-Base as the base model \citep{nie2025llada}, in both full-canvas and block diffusion formulations (\Cref{sec:exp_llada}).
To the best of our knowledge, this is the first trajectory-aware SFT at this scale (full tuning of all 8B parameters, 4096-token sequences, on a 1.8B-token general-purpose instruction dataset). 
\method suits SFT, which is data-constrained: further epochs soon yield little.
Guided by the controlled study and the failure modes of PU at this scale, we design a PUMBA instantiation that pushes the performance versus number of function evaluations (NFEs) Pareto frontier beyond standard SFT on reasoning and instruction following.
In full-canvas generation, 4000 steps of \method improve performance three times as much as doubling the SFT budget, in less training time.
At matched performance, \method needs up to $22\%$ fewer NFEs than an SFT run twice as long in full-canvas generation, and up to $26\%$ fewer in block diffusion than an MDM control trained as long.
\looseness=-1

\vspace{-5pt}
\section{Background}
\label{sec:background}
\vspace{-5pt}

\paragraph{Notation.}
Let $\mathcal{V}$ be a finite vocabulary containing a mask token $m \in \mathcal{V}$, and let $\mathbf{x} = (x^1,\ldots,x^L) \in \mathcal{V}^L$ be a sequence of length $L$ with masked positions $\mathcal{M}(\mathbf{x}) = \{i : x^i = m\}$.
We write $\Delta^{d}$ for the probability simplex in $\mathbb{R}^{d}$ and $\mathrm{sg}[\cdot]$ for stop-gradient.

\paragraph{Training of MDMs.}
The noising process of masked diffusion interpolates between a clean sequence $\mathbf{x}_0$ at level $t = 0$ and the fully masked sequence at $t = 1$.
Under the linear schedule, it replaces each token of $\mathbf{x}_0$ by $m$ independently with probability $t$ \citep{sahoo2024mdlm,shi2024simplified}.
An MDM learns a denoiser $f_\theta : \mathcal{V}^L \to (\Delta^{|\mathcal{V}|})^L$ that returns a distribution $f_\theta^i(\cdot \mid \mathbf{x}_t)$ over the clean token at each position $i$.
The denoiser infers $t$ from the masking ratio of $\mathbf{x}_t$, so it needs no time input \citep{gat2024discrete,sahoo2024mdlm,amin2026masking}.
Training maximizes an evidence lower bound on the log-likelihood that reduces to a weighted cross-entropy over the masked positions:
\begin{equation}
  \mathcal{L}_{\mathrm{MDM}}
  = \mathbb{E}_{\mathbf{x}_0,\, t,\, \mathbf{x}_t}
    \bigl[ \ell(\mathbf{x}_0, \mathbf{x}_t) \bigr],
  \qquad
  \ell(\mathbf{x}_0, \mathbf{x}_t)
  = \tfrac{1}{t} \textstyle\sum_{i \in \mathcal{M}(\mathbf{x}_t)}
      -\log f_\theta^i(x_0^i \mid \mathbf{x}_t).
  \label{eq:mdm}
\end{equation}
The factor $1/t$ follows from the bound and normalizes the sum over the $L \cdot t$ positions masked in expectation \citep{bethune2026design}.
Each training step draws $\mathbf{x}_0$ and $t \sim \mathcal{U}[0,1]$, masks $\mathbf{x}_0$ independently, and calls the denoiser once.

\vspace{-4pt}
\paragraph{Inference in MDMs.}
Generation starts from the fully masked sequence and reveals tokens over multiple steps.
We index steps by $j$ and write $\mathbf{x}_{t_j}$ for the sequence after step $j$, whose level is its masking ratio $t_j = |\mathcal{M}(\mathbf{x}_{t_j})| / L$.
Each step applies an \emph{unmasking policy} $g$ that selects $S = g(\mathbf{x}_{t_j}) \subseteq \mathcal{M}(\mathbf{x}_{t_j})$ and fills every $i \in S$ with a token drawn from $f_\theta^i(\cdot \mid \mathbf{x}_{t_j})$.
Revealing several positions at once trades exactness for speed by sampling from the product of their marginals rather than the joint distribution \citep{webb2026limitsconfidencediffusion}.
Choosing the most confident positions keeps this product closer to the joint distribution and improves samples at a fixed number of steps \citep{nie2025llada,wu2025fastdllm}.
The \emph{top-$u$} policy \citep{chang2022maskgit} reveals the $u$ masked positions with highest confidence $c_i = \max_{v \in \mathcal{V}} f_\theta^i(v \mid \mathbf{x}_{t_j})$.
We use this policy for the controlled experiments (\Cref{sec:exp_tinygsm}).
For larger MDMs, Fast-dLLM~\citep{wu2025fastdllm,wu2026fast} is an effective policy that reveals every position with $c_i \geq \tau$, or the single most confident one if none qualifies.
Combined with block diffusion decoding, which proceeds from left to right, block by block \citep{arriola2025block}, it preserves generation quality at far fewer denoiser calls, or \emph{number of function evaluations (NFEs)}, per sample.
We therefore adopt it as the inference policy for LLaDA in \Cref{sec:exp_llada}. See \Cref{app:background} for an extended version.

\section{Elucidating the Design Space of \method}
\label{sec:exp_tinygsm}

We map the design space of \method in a controlled setting following \citet{kim2026puma}: we train a 125M-parameter bidirectional transformer from scratch on TinyGSM~\citep{liu2023tinygsm} for 20 epochs and evaluate zero-shot on GSM8K~\citep{cobbe2021gsm8k} with the top-$u$ policy (\Cref{sec:background}).
All variants share the same architecture, data, batch size, and learning rate.

\subsection{Training--Inference Alignment via Progressive Unmasking}
\label{sec:puma}

Standard MDM training masks sequences independently of the model, whereas inference visits the sequences that the denoiser and the unmasking policy dictate.
Training and inference therefore encounter different distributions of masking patterns.
The patterns at inference time depend on the reveal count $u$.
In practice, the effective inference $u$ is small, since otherwise the product of marginals becomes a poor approximation of the joint distribution of the unmasked tokens, and generation quality degrades~\citep{webb2026limitsconfidencediffusion}.
For example, on TinyGSM, the accuracy of every variant we train peaks at $u \in \{2,3,4\}$ and roughly halves by $u{=}16$ (\Cref{fig:tinygsm_acc}, \textit{left}).
Similar trends hold for larger models~\citep{kang2025parallelbench}.
The regime of interest is therefore a rather small $u$.
Ideally, training would visit the same sequences as inference does at this $u$.

Progressive unmasking (PU) \citep{kim2026puma} aims at exactly this: instead of the independent corruptions of \Cref{eq:mdm}, it trains on the trajectory that the inference policy itself produces.
For each sample, training starts from the fully masked sequence.
At each optimizer step, we compute the loss on the current sequence and then use the inference-time unmasking policy to reveal more positions.
The resulting sequence is used in the next step, and this continues until the sample is fully revealed, after which a new sample is used.
Many consecutive training examples thus come from the same trajectory rather than from multiple data samples with independent mask patterns.
Each step still requires one denoiser call, and trajectories can be batched and progressively unmasked in parallel (\Cref{fig:local_overfitting}, \textit{left}).

\vspace{-0.25cm}
\paragraph{Formulation and training objective.}
We use the top-$u$ policy (\Cref{sec:background}), which matches the TinyGSM decoding policy: at each step, we reveal the $u$ most confident masked positions.
We use the MDM loss of \Cref{eq:mdm} unchanged, including the $1/t$ factor.
Here, $t$ is the realized masking ratio, so the loss is normalized by the number of masked tokens.
\citet{kim2026puma} show that, although PU sequences are not draws from the original noising process, this construction preserves the unique minimizer of the MDM objective.

\vspace{-0.25cm}
\paragraph{Revealed positions are teacher-forced.}
For stability, a trajectory advances by committing the ground-truth token (hence teacher-forced) at each selected position rather than using the predictions of the denoiser.
The mask pattern is therefore model-dependent, but the clean tokens are not.
Under the policies we consider, a committed position is never revisited, so a sampled token (student forcing; \citealp{bengio2015scheduled}) would introduce an error that could not be corrected later.
The loss would still ask for the ground-truth $\mathbf{x}_0$ at the remaining masked positions, so these targets could contradict the visible context.
We investigate student forcing and its failure cases in PU in \Cref{app:student_forcing}.

\begin{figure}[t]
\vspace{-0.25cm}
  \centering
  \includegraphics{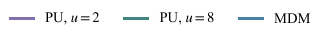}\\
  \includegraphics{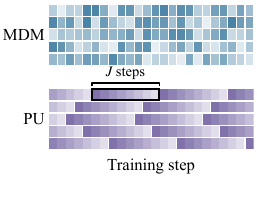}\hfill
  \includegraphics{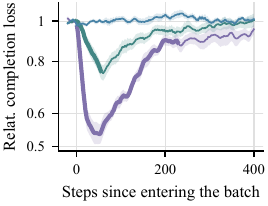}\hfill
  \includegraphics{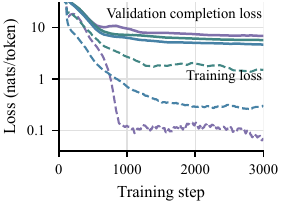}%
  \caption{Local overfitting of PU at small training $u$ on TinyGSM.
  \textit{Left}: batch rows over steps (shade: share of masked tokens); MDM draws a fresh sequence and mask per row at every step, PU keeps each sequence for $J \approx L/u$ steps.
  \textit{Middle}: completion loss of a sequence inserted into the batch, relative to the validation set (thick while in the batch, thin after).
  \textit{Right}: training and validation completion losses; PU and MDM training losses use different masks and are not comparable.}
  \vspace{-10pt}
  \label{fig:local_overfitting}
\end{figure}

\vspace{-0.15cm}
\paragraph{Local overfitting at small $\bm{u}$.}
At small $u$, the targeted inference regime, PU breaks two assumptions of stochastic optimization: each sequence is reused $J$ times instead of once, and consecutive batches share all but a fraction $1/J$ of their sequences instead of being independent.
To see how the model responds, we launch short TinyGSM training runs at fixed $u$, where every trajectory lasts exactly $J$ steps (\Cref{app:lo_setup}), and track the \emph{completion loss} of held-out sequences as they enter and leave the batch, \emph{i.e.}, the loss on the second half of the answer given the prompt and the first half.
We observe \emph{local overfitting} (\Cref{fig:local_overfitting}, \textit{middle}): at $u{=}2$, the completion loss of a sequence falls sharply within about $50$ steps of entering the batch, and even rises again before the sequence leaves it: the model soon fits the sequence on the masks of its trajectory, after which the sequence provides no gradient for its revealed tokens, which are forgotten (\Cref{app:lo_rebound}).
The drop shrinks as $u$ grows and, as expected, vanishes with MDM.
As in regular overfitting, the validation loss moves opposite to the training loss (\Cref{fig:local_overfitting}, \textit{right}); in these runs, PU trails MDM on validation completion loss for every $u \leq 8$ (\Cref{tab:lo_sweep}), so local overfitting prevents training PU at the small $u$ used at inference (see \Cref{app:local_overfitting} for an in-depth study).

\vspace{-0.15cm}
\ifapplebuild\def\dmasklines{10}\def\dmaskfile{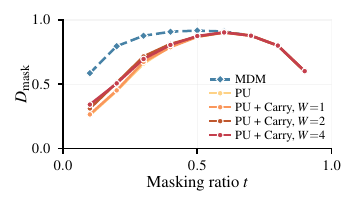}\else\def\dmasklines{13}\def\dmaskfile{figures/2026-09-25_tinygsm_dmask.pdf}\fi %
\newcommand{\dmaskfigure}{%
\begin{wrapfigure}[\dmasklines]{r}{0.35\linewidth}
\vspace{-0.7cm}
  \centering
  \includegraphics[width=\linewidth]{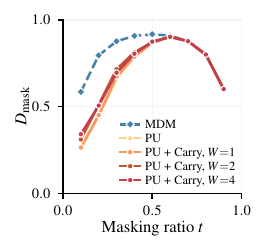}
  \vspace{-12pt}
  \captionsetup{justification=raggedright,singlelinecheck=false}
  \caption{Mask discrepancy at $u{=}2$ across $t$; lower is better.}
  \label{fig:tinygsm_dmask}
  \vspace{0.5cm}
\end{wrapfigure}
}

\paragraph{What about a larger $\bm{u}$?}
Since local overfitting fades as $u$ grows (\Cref{tab:lo_sweep}), training at a larger $u$ than we decode with may keep part of the alignment of PU without overfitting locally.
The original PUMA recipe does this by committing a random number of positions per step, which is annealed over training, and additionally committing every position whose confidence exceeds $0.9$ (\Cref{app:pu_details}).
Its effective training $u$, the number of generated positions divided by the number of forward passes per sequence, is about $133$ at the start of training and $67$ at the end, against $u{=}2$ at inference.
We use this recipe for the runs in \Cref{fig:tinygsm_acc,fig:tinygsm_dmask}.

\ifapplebuild\else\dmaskfigure\fi

\paragraph{Alignment under a mismatched $\bm{u}$.}
To test whether training at a larger $u$ than inference still reduces the train--inference mismatch, we compare the masking patterns seen in training under the PUMA recipe with those met during generation.

\begin{definition}[Mask discrepancy]
\label{def:dmask}
For a sample $\mathbf{x}_0$ and a masking ratio $t$, let $P_{\mathrm{train}}$ and $P_{\mathrm{inf}}$ be the distributions of masks $\mathbf{m} \in \{0,1\}^L$ at ratio $t$ produced by the training procedure and by the sampler, respectively.
The mask discrepancy is \looseness=-1
\[
  \begin{gathered}
  D_{\mathrm{mask}}(t) = \mathbb{E}_{\mathbf{x}_0}\bigl[\mathrm{MMD}_k^2(P_{\mathrm{train}}, P_{\mathrm{inf}})\bigr],
  \\
  k(\mathbf{m}, \mathbf{m}') = \exp\bigl(-d_H(\mathbf{m}, \mathbf{m}')/\sigma\bigr),
  \end{gathered}
\]
where $\mathrm{MMD}$ is maximum mean discrepancy~\citep{gretton2012kernel} and $d_H$ the Hamming distance.
\end{definition}
\ifapplebuild\else\newpage\fi %
\ifapplebuild\dmaskfigure\fi
The kernel $k$ is strictly positive definite on $\{0,1\}^L$, so $D_{\mathrm{mask}}(t) = 0$ exactly when training and inference produce the same masks at ratio $t$.
Larger values mean that inference visits mask patterns rarely seen in training.
$D_{\mathrm{mask}}$ depends only on which positions are masked, not on the value of the token or the carry, so it isolates the masking distribution shift.
\Cref{app:dmask} gives the estimator we use in practice.
Thus, although local overfitting forces a much larger training $u$, the PUMA recipe still reduces $D_{\mathrm{mask}}$ at $u{=}2$ well below MDM (\Cref{fig:tinygsm_dmask}).
The information propagation techniques of \Cref{sec:carry} do not undo the alignment of PU, inheriting this benefit.\looseness=-1

\subsection{Propagating Information Across Denoising Steps}
\label{sec:carry}

In PU, each step influences the input seen by the next step, but the objective trains each step only to make its own predictions, while the next step receives only the committed tokens.
We argue that a denoising step should not only predict its own tokens, but also produce an output that enables subsequent steps to operate effectively, especially since committed positions are never revisited.
Training for this requires a channel that passes information from one step to the next, and a gradient that flows back through it.
The channel can be a continuous \emph{carry} that bypasses the discrete commitment \citep{jo2026loopholing,hu2026rcd}, or a gradient estimator through the committed tokens themselves, which we call a \emph{discrete bridge}~\citep{jang2016categorical,maddison2016concrete,bengio2013estimating,williams1992simple}.
We first describe the carry and how we train it, then compare it with the discrete bridges.

\paragraph{Passing information forward.}
The carry can be the hidden state of the denoiser \citep{jo2026loopholing} or the probability-weighted input embedding of each prediction \citep{hu2026rcd}.
We adopt the carry design from loopholing \citep{jo2026loopholing} for its simplicity and compactness, and do not ablate other designs.
The denoiser thus also takes the previous hidden state as input and returns its own:
\begin{equation}
  \bigl(f_\theta(\cdot \mid \mathbf{x}_{t_j}, h_j),\ h_{j+1}\bigr)
  = f_{\mathrm{carry}}(\mathbf{x}_{t_j}, h_j).
  \label{eq:carry}
\end{equation}
The carry $h_j \in \mathbb{R}^{L \times d}$ is the last hidden state before the output projection, one vector of width $d \ll |\mathcal{V}|$ per position (\Cref{app:experimental_setup}).
Each trajectory starts from a zero carry, and each inference step passes its carry to the next.
The carry thus provides a weak form of student forcing: it is produced by the model at the previous step and reflects its imperfect predictions, while the visible tokens, and hence the targets, stay consistent with $\mathbf{x}_0$.

\newcommand{\bpttablationtable}{%
\begin{wraptable}[10]{r}{0.36\linewidth}
  \vspace{-1.2em}
  \centering
  \caption{Peak GSM8K accuracy (\%) for different BPTT techniques at $W{=}2$ on TinyGSM.}
  \label{tab:tinygsm_chain_ablation}
  \small
  \renewcommand{\arraystretch}{1.08}
  \begin{tabular}{@{}lc@{}}
    \toprule
    PU + REINFORCE           & $41.2$ \\
    PU + Gumbel-softmax      & $42.4$ \\
    PU + Straight-through    & $44.7$ \\
    PU + Carry        & $\mathbf{51.2}$ \\
    \bottomrule
  \end{tabular}
  \vspace{-1em}
\end{wraptable}
}
\ifapplebuild\bpttablationtable\fi

\paragraph{Passing gradient back.}
When each update trains on a single step ($W{=}1$), the model learns to use the carry it receives but not to produce a useful one, since no term of the objective depends on the carry's effect on the next step.
We therefore compute the loss on $W$ consecutive steps of the same trajectory in one update and backpropagate through the carries between them.
The loss at each step then reaches the preceding steps of the window, so the carry is also optimized for the predictions that follow.
As a trajectory usually exceeds $W$ steps, we split it into consecutive windows, each with its own update, and gradients do not cross between windows.
The resulting procedure (\Cref{alg:train}) is truncated backpropagation through time (BPTT).

\ifapplebuild\else\bpttablationtable\fi
\paragraph{Comparing the channels.}
To compare the carry with the discrete bridges, we fix the window at $W{=}2$ and replace the carry with a relaxed categorical (Gumbel-softmax) \citep{jang2016categorical,maddison2016concrete}, a straight-through estimator \citep{bengio2013estimating}, and a score-function (REINFORCE) estimator \citep{williams1992simple} with a leave-one-out baseline \citep{kool2019buy}.
To remove confounders and keep the training distribution stationary, all runs of this comparison keep the PUMA recipe at its initial setting instead of annealing it, under the same budget.

The carry outperforms all discrete bridges (\Cref{tab:tinygsm_chain_ablation}; training curves in \Cref{app:chain_ablation}), so we adopt it as the default strategy.

\paragraph{Implementation.}
The carry passes through a zero-initialized LayerNorm~\citep{ba2016layer} and is added to the token embeddings, so training starts from the unmodified denoiser.
MDM training has no predecessor to take a carry from, so loopholing obtains one by self-conditioning \citep{chen2022analog,jabri2022rin}, with an extra forward pass that adds about $30\%$ training time \citep{jo2026loopholing}.
Under PU, the predecessor exists, so we simply cache its carry, one $L \times d$ tensor per trajectory.
Each update averages the losses of its window, so compute and memory grow linearly in $W$ while the update scale does not.
\Cref{alg:train} in \Cref{app:pumba_alg} summarizes the resulting algorithm, \method.

\paragraph{Why a longer window helps.}
The comparison above fixes $W{=}2$ without exploring how large this window should be.
The following result bounds how close the learned sampler gets to the best achievable one as a function of $W$ (proofs in \Cref{app:theory}).

\begin{proposition}[Informal]
\label{prop:window}
Assume teacher-forced commits and a reveal rule that matches inference.
\begin{enumerate}[label=(\roman*), nosep, leftmargin=*]
  \item No gradient flows through committed tokens. Without a carry (or $W{=}1$), a step's loss cannot reward earlier steps; with BPTT, it reaches at most $W{-}1$ of the earlier steps of its window.
  \item Assume the carry is contractive with factor $\gamma<1$. Let $\theta_W$ be a point at which the truncated gradient vanishes and the loss satisfies a Polyak--{\L}ojasiewicz condition~\citep{karimi2016linear}. Then
  \[
    \mathrm{KL}(p^\star \Vert p_{\theta_W}) - \inf_{\theta} \mathrm{KL}(p^\star \Vert p_{\theta}) \le C \Bigl(\frac{J}{W} - 1\Bigr)^2,
  \]
  where $J$ is the number of decoding steps and $C$ does not depend on $W$.
\end{enumerate}
\end{proposition}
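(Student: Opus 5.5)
Part (i) is a statement about the computation graph, and I would prove it by writing the unrolled trajectory explicitly. Under teacher forcing, the state at step $j+1$ is $\mathbf{x}_{t_{j+1}} = \mathbf{x}_{t_j}$ with the set $S_j = g(\mathbf{x}_{t_j})$ filled by the ground-truth tokens $x_0^i$. The set $S_j$ is chosen by an argmax or threshold on confidences, so it is piecewise constant in $\theta$, and the filled values do not depend on $\theta$ at all. Hence $\partial \mathbf{x}_{t_{j+1}}/\partial\theta = 0$ almost everywhere, and no gradient flows through committed tokens.

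The only differentiable path from step $k$ to a later step $j$ is then the chain of carries $h_{k+1}\to\cdots\to h_j$. I would take the three cases in turn:
\begin{itemize}
  \item With no carry, this chain is empty, so $\partial \ell_j/\partial(\text{step }k)=0$ for $k<j$.
  \item With $W=1$, the input carry is $\mathrm{sg}[h_j]$, so the chain is cut at every step.
  \item With truncated BPTT, the window boundary applies the stop-gradient. A loss at position $r$ in its window therefore reaches only the $r-1\le W-1$ earlier steps of that window.
\end{itemize}
This part is bookkeeping on the chain rule.

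For part (ii), I would compare the truncated gradient $\hat\nabla_W L(\theta)$ with the full-trajectory gradient $\nabla L(\theta)$. By the chain rule, $\nabla L$ is $\nabla\,\mathrm{KL}(p^\star\Vert p_\theta)$ up to a $\theta$-independent entropy term, because under matched reveals the sampler's log-likelihood factorises over steps. Write $\nabla L = \sum_j\sum_{k\le j} G_{j,k}$, where $G_{j,k}$ is the contribution of loss $\ell_j$ through step $k$'s parameters via $\partial h_j/\partial h_{k+1}$. Contractivity gives $\|\partial h_j/\partial h_{k+1}\|\le\gamma^{j-k-1}$, and with bounded per-step Jacobians this yields $\|G_{j,k}\|\le B\gamma^{j-k}$. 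Truncation drops exactly the pairs $(j,k)$ lying in different windows.

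The step I expect to be the main obstacle is obtaining the stated rate in terms of $J/W-1$ rather than a geometric rate in $W$. My plan is to count dropped pairs crudely, with each dropped pair costing at most a constant $B'$ after averaging over the $1/J$ normalisation. For each of the $J/W$ windows, the dropped cross-window terms aggregate, through the geometric sum $\sum\gamma^{m}\le 1/(1-\gamma)$, to at most a constant per window boundary. There are $J/W-1$ boundaries, which gives
\[
\|\nabla L(\theta)-\hat\nabla_W L(\theta)\|\le \frac{B}{1-\gamma}\Bigl(\frac{J}{W}-1\Bigr)=:\varepsilon_W .
\]
This bound vanishes at $W=J$, as it should, since then there is a single window and the truncated gradient is exact.

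Finally, at $\theta_W$ we have $\hat\nabla_W L(\theta_W)=0$, so $\|\nabla L(\theta_W)\|\le\varepsilon_W$. The Polyak--{\L}ojasiewicz condition $\tfrac12\|\nabla L\|^2\ge\mu\,(L-\inf L)$ then gives
\[
L(\theta_W)-\inf_\theta L\le \frac{\varepsilon_W^2}{2\mu}.
\]
Because $L$ differs from the KL only by a constant, this is the claimed bound with $C=B^2/\bigl(2\mu(1-\gamma)^2\bigr)$, which does not depend on $W$.

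The delicate points I would check are two. First, the step-indexed decomposition must be exact, so that matched reveals make the sequence of reveal sets identical under $p^\star$ and $p_\theta$. Second, the constants must be uniform near $\theta_W$, which I would assume as part of the formal hypotheses.
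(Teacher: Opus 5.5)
This is essentially the paper's proof: part (i) follows from the teacher-forced commits being $\theta$-independent and the selected sets being locally constant at regular $\theta$, and part (ii) from the likelihood identity $\widetilde{\mathcal{L}}(\theta) = H(p^\star) + \mathrm{KL}(p^\star \Vert p_\theta)$, a lag-by-lag contractive bound on the carry paths cut at window boundaries summed over the $J/W-1$ non-initial windows, and the PL inequality at $\theta_W$. Two small fixes: bounding the dropped terms of one window needs a double geometric sum (over lags beyond the in-window position $p$, then over $p$), which gives $B\gamma/(1-\gamma)^2$ per window in your normalization (the paper's $C_LC_h/(1-\gamma)^2$) rather than $B/(1-\gamma)$, changing $C$ but not its independence of $W$; and the identity with the KL holds only for the unweighted loss on the positions each step reveals (the paper's explicit hypothesis $\mathcal{L}_\infty = \widetilde{\mathcal{L}}$), not for the $1/t$-weighted MDM loss over all masked positions, so this must be assumed rather than derived.
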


The first part of \Cref{prop:window} explains why discrete gradient estimators do not match the carry (\Cref{tab:tinygsm_chain_ablation}), while the second part shows that the gap to the best achievable sampler shrinks as the window grows, and vanishes when the window covers the whole trajectory.
If the carry-augmented sampler can approximate $p^\star$ arbitrarily well, a large enough window then yields a sampler closer to $p^\star$ than any AR model with nonzero error (\Cref{cor:ar}).
Part (i) holds along any teacher-forced trajectory; part (ii) holds when training and inference share the reveal rule, and \Cref{fig:tinygsm_dmask} and \Cref{tab:tinygsm_compose} test empirically whether its prediction carries over to the smaller inference $u$.
We discuss the scope of the bound in more detail in \Cref{app:theory_scope}.

\subsection{Composing the Design Choices}
\label{sec:design_compose}

\begin{wraptable}[13]{r}{0.36\linewidth}
  \vspace{-1.5em}
  \centering
  \caption{Best GSM8K accuracy (\%) at $u{=}2$; MDM and PU: mean $\pm$ s.d.\ over three seeds.}
  \label{tab:tinygsm_compose}
  \vspace{-1.em}
  \small
  \renewcommand{\arraystretch}{0.95}
  \begin{tabular}{@{}lc@{}}
    \toprule
    AR                         & $\underline{55.3}$ \\
    \midrule
    MDM                        & $34.8 \pm 2.5$ \\
    MDM $\times 2$ epochs      & $42.6$ \\
    MDM $\times 8$ epochs      & $43.6$ \\
    \midrule
    PU                         & $40.2 \pm 0.3$ \\
    PU + Carry, $W{=}1$        & $44.4$ \\
    \phantom{PU + Carry,} $W{=}2$ & $48.7$ \\
    \phantom{PU + Carry,} $W{=}4$ & $52.6$ \\
    \phantom{PU + Carry,} $W{=}8$ & $\mathbf{55.6}$ \\
    \bottomrule
  \end{tabular}
  \vspace{-1em}
\end{wraptable}

We now combine the choices of \Cref{sec:puma,sec:carry} and compare them with MDM and AR baselines.
To increase alignment, we train a standard MDM, PU, PU with the carry ($W{=}1$), and PU with the carry trained by BPTT over $W \in \{2,4,8\}$ steps.
As controls, we train MDM for $2\times$ and $8\times$ as many epochs, which matches the number of denoiser forward and backward passes of BPTT at $W{=}2$ and $W{=}8$, since each BPTT update runs $W$ of them.
We also train an AR model of the same size on the same data.

The three design choices compose: each one adds accuracy on top of the previous ones, at $u{=}2$ (\Cref{tab:tinygsm_compose}) and at every other decoding budget we evaluate (\Cref{fig:tinygsm_acc}).
At a matched number of denoiser passes, MDM trained $2\times$ or $8\times$ longer remains substantially below the corresponding BPTT at $W{=}2$ and $W{=}8$, respectively.
Accuracy also keeps increasing with the window, as \Cref{prop:window} predicts.
Together, the three choices reach the best AR checkpoint while revealing more than one token per step.
Accuracy during training and pass@$k$ are in \Cref{app:tinygsm_results}.

\section{A \method Recipe for Large-Scale SFT}
\label{sec:exp_llada}

Following \Cref{sec:exp_tinygsm}, our SFT recipe for LLaDA-8B-Base~\citep{nie2025llada} trains on inference-policy trajectories at a larger $u$ than at inference, avoiding local overfitting while keeping training masks aligned, and trains the carry by BPTT over a window $W$.

\paragraph{Progressive unmasking with variable lengths.}
TinyGSM samples have similar lengths, so the PUMA recipe, which reveals a fixed length fraction per step, unmasks a similar number of tokens for all of them.
SFT responses instead range from 14 tokens at the 10th percentile to 778 at the 90th (\Cref{app:pu_details}). We fix the minimum number of tokens per step: every trajectory starts fully masked, and each step reveals the $u$ most confident positions, plus every other position with confidence above $\tau{=}0.9$. \looseness=-1

\paragraph{Setup and evaluation.}
To separate the effect due to \method from the model simply learning the data distribution, we use two stages: standard SFT of LLaDA-8B-Base, during which the model captures most of the target distribution, followed by a \emph{Post-SFT} stage where we apply \method with different parameters and the corresponding baselines. Since the original LLaDA SFT dataset is not publicly available, we use the SFT dataset from the OLMo~3 post-training pipeline~\citep{olmo2025olmo}. We follow typical SFT practice: the prompt is fully revealed and only the response is maskable~\citep{nie2025llada,ye2025dream}.
Additional details are present in \Cref{app:experimental_setup}.

We evaluate along two axes: the model's learned ability to follow instructions, and potential regression on other capabilities.
For instruction following, we use IFEval~\citep{zhou2023ifeval}, which measures the central capability targeted by our general SFT. To check that other capabilities are preserved, we additionally evaluate on GSM8K~\citep{cobbe2021gsm8k} (math) and MBPP~\citep{austin2021mbpp} (code).
We refer to the mean of the three benchmarks as the \emph{score}.
We decode with Fast-dLLM~\citep{wu2025fastdllm} in blocks of size 32 and report each model's score--NFE curve over its threshold $\tau \in \{0.7, 0.8, 0.9, 1\}$, from one token per step ($\tau{=}1$) to more in parallel.
We consider two settings: \emph{full-canvas} and \emph{block diffusion}.

\subsection{Full-Canvas Generation}
\label{sec:exp_full_canvas}

In the full-canvas setting, the model attends to the full sequence bidirectionally.
We SFT the model with the vanilla MDM objective for 1.5 epochs ($\approx$ 25k training steps) at a global batch size of 128 sequences, over which the model sees 2.71\,B tokens.
From this checkpoint, we enter the Post-SFT stage and train for an additional 4000 steps.
We compare PU and PU with the carry trained by BPTT over a window $W$, where $W{=}1$ passes the carry without gradient, and sweep the training $u$ and $W$.
As baselines, we continue the 1.5-epoch checkpoint with MDM for the same 4000 steps (Post-SFT MDM), which rules out gains from the additional updates alone, and double the SFT budget to 3 epochs, which shows that MDM training has nearly saturated on this data.
We report results for $u \in \{32, 64\}$ with remaining values present in \Cref{app:llada_results}.

\begin{figure}[t]
  \centering
  \vspace{-0.75cm}
  \includegraphics[width=\linewidth]{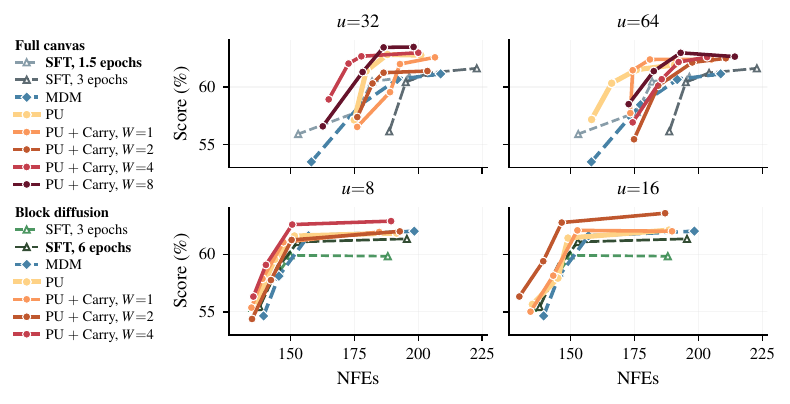}
  \vspace{\ifapplebuild-14pt\else-4pt\fi}
  \caption{Post-SFT of LLaDA-8B for full canvas (\emph{top row}) and block diffusion (\emph{bottom row}).
  In each legend, we set in bold the SFT checkpoint the Post-SFT runs of that setting start from.}
  \label{fig:llada_postsft_grid}
\end{figure}

Neither baseline improves much: Post-SFT MDM stays close to its starting checkpoint, and the 3-epoch SFT, despite twice the training, improves only slightly and only at higher NFEs, suggesting that MDM training has nearly saturated on this data.
In contrast, \method improves the score while reducing NFEs, with a fraction of the SFT step budget.
Its best configuration ($u{=}32$, $W{=}8$) gains $2.6$ points, over three times the gain of doubling the SFT budget, and at matched score \method needs up to $22\%$ fewer NFEs than the 3-epoch SFT (\Cref{fig:llada_postsft_grid}).

\ifapplebuild\else\vspace{-1.5cm}\fi %
\subsection{Block Diffusion}
\label{sec:exp_block_diff}

In the \emph{block diffusion} setting, the model generates the response block by block, attending block-causally to previous blocks and bidirectionally within the current one.
This attention pattern departs from LLaDA's fully bidirectional pre-training, and we find that it requires a longer SFT to reach competitive performance.
We therefore run SFT for 6 epochs, using the vectorized implementation of \citet{arriola2025block}, which trains all blocks of a sequence in a single forward pass, with blocks of size 32 to match the Fast-dLLM inference policy~\citep{wu2025fastdllm}.

Post-SFT starts from this checkpoint and, as in the full-canvas setting, runs for 4000 steps for each composition of PU and BPTT.
In the full-canvas setup, the reference for longer MDM training was a run with twice the SFT budget, which here would require 12 epochs.
We instead use the 3-epoch checkpoint of the same run, so that the step from 3 to 6 epochs measures what doubling SFT yields.
Since blocks have a fixed size of 32, $u$ fixes the number of steps per block, which upper-bounds the window: e.g., $u{=}16$ implies $W \leq 2$.
We report results for $u\in\{8,16\}$ in \Cref{fig:llada_postsft_grid}; the remaining values are in \Cref{app:llada_results}.
Consistent with the full-canvas setting, \method improves over the MDM baselines.
PU alone and the carry ($W{=}1$) bring modest gains, but BPTT with wider windows yields a clear improvement.
The best configuration in \Cref{fig:llada_postsft_grid} ($u{=}16$, $W{=}2$) gains $2.2$ points in 4000 steps, more than the 50k SFT steps from 3 to 6 epochs, and at matched score it needs $26\%$ fewer NFEs than the Post-SFT MDM control.
The \method variants also consistently reduce NFEs, a trend not observed with progressively longer SFT (see \Cref{tab:llada_postsft_block} for best scores per variant).

\ifapplebuild\else\vspace{-0.5cm}\fi %
\subsection{Failure Modes at Small Training \texorpdfstring{$u$}{u}}
\label{subsec:smallu}

\paragraph{Local overfitting at scale.} Random masking acts as implicit data augmentation, which favors MDM over AR models when data is repeated~\citep{prabhudesai2025diffusion}. However, these masks are drawn independently of the unmasking policy, so most orderings seen in training never occur at inference. PU concentrates training on the orderings the inference policy produces, yielding a more targeted and data-efficient signal. At moderate $u$ and matched compute, PU-based variants tend to outperform the MDM baselines. This advantage, however, holds only while $u$ is large enough: the smaller $u$, the longer each trajectory, and the fewer distinct samples the model sees in a fixed number of steps. \Cref{fig:llada_diversity} measures this cost directly: at full-canvas $u{=}1$ a run sees only $0.6\%$ of samples visited by MDM, rising to $26\%$ at $u{=}128$. Block diffusion is less exposed, reaching $100\%$ by $u{=}32$, since the parallel-block implementation stratifies unmasking: $u{=}1$ reveals one token \emph{per block} rather than one in the entire sequence. As $u$ decreases, fewer distinct samples are seen and each stays longer in the batch (\Cref{fig:llada_diversity_steps}), until the model overfits locally and training collapses (\Cref{fig:llada_collapse}). At the other extreme, PU at $u{=}128$ only matches the Post-SFT MDM control: each step reveals a large part of the response, so the trajectory reduces to a few coarse steps.
The best training $u$ needs to be small enough to align with inference and large enough to allow the model to learn. \looseness=-1
\vspace{-3pt}

\begin{figure}[t]
  \centering
  \vspace{-0.5cm}
  \begin{subfigure}[b]{0.3904\linewidth}
    \centering
    \phantomsubcaption
    \includegraphics[width=\linewidth]{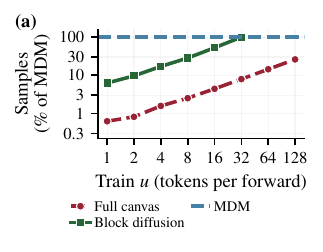}
    \label{fig:llada_diversity}
  \end{subfigure}
  \hfill
  \begin{subfigure}[b]{0.6011\linewidth}
    \centering
    \phantomsubcaption
    \includegraphics[width=\linewidth]{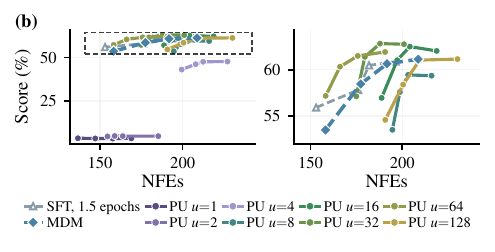}
    \label{fig:llada_collapse}
  \end{subfigure}
  \vspace{\ifapplebuild-36pt\else-28pt\fi}
  \caption{Trade-off between train--inference alignment and sample diversity.
  (\subref{fig:llada_diversity})~Sample diversity for PU with varying $u$, measured as the number of distinct training samples visited, as a percentage of the Post-SFT MDM control's.
  (\subref{fig:llada_collapse})~Performance as a function of NFEs for the full-canvas setting, at full extent (left) and zoomed into the dashed box (right).}
  \label{fig:llada_diversity_collapse}
\end{figure}

\section{Related Work}
\label{sec:related}

\paragraph{Masked diffusion language models.}
Diffusion language models operate on continuous representations \citep{li2022diffusionlm,dieleman2022continuous} or discrete tokens \citep{hoogeboom2021argmax,austin2021d3pm,campbell2022continuous,lou2024sedd}.
Masked diffusion simplifies the absorbing-state formulation \citep{shi2024simplified,sahoo2024mdlm} and scales to large language models through autoregressive conversion or training from scratch \citep{gong2025diffullama,ye2025dream,nie2025llada}.
Block Diffusion is a particularly effective diffusion formulation for language modeling, combining autoregressive generation across blocks with parallel denoising within them \citep{arriola2025block}.
We study how to post-train these models for the states their decoders induce.

\paragraph{Structured inference.}
MDM inference policies choose both the reveal order and the degree of parallelism.
Common criteria include confidence \citep{ghazvininejad2019maskpredict,chang2022maskgit,wu2025fastdllm}, probability margins \citep{kim2025train}, and predictive entropy \citep{ben2026accelerated}; others use explicit planners \citep{liu2025ddpd,peng2025pathplanning} or learned unmasking policies \citep{jazbec2025learning}.
These policies induce structured trajectories whose intermediate states differ from the random corruptions used in conventional training, motivating our focus on trajectory alignment.\looseness=-1

\paragraph{Train--inference alignment.}
Training on expert states while generating from model predictions is a longstanding challenge \citep{ross2011dagger,bengio2015scheduled}; in MDMs, the shift concerns both revealed positions and token content.
PUMA trains on progressive teacher-forced unmasking trajectories \citep{kim2026puma}, while PAPL reweights the loss for the decoding planner \citep{peng2026papl}.
Other approaches leverage task rewards along denoising trajectories \citep{he2025mdpo,wang2025tracerl} or supervise model-induced states through on-policy distillation \citep{ren2026tracedistill,su2026onpolicy,xu2026temporal}.
Relatedly, SDTT distills many-step denoising into fewer sampling steps \citep{deschenaux2025beyond}.
We investigate alignment directly within supervised instruction fine-tuning, without task-specific reward optimization or distillation.

\paragraph{State propagation across denoising steps.}
Loopholing preserves latent information through self-conditioning \citep{jo2026loopholing}.
RCD and DiffusionGemma instead feed back the probability-weighted input embedding of each prediction \citep{hu2026rcd,team2026diffusiongemma}.
MetaState adds persistent working memory to a frozen backbone \citep{xia2026metastate}.
Closely related to \method, Relay passes differentiable per-token representations between steps and trains them with truncated BPTT \citep{rozonoyer2026relay}, but uses a two-step unroll ($W{=}2$ in our notation) in all its experiments.
Our study combines trajectory-aligned supervised post-training with continuous state propagation, examines each component's contribution, and characterizes the trade-off between alignment and learnability.
\Cref{app:closest_work} details the comparison with the closest work.

\section{Conclusion}
\label{sec:conclusion}

\vspace{-5pt}
We improve masked diffusion training by aligning it with the trajectories encountered during generation. Building on progressive unmasking methods, we cache the denoiser's hidden state across steps and use truncated BPTT to train each step for its contribution to subsequent predictions. We study the components of the resulting method, \method, on TinyGSM and show we can scale \method to general-purpose instruction-following SFT on LLaDA-8B, improving the performance--NFE frontier under both full-canvas and block diffusion. In particular, the Post-SFT recipe we employ is practical: it pushes a model beyond what additional SFT allows, and the resulting reduction in denoising steps amortizes this extra training stage, which is short, since a small number of training steps suffices to realize the gains of \method. Our findings highlight the need to balance trajectory alignment with the diversity required for effective learning.

\vspace{-5pt}
\paragraph{Limitations and Future Work.} We identify local overfitting at small reveal counts, caused by reusing each training sequence over many consecutive steps, and ways to mitigate it; turning these findings into training procedures that sustain closer inference alignment across model scales, and across decoding policies beyond the confidence-based ones we study, remains open. Extending the carry's weak student forcing to sampled tokens did not improve on teacher forcing in preliminary experiments (\Cref{app:student_forcing}), since irreversible wrong commitments leave the context inconsistent with the target; remasking could address this. Finally, we use a minimal carry to isolate the benefits of trajectory alignment and BPTT; richer carries, such as the fixed-size working memory of MetaState~\citep{xia2026metastate}, and BPTT-specific gradient checkpointing, selective gradient propagation, or adaptive windows could improve the trade-off between training cost, memory, and credit-assignment horizon.\looseness=-1

\section*{Acknowledgments}
We thank Marco Cuturi, Oscar Davis, Eleonora Gualdoni, Michael Klein, Tatiana Likhomanenko, Barry Theobald, and Russ Webb for their helpful feedback and critical discussions throughout the process of writing this paper; Okan Akalin, Brian Gamp, Denise Hui, Li Li, Cindy Liu, Evan Samanas, Guillaume Seguin, and the wider Apple infrastructure team for assistance with developing scalable, fault-tolerant code.
Names are in alphabetical order by last name within group.

\bibliography{iclr2027_conference}
\bibliographystyle{iclr2027_conference}

\newpage
\appendix
\raggedbottom %
\crefalias{section}{appsec}
\crefalias{subsection}{appsec}
\crefalias{subsubsection}{appsec}

\section*{Appendix Contents}
\startcontents[appendix]
{\ifapplebuild\sffamily\fi %
\printcontents[appendix]{}{1}{\setcounter{tocdepth}{2}}}
\newpage

\section{Extended background}
\label{app:background}

This section gives the full version of \Cref{sec:background}.

\paragraph{Notation.}
Let $\mathcal{V}$ be a finite vocabulary containing a mask token $m \in \mathcal{V}$, and let $\mathbf{x} = (x^1,\ldots,x^L) \in \mathcal{V}^L$ be a sequence of length $L$, where $x^i$ is its $i$-th token and $\mathcal{M}(\mathbf{x}) = \{i : x^i = m\}$ its set of masked positions.
For any $d \in \mathbb{N}$, define the probability simplex $\Delta^d := \{\mathbf{p} \in \mathbb{R}_{\geq 0}^{d} : \mathbf{1}_d^{\top}\mathbf{p} = 1\}$, and write $\mathrm{Cat}(\mathbf{p})$ for the categorical distribution with parameter $\mathbf{p} \in \Delta^{|\mathcal{V}|}$ and $\mathbf{e}_v \in \Delta^{|\mathcal{V}|}$ for the one-hot vector at $v \in \mathcal{V}$.
Finally, $\mathrm{sg}[\cdot]$ denotes stop-gradient.

\paragraph{Masked diffusion.}
Masked diffusion defines a generative model over sequences through a forward process that masks tokens and a backward process that unmasks them.
The forward, or \emph{noising}, process is a family of corruptions indexed by a level $t \in [0,1]$ that interpolates between a clean sequence $\mathbf{x}_0 \in \mathcal{V}^L$ at $t = 0$ and the fully masked sequence $(m,\ldots,m)$ at $t = 1$.
It factorizes over positions, with $q(x_t^i \mid x_0^i) = \mathrm{Cat}\bigl(\alpha_t \mathbf{e}_{x_0^i} + (1 - \alpha_t)\,\mathbf{e}_m\bigr)$, so that $x_0^i$ is retained with probability $\alpha_t$ and replaced by $m$ otherwise.
The coefficient $\alpha_t$ decreases monotonically from $\alpha_0 = 1$ to $\alpha_1 = 0$, and a common choice is the linear schedule $\alpha_t = 1 - t$~\citep{sahoo2024mdlm,shi2024simplified}.
The backward, or \emph{denoising}, process runs from $t = 1$ to $t = 0$ and fills in the masked positions.
Its transitions are defined by the posterior $q(x_0^i \mid \mathbf{x}_t)$ over the clean token at a masked position, which depends on the unknown data distribution and is intractable.
Masked diffusion models approximate this posterior with a learned denoiser and generate data by iterating the resulting transitions from the fully masked sequence.

\paragraph{Training of MDMs.}
Since masking is absorbing and position-wise, the masking ratio of $\mathbf{x}_t$ determines the level $t$, so the posterior depends on $\mathbf{x}_t$ only through its mask pattern and revealed tokens \citep{gat2024discrete,sahoo2024mdlm,amin2026masking}.
The denoiser therefore takes only the corrupted sequence as input, $f_\theta : \mathcal{V}^L \to (\Delta^{|\mathcal{V}|})^L$, and returns one categorical distribution per position, $f_\theta^i(\cdot \mid \mathbf{x}_t) \in \Delta^{|\mathcal{V}|}$.
Fitting $f_\theta$ maximizes an evidence lower bound on the data log-likelihood, which for this corruption reduces to a weighted cross-entropy over the masked positions:
\begin{equation}
  \mathcal{L}_{\mathrm{MDM}}
  = \mathbb{E}_{\mathbf{x}_0,\, t,\, \mathbf{x}_t}
    \bigl[ \ell(\mathbf{x}_0, \mathbf{x}_t) \bigr],
  \qquad
  \ell(\mathbf{x}_0, \mathbf{x}_t)
  = \tfrac{1}{t} \textstyle\sum_{i \in \mathcal{M}(\mathbf{x}_t)}
      -\log f_\theta^i(x_0^i \mid \mathbf{x}_t).
  \label{eq:mdm_app}
\end{equation}
The factor $1/t$ is the weight the bound takes under the linear schedule $\alpha_t = 1 - t$.
It divides the sum by the $Lt$ positions masked in expectation, so heavily masked sequences do not dominate the loss \citep{bethune2026design}.
Each training step needs a single denoiser call.
It draws a clean sequence $\mathbf{x}_0$ and a level $t \sim \mathcal{U}[0,1]$, masks each token independently with probability $1 - \alpha_t$, and computes the loss on the predictions at the masked positions.

\paragraph{Inference in MDMs.}
With a perfect denoiser, revealing one position at a time samples the joint distribution exactly by the chain rule, since each new token is conditioned on all tokens committed so far.
Revealing several positions at once is faster, but samples from the product of their marginals instead \citep{webb2026limitsconfidencediffusion}.
We index these steps by $j$ and write $\mathbf{x}_{t_j}$ for the sequence after step $j$, whose masking ratio $t_j = |\mathcal{M}(\mathbf{x}_{t_j})| / L$ gives its level.
Each step applies an \emph{unmasking policy} $g$ that selects $S = g(\mathbf{x}_{t_j}) \subseteq \mathcal{M}(\mathbf{x}_{t_j})$ and fills every $i \in S$ with a draw from $f_\theta^i(\cdot \mid \mathbf{x}_{t_j})$, giving $\mathbf{x}_{t_{j+1}}$.

Revealing positions with more certain marginals keeps the product closer to the joint and gives better samples at a fixed number of steps \citep{nie2025llada,wu2025fastdllm}.
Unmasking policies include hand-designed rules \citep{kim2025train,ben2026accelerated,ye2025dream} and learned policies \citep{jazbec2025learning}.
The \emph{top-$u$} policy \citep{chang2022maskgit} reveals the $u$ masked positions with the highest confidence $c_i = \max_{v \in \mathcal{V}} f_\theta^i(v \mid \mathbf{x}_{t_j})$.
For large MDMs, Fast-dLLM~\citep{wu2025fastdllm,wu2026fast} is an effective policy that reveals every position with $c_i \geq \tau$, or the single most confident position if none exceeds the threshold.
The number of positions revealed per step thus follows the confidence of the marginals, and every step reveals at least one token, so generation terminates.
\citet{wu2025fastdllm} pair it with semi-autoregressive decoding, which generates the sequence block by block \citep{arriola2025block}.
Together, the two preserve generation quality at far fewer denoiser calls per sample, or number of function evaluations (NFEs).

\clearpage
\section{Training procedure and experimental setup}
\label{app:experimental_setup}

In this section, we describe the data, task construction, training protocol, and evaluation procedure for each experimental setting.
Unless stated otherwise, comparisons within a setting use the same data, tokenizer, optimizer configuration, sequence length, and number of updates; only the construction of the training sequences changes between training procedures.

\subsection{\method training procedure}
\label{app:pumba_alg}

In this section, we give the pseudocode of \method (\Cref{alg:train}).

\begin{algorithm}[ht]
  \caption{\method training for one sample.
  In practice, we batch and advance several trajectories in parallel, each restarting from a fresh sample once fully revealed.}
  \label{alg:train}
  \begin{algorithmic}[1]
    \Require Carry-augmented denoiser $f_{\mathrm{carry}}$ (\Cref{eq:carry}); unmasking policy $g$ (\Cref{sec:puma}); unroll window $W$
    \Repeat
      \State Draw $\mathbf{x}_0$ from the dataset;\quad $\mathbf{x} \gets (m, \ldots, m)$;\quad $h \gets 0$
        \Comment{Fresh trajectory, fully masked}
      \While{$\mathcal{M}(\mathbf{x}) \neq \emptyset$} \Comment{One window, and one optimizer step, per iteration}
        \State $\mathcal{L} \gets 0$;\quad $w \gets 0$
        \While{$w < W$ \textbf{and} $\mathcal{M}(\mathbf{x}) \neq \emptyset$}
          \State $\bigl(f_\theta(\cdot \mid \mathbf{x}, h),\ h'\bigr) \gets f_{\mathrm{carry}}(\mathbf{x}, h)$
            \Comment{Forward pass}
          \State $\mathcal{L} \gets \mathcal{L} + \ell(\mathbf{x}_0, \mathbf{x}, h)$ \Comment{\Cref{eq:mdm}, with the carry as extra input}
          \State $S \gets g(\mathbf{x}, h)$ \Comment{Position selection: no gradient through $g$}
          \State $x^i \gets x_0^i$ for $i \in S$ \Comment{Revealed positions are teacher-forced}
          \State $h \gets h'$;\quad $w \gets w + 1$ \Comment{Gradient flows through the carry}
        \EndWhile
        \State Update $\theta$ from $\nabla_\theta (\mathcal{L} / w)$
        \State $h \gets \mathrm{sg}[h]$ \Comment{Gradient does not cross the window boundary}
      \EndWhile
    \Until{the training budget is exhausted}
  \end{algorithmic}
\end{algorithm}

\paragraph{Clarification on the loss.} The loss of each window is averaged over its steps, which keeps the update scale independent of $W$.
Since the $1/t$ factor of \Cref{eq:mdm} normalizes each step's loss per masked token, all steps of a window contribute equally regardless of their masking ratio.

\subsection{Progressive unmasking: the two constructions}
\label{app:pu_details}

The LLaDA-8B runs (\Cref{sec:exp_llada}) use the confidence-threshold construction: each step commits every masked position whose confidence exceeds $\tau$, and at least $u$ positions.
The TinyGSM runs (\Cref{sec:exp_tinygsm}) instead follow the construction of \citet{kim2026puma} as published.
\Cref{tab:pu_constructions} compares the two, and the paragraphs below give the details.
In both, confidence is the maximum probability over the vocabulary, computed in fp32 from the forward pass that produces the loss, and committed positions receive the ground-truth token.

\begin{table}[ht]
  \centering
  \caption{The two progressive unmasking constructions.
  $L$ is the number of maskable positions of a sample and $K$ the number of stages.
  A trajectory is at stage $p$ when a fraction in $[p/K,(p+1)/K)$ of its $L$ positions is revealed.}
  \label{tab:pu_constructions}
  \small
  \renewcommand{\arraystretch}{1.08}
  \begin{tabular}{@{}P{0.20\linewidth}P{0.38\linewidth}P{0.36\linewidth}@{}}
    \toprule
    & TinyGSM, \citet{kim2026puma} & LLaDA-8B, \Cref{sec:exp_llada} \\
    \midrule
    Start of a trajectory & A random fraction $r\sim\mathcal{U}[0,1/K)$ of the $L$ positions revealed, chosen uniformly at random & Fully masked \\
    Progress index & Stage $p\in\{0,\dots,K-1\}$ & None \\
    Positions committed per step & Enough to reach $\mathrm{round}(rL)$ revealed positions, $r\sim\mathcal{U}[(p+1)/K,(p+2)/K)$, capped at $L-1$ & The $u$ most confident \\
    Confidence threshold & Every other position above $\tau=0.9$; the stage is then recomputed from the revealed fraction & Every other position above $\tau=0.9$ \\
    Schedule & $K$ raised stepwise from 12 to 42 (\Cref{tab:tinygsm_hparams}) & $u$ fixed per run (\Cref{tab:llada_postsft_hparams}) \\
    Retirement & After the step taken at stage $K-1$ & Once the response and its EOS are fully revealed \\
    Maskable region & Response, including the EOS padding to 512 tokens & Response, its first EOS, and padding (\Cref{app:setup_llada_sft}) \\
    \bottomrule
  \end{tabular}
\end{table}

\paragraph{The stage-indexed construction of \citet{kim2026puma}, used for TinyGSM.}
The $K$ stages split the revealed fraction of a trajectory into equal intervals: a trajectory is at stage $p$ when a fraction in $[p/K,(p+1)/K)$ of its maskable positions is revealed.
Each step nominally advances a trajectory by one stage, so a trajectory lasts about $K$ steps.
A new trajectory starts at stage $0$ with a small fraction of its positions already revealed; these positions are drawn uniformly at random, not by confidence.
A step taken at stage $p$ samples a target fraction from the next interval and commits the most confident masked positions until that target is met.
The number of positions committed per step is therefore random, with mean close to $L/K$.
The target is capped at $L-1$, so the reveal step alone never completes a trajectory.
Every other masked position with confidence above $\tau$ is then committed as well, and the stage is recomputed from the revealed fraction.
A trajectory can therefore skip stages, although retirement is still decided by the stage before this recomputation.
The maskable region includes the EOS padding that fills the canvas to 512 tokens (the TinyGSM sequence length), so the padding counts toward $L$ and enters the loss.
In practice, the threshold commits most of the padding within about one step, so a trajectory lasts far fewer than $K$ steps: about $3.5$ at $K=12$ and $7$ at $K=42$.

\paragraph{Our construction, used for LLaDA-8B.}
Every trajectory starts fully masked, and there is no stage index and no schedule.
The prompt is always visible and never masked.
Each step commits the $u$ most confident masked positions, together with every other masked position whose confidence exceeds $\tau=0.9$.
This is the same set as committing every position above $\tau$ and topping up with the next most confident positions until at least $u$ are committed.
A trajectory is retired once its response and EOS are fully revealed, and a fresh, fully masked sample takes its place.
On the full canvas, padding inside the attended window is also maskable, so some of the $u$ positions committed at a step can be padding, which does not enter the loss.

\paragraph{Why we depart from the published construction.}
Progressive unmasking aims to train on the sequences that the inference policy visits (\Cref{sec:puma}).
The published construction approximates them with a schedule: a trajectory nominally lasts about $K$ steps, so each step commits about $L/K$ positions, whatever the length $L$ of the sample.
A confidence-threshold decoder instead commits a number of positions per step that does not depend on $L$, so longer responses take more steps.
The two agree only when $L$ is nearly the same for every sample.
On TinyGSM, $L$ includes the EOS padding to 512 tokens and varies only with the prompt length, so we keep the published construction and its curriculum on $K$, and our PU baseline is the recipe of \citet{kim2026puma}.
On Dolci-Instruct-SFT, the response length ranges from 14 tokens at the 10th percentile to 778 at the 90th (\Cref{fig:dolci_lengths}).
At $K=42$, the final value of that curriculum, a 14-token response would receive about $0.3$ commits per step, so most of its steps would commit nothing beyond the positions above $\tau$, while a 778-token response would receive about $18.5$ commits per step.
No single $K$ therefore matches the decoding policy across the mixture.
Our construction drops the schedule and takes its reveal rule from the decoding policy itself: every trajectory starts fully masked, as decoding does, and each step commits by the same confidence-threshold rule, with $\tau$ and $u$ as its only parameters.
It also removes the random reveal at stage $0$ and the sampled commit count, which have no counterpart at inference.

\subsection{TinyGSM}
\label{app:setup_tinygsm}

\paragraph{Dataset and task.}
The TinyGSM experiments provide the controlled small-scale setting used to isolate the contribution of progressive unmasking, the on-trajectory hidden-state carry, and BPTT.

We train on TinyGSM~\citep{liu2023tinygsm}, holding out $2\%$ for validation, and evaluate on the $1{,}319$ problems of the GSM8K test set~\citep{cobbe2021gsm8k}.
The prompt is always visible; only the response is masked and enters the loss, as typically adopted in the SFT settings~\citep{nie2025llada,ye2025dream}.
Each TinyGSM response is a Python function, so we score a generation by executing the function it defines in a sandbox with a 1-second limit and comparing its return value with the GSM8K answer, exactly for integers and to within $10^{-3}$ for floats.
We report this accuracy with EMA weights.
Decoding is greedy and reveals the $u$ most confident masked positions at each step.

\paragraph{Compared procedures.}
All diffusion variants share the architecture and start from random initialization.
For $W>1$, we lower the per-GPU batch size and accumulate gradients so that the effective batch stays at 256.
MDM $\times2$ and $\times8$ train for 40 and 160 epochs.
The AR model has the same size ($\approx$125M parameters), uses causal attention, and is trained with next-token cross-entropy on the same data, averaged over the response tokens up to and including the first EOS.
The denoiser, tokenizer, output parameterization, and loss shape are held fixed across the diffusion runs, whose loss also covers the EOS padding.
\citet{jo2026loopholing} propose carry dropout, which uses the carry only a fraction $p$ of the time during training and helps for $p \in [0.5, 0.9]$ in their experiments.
Our TinyGSM runs do not use it ($p=1$), so the carry is always passed on.
We also follow the curriculum on $K$ of \citet{kim2026puma}: $K$ starts at 12 and rises by 3 every 30k updates from update 60k, reaching 42 at update 330k.
Each GPU keeps 32 trajectories, which start at stages spread evenly over $\{0,\dots,K-1\}$.
Every change of $K$ re-initializes all trajectories this way, with a zero carry.

\begin{table}[ht]
  \centering
  \caption{Hyperparameters for the TinyGSM experiments.}
  \label{tab:tinygsm_hparams}
  \small
  \renewcommand{\arraystretch}{1.08}
  \begin{tabular}{@{}P{0.34\linewidth}P{0.60\linewidth}@{}}
    \toprule
    Hyperparameter & Value \\
    \midrule
    Architecture & Bidirectional transformer, 14 layers, hidden size 512, MLP size 1536, 8 heads, tied embeddings ($\approx$125M parameters) \\
    Tokenizer & Qwen2, restricted to its first 151{,}645 ids; id 151{,}644 serves as the mask token \\
    Maximum sequence length & 512 \\
    Optimizer & AdamW, learning rate $3\times10^{-4}$, linear warmup over 1000 updates, then cosine decay to 0 \\
    Weight decay / gradient clipping & 0.01 / 1.0 \\
    Batch size / training length & 256 sequences / 20 epochs ($\approx$907k updates) \\
    EMA decay & 0.9999 \\
    PU schedule & Stage-indexed, $K$ raised stepwise from 12 to 42 over the first 330k updates; confidence threshold 0.9 (\Cref{app:pu_details}) \\
    Carry & Last hidden state before the output projection, added to the token embeddings through a zero-initialized LayerNorm; zero at the first step; stored in fp32. Dropout $p=1$ \\
    BPTT window $W$ & $\{1,2,4,8\}$ \\
    \bottomrule
  \end{tabular}
\end{table}

\FloatBarrier
\subsection{LLaDA-8B}
\label{app:setup_llada}

The LLaDA-8B experiments follow the two-stage procedure of \Cref{sec:exp_llada}.
The SFT stage fine-tunes LLaDA-8B-Base~\citep{nie2025llada} with the MDM objective.
The Post-SFT stage continues from an SFT checkpoint with one of the compared training procedures.
\Cref{tab:llada_common} lists the shared settings, \Cref{app:setup_llada_sft,app:setup_llada_postsft} those specific to each stage, and \Cref{app:setup_llada_eval} the evaluation.

\paragraph{Dataset.}
Both stages train on Dolci-Instruct-SFT from the OLMo~3 post-training pipeline~\citep{olmo2025olmo}.
The mixture contains instruction-following, mathematical, code, conversational, and tool-use examples.
We use the full mixture, about 2.15M (prompt, response) pairs.
Prompts are rendered as plain-text role labels (\texttt{user:}, \texttt{assistant:}) without a BOS token, and the response is followed by a trainable EOS that supplies the stop signal.
Each sample is truncated to a 4096-token canvas by reserving the response and its EOS first and left-truncating the prompt into the remainder.
Truncation therefore never removes response tokens unless the response alone exceeds the canvas (which almost never happens; only 1645 samples); it removes 16.2\% of all prompt tokens, concentrated in the 2.5\% of samples that do not fit the canvas.
\Cref{fig:dolci_lengths} shows the prompt and response lengths before truncation, and \Cref{tab:llada_token_budget} reports the token budget after it.

\begin{figure}[ht]
  \centering
  \includegraphics[width=\appscale\linewidth]{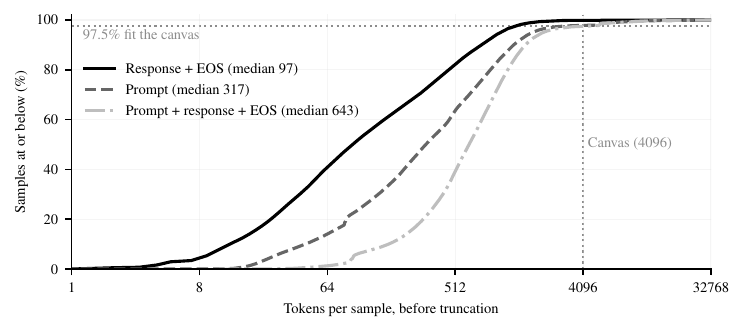}
  \caption{Empirical CDFs of prompt, response, and total length per sample in Dolci-Instruct-SFT, before truncation to the 4096-token canvas (vertical dotted line).
  Response length includes the terminal EOS.
  The horizontal dotted line marks the 97.5\% of samples that fit the canvas.}
  \label{fig:dolci_lengths}
\end{figure}

The benchmarks in \Cref{app:setup_llada_eval} are used only for downstream assessment.
We run no separate contamination check: every compared run trains on the same data, so any overlap with the benchmarks is shared by all of them.

\paragraph{Objective.}
The prompt is never masked, but not every masked position enters the loss: the response and its first EOS always do, and whether padding does depends on the setting (\Cref{app:setup_llada_sft}).
The MDM objective weights each sample by $1/t$ (\Cref{eq:mdm}), and we cap this factor at $\gamma=5$, following the min-SNR weighting of \citet{hang2023efficient}.
In our runs, the cap slightly stabilizes training.
An auxiliary $z$-loss is weighted $10^{-5}$, and the loss is normalized by the number of positions it is computed on, not by the maskable ones.

\begin{table}[ht]
  \centering
  \caption{Settings shared by the SFT and Post-SFT stages of LLaDA-8B.}
  \label{tab:llada_common}
  \small
  \renewcommand{\arraystretch}{1.08}
  \begin{tabular}{@{}P{0.34\linewidth}P{0.60\linewidth}@{}}
    \toprule
    Setting & Value \\
    \midrule
    Model & LLaDA-8B-Base architecture and tokenizer; hidden size 4096; 126{,}464 tokens, with the existing \texttt{<|mdm\_mask|>} as the mask token \\
    Data & Dolci-Instruct-SFT, full mixture, about 2.15M (prompt, response) pairs \\
    Canvas & 4096 tokens; response and EOS reserved first, prompt left-truncated \\
    Optimizer & AdamW ($\beta_1=0.9$, $\beta_2=0.95$, $\epsilon=10^{-8}$), weight decay $0.01$, global gradient-norm clipping at $1.0$ \\
    Learning-rate schedule & Linear warmup from $0$, then cosine decay to $0$ \\
    Precision & bfloat16 parameters, gradient reduction, and outputs \\
    Global batch size & 128 sequences, 2 per GPU on 64 GPUs; no gradient accumulation \\
    Parallelism & Hybrid-sharded data parallelism: parameters sharded within each node of 8 GPUs, replicated across 8 nodes \\
    Hardware & 8 nodes of 8 NVIDIA H100 80GB, except where \Cref{tab:llada_sft,tab:llada_postsft_hparams} state B200 \\
    MDM loss weighting & $\min(1/t, 5)$; $z$-loss $10^{-5}$; normalized by the positions in the loss \\
    \bottomrule
  \end{tabular}
\end{table}

\FloatBarrier
\subsubsection{SFT stage}
\label{app:setup_llada_sft}

The SFT stage trains with the MDM objective in the same attention pattern the model later decodes with (\Cref{tab:llada_sft}).
Full-canvas SFT attends bidirectionally over the whole canvas.
As in the SFT of \citet{nie2025llada}, each local batch is padded with EOS only up to its longest sequence rather than to the full canvas.
In preliminary runs, we observed this made performance vary less with the canvas length.
Unlike \citet{nie2025llada}, we keep these padding tokens out of the loss: they are attended and can be masked, but only the first EOS after the response is trained on.
In LLaDA, EOS and padding are the same token, and when we also trained on all the padding of the full canvas, the model collapsed to EOS or to very short responses, an effect \citet{nie2025llada} also report.
Block diffusion SFT follows the vectorized block MDM algorithm of \citet{arriola2025block} with blocks of 32 tokens.
Each sample holds a clean and a noised copy of its response in one sequence of twice the length, and the attention mask lets each noised block attend to itself and to the clean blocks before it, so all blocks train in one forward pass.
We interleave the two copies block by block, which leaves this attention pattern unchanged.
Each response is rounded up to a whole number of blocks with EOS, and this filler enters the loss.
Both settings select the peak learning rate by a sweep at $1.5$ epochs (\Cref{app:llada_sft_schedule,app:llada_semiar_sft}).

\begin{table}[ht]
  \centering
  \caption{SFT stage of LLaDA-8B.
  Wall-clock time is the training run alone, without evaluation.}
  \label{tab:llada_sft}
  \small
  \renewcommand{\arraystretch}{1.08}
  \begin{tabular}{@{}P{0.26\linewidth}P{0.33\linewidth}P{0.33\linewidth}@{}}
    \toprule
    & Full canvas & Block diffusion \\
    \midrule
    Attention & Bidirectional & Block-causal, bidirectional within blocks of 32 \\
    Peak learning rate & $5\times10^{-6}$ & $5\times10^{-6}$ \\
    Warmup & 2000 updates & 2000 updates \\
    Training length & 1.5 epochs, 25{,}195 updates; 3-epoch control & 6 epochs, 100{,}781 updates; 1.5- and 3-epoch controls \\
    Padding & Attended and maskable, not in the loss & EOS filler to the block edge, in the loss \\
    Wall-clock (reported run) & 9.4\,h on 64 H100 & 48.6\,h on 64 B200 \\
    \bottomrule
  \end{tabular}
\end{table}

\subsubsection{Post-SFT stage}
\label{app:setup_llada_postsft}

Every Post-SFT run trains for 4000 updates at a peak learning rate of $2.5\times10^{-6}$, with a linear warmup over the first 200 updates (5\% of the run) and cosine decay to $0$.
This is half the SFT peak, and together with the new warmup it keeps the continuation from the SFT checkpoint as smooth as the 4000-update horizon allows.
Full-canvas runs start from the 1.5-epoch full-canvas SFT checkpoint, and block diffusion runs start from the 6-epoch block diffusion SFT checkpoint.
Within each setting, every Post-SFT run therefore shares its warm start, data, recipe, and number of updates, and only the construction of the training sequences changes.
The Post-SFT MDM control uses the same recipe with the MDM training algorithm.

Progressive unmasking follows the construction in \Cref{app:pu_details}.
Each GPU keeps two trajectories, so that the 64 GPUs advance 128 trajectories per update, and every trajectory advances at every update.
Under block diffusion training, each block of 32 positions is its own trajectory, and all blocks of a sample advance in lockstep, so a block is retired after at most $\lceil 32/u \rceil$ steps.
The trajectories are not checkpointed, so a resumed run starts from fresh, fully masked ones.
The PU loss weights each sequence by $\min(1/t_k, 5)$, where $t_k$ is the realized masked fraction of the positions in the loss: the response up to and including its first EOS on the full canvas, and each block separately under block diffusion.
It is rescaled by the fraction of sequences in the batch that still have masked positions in the loss, and it is averaged over the $W$ forward passes of a window.
The carry is the last hidden state after the final normalization, one vector per position, and it is added to the token embeddings through a LayerNorm whose weight and bias start at zero.
This LayerNorm is new at the start of Post-SFT and trains at the same learning rate as the rest of the model.
The carry is zero at the first step of a trajectory and is stored in fp32 between updates; the sum with the token embeddings is cast back to bfloat16 before the first transformer block.
Carry dropout is not used ($p=1$).

\begin{table}[ht]
  \centering
  \caption{Post-SFT stage of LLaDA-8B.
  Wall-clock time is the training run alone, without evaluation, on 64 H100 unless stated.}
  \label{tab:llada_postsft_hparams}
  \small
  \renewcommand{\arraystretch}{1.08}
  \begin{tabular}{@{}P{0.26\linewidth}P{0.33\linewidth}P{0.33\linewidth}@{}}
    \toprule
    & Full canvas & Block diffusion \\
    \midrule
    Warm start & 1.5-epoch SFT (25{,}195 updates) & 6-epoch SFT (100{,}781 updates) \\
    Learning rate & \multicolumn{2}{p{0.66\linewidth}}{Peak $2.5\times10^{-6}$, 200 warmup updates, cosine to $0$, 4000 updates} \\
    Confidence threshold $\tau$ & $0.9$ & $0.9$ \\
    Reveal count $u$ & $\{32, 64\}$ reported & $\{8, 16\}$ per block reported \\
    BPTT window $W$ & $\{1,2,4,8\}$ & $\{1,2,4\}$ at $u=8$; $\{1,2\}$ at $u=16$ \\
    Carry & \multicolumn{2}{p{0.66\linewidth}}{Final hidden state; zero-initialized LayerNorm; zero at the first step; fp32 storage; $p=1$} \\
    Wall-clock & MDM 1.7\,h; at $u=32$: PU 1.8\,h, $W=1$ 2.2\,h, $W=2$ 4.2\,h, $W=4$ 7.7\,h, $W=8$ 14.4\,h & MDM 3.1\,h; at $u=16$: PU 3.0\,h, $W=1$ 3.0\,h, $W=2$ 5.6\,h; at $u=8$: $W=4$ 6.9\,h on 64 B200 \\
    \bottomrule
  \end{tabular}
\end{table}

\subsubsection{Evaluation}
\label{app:setup_llada_eval}

We measure verifiable instruction following with IFEval~\citep{zhou2023ifeval}, mathematical reasoning with GSM8K~\citep{cobbe2021gsm8k}, and program synthesis with MBPP~\citep{austin2021mbpp}.
All three run in lm-eval~0.4.9.1~\citep{eval-harness}.
We report prompt-level strict accuracy on IFEval (0-shot), exact match with flexible answer extraction on GSM8K (8-shot), and pass@1 on MBPP (3-shot).
The \emph{score} is the mean of the three.
Decoding uses the Fast-dLLM confidence-threshold policy~\citep{wu2025fastdllm}.
It runs left to right in blocks of 32 tokens for every model, at temperature $T=0$, and sweeps the confidence threshold $\tau\in\{0.7,0.8,0.9,1\}$.
$\tau=1$ recovers one-token-per-step confidence decoding.
The generation budget is 1024 tokens for GSM8K and MBPP and 1280 tokens for IFEval, the values lm-eval sets per task.
Carry models pass the carry from each decoding step to the next.
On the full canvas, the carry also passes from one block to the next.
Under block diffusion, we reset the carry to zero at the start of every block, to match training, where the vectorized layout of \Cref{app:setup_llada_sft} advances all blocks of a sample in parallel from a zero carry.

Prompts use one of two formats, both plain text without special tokens such as BOS or EOS.
The \emph{plain prompt format} concatenates the few-shot demonstrations and the query into a single string, as lm-eval does by default.
The \emph{chat prompt format} renders them as alternating turns with the same \texttt{user:} and \texttt{assistant:} labels as our SFT data, and ends with \texttt{assistant:}.
We report all results in the chat prompt format, except the base-model check in \Cref{app:llada_comparison}.

\begin{table}[ht]
  \centering
  \caption{Token budget of LLaDA-8B SFT, per epoch over Dolci-Instruct-SFT, with the total of the reported run in parentheses: 1.5 epochs on the full canvas (25{,}195 updates of 128 sequences) and 6 epochs under block diffusion (100{,}781 updates).
  Counts are measured over one epoch of the training dataloader and, under block diffusion, of the vectorized layout the training step builds from it.
  \emph{Content} is prompt plus response plus the terminal EOS after truncation to the 4096-token canvas, \emph{EOS filler} completes the last block of each response, \emph{in the loss} counts the positions the loss is computed on, and \emph{attended} counts the positions the model attends over; under block diffusion, these are the context and the clean and noised copies of the response.
  \emph{Total canvas} counts every sample as a full canvas, whatever part of it the sample uses: 4096 positions on the full canvas and 8192 under block diffusion, whose layout holds the clean and noised copies side by side.}
  \label{tab:llada_token_budget}
  \small
  \renewcommand{\arraystretch}{1.08}
  \begin{tabular}{@{}lcc@{}}
    \toprule
    & Full canvas & Block diffusion \\
    \midrule
    Content (prompt + response + EOS) & 1.81\,B (2.71\,B) & 1.81\,B (10.84\,B) \\
    \quad of which prompt & 1.22\,B (1.83\,B) & 1.22\,B (7.34\,B) \\
    \quad of which response + EOS & 0.58\,B (0.88\,B) & 0.58\,B (3.50\,B) \\
    EOS filler & --- & 0.03\,B (0.20\,B) \\
    In the loss & 0.58\,B (0.88\,B) & 0.62\,B (3.71\,B) \\
    Attended & 2.32\,B (3.49\,B) & 2.46\,B (14.75\,B) \\
    Total canvas & 8.81\,B (13.21\,B) & 17.61\,B (105.68\,B) \\
    \bottomrule
  \end{tabular}
\end{table}

\subsection{Compute}
\label{app:compute}

The runs reported in this paper used about $85{,}000$ H100 and $7{,}300$ B200 GPU-hours.
The TinyGSM experiments account for about $36{,}000$ H100 GPU-hours, and the LLaDA-8B experiments for about $49{,}000$ H100 and $7{,}300$ B200 GPU-hours, of which evaluation takes about $24{,}000$ H100 GPU-hours.

\clearpage
\section{Additional TinyGSM results}
\label{app:tinygsm_results}

\subsection{Accuracy and \texorpdfstring{pass@$k$}{pass@k}}

\Cref{fig:tinygsm_acc} complements \Cref{tab:tinygsm_compose} with the accuracy of each variant across decoding budgets $u$ and during training at $u{=}2$.

\begin{figure}[ht]
  \centering
  \includegraphics[width=0.9\linewidth]{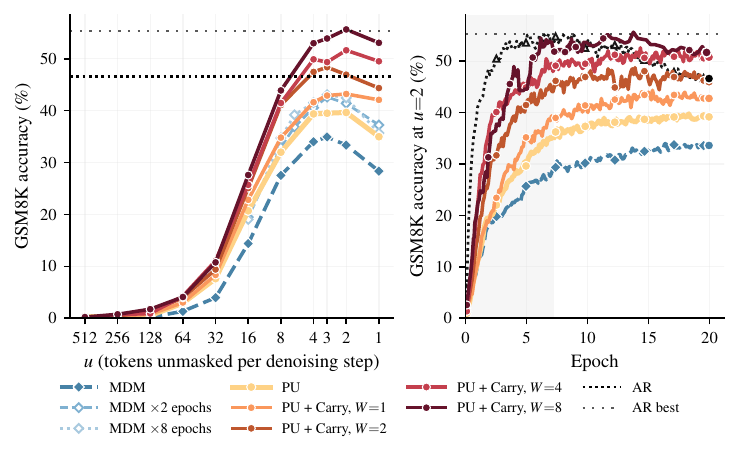}
  \caption{GSM8K accuracy on TinyGSM.
  \textit{Left}: final checkpoint against the number of tokens revealed per step, $u$.
  \textit{Right}: accuracy at $u{=}2$ during training; the gray area marks the annealing phase of the PUMA recipe.
  Dotted lines: AR at its final and best checkpoints.}
  \label{fig:tinygsm_acc}
\end{figure}

We investigate the effect of the number of sampling tokens per denoising step $u$ in \Cref{fig:tinygsm_epochs} during training. We observe that the trend and ordering of the models remain the same, with carry (+ $W{=}8$) performance matching AR's best performance. Note that AR's performance decays with more epochs of training as expected. We also study pass@$k$ as an alternative metric to accuracy. \citet{ni2026flexibility} show that pass@$k$ can be used to compare sample diversity across different methods, such as AR and MDM. We change the sampling temperature $T \in \{0.2, 0.7, 1.0\}$. \Cref{fig:tinygsm_passk} shows the result of this experiment. We observe that PU, PU with carry, and BPTT have a positive effect on this metric. While AR seems to be less sensitive to temperature and performs better at higher temperatures, we see that at lower temperatures, BPTT with higher values of $W$ can outperform AR.

\begin{figure}[ht]
  \centering
  \includegraphics[width=\ifapplebuild0.8\else0.6\fi\linewidth]{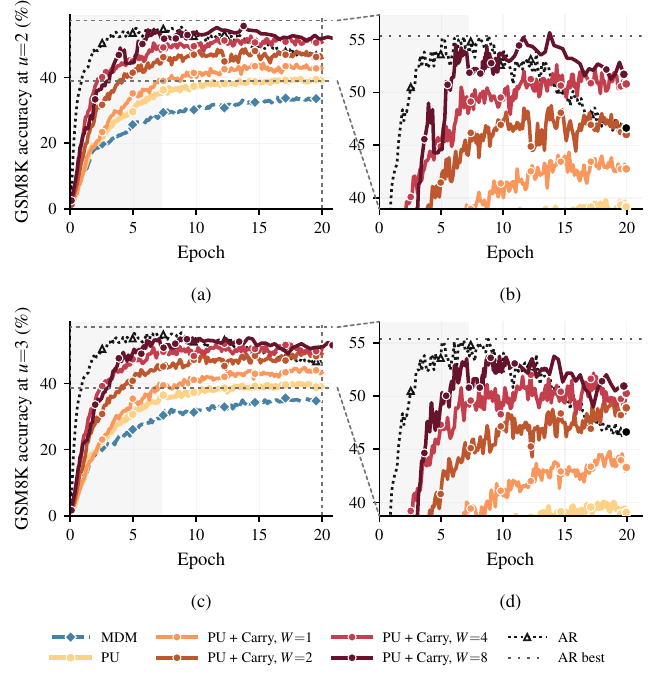}
  \caption{GSM8K accuracy during training at $u{=}2$ (top) and $u{=}3$ (bottom).
  \textit{Right}: zoom on the dashed window. \textit{Shaded}: ramp of the PU schedule.}
  \label{fig:tinygsm_epochs}
\end{figure}

\begin{figure}[ht]
  \centering
  \includegraphics[width=\ifapplebuild0.85\else0.7\fi\linewidth]{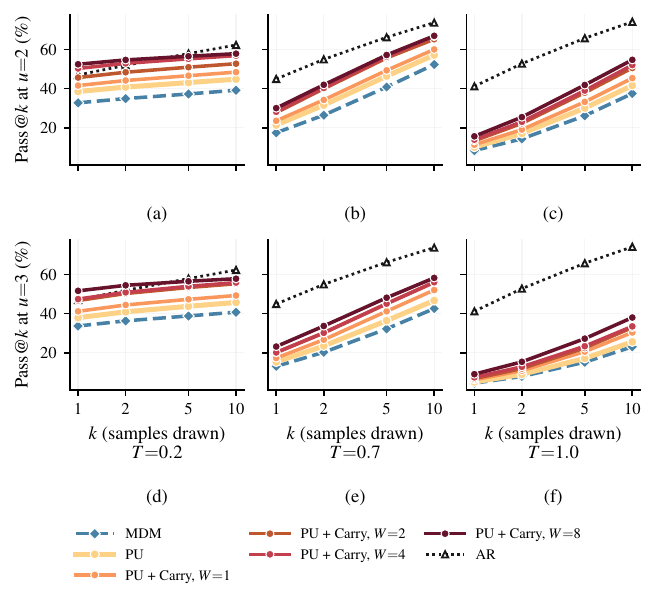}
  \caption{pass@$k$ from 100 samples per problem~\citep{chen2021evaluating}, at $u{=}2$ (top) and $u{=}3$ (bottom).
  AR leads at higher temperatures while carry with $W{=}8$ has comparable performance at lower temperatures; the ordering among diffusion variants is unchanged with PU improving over MDM, carry and BPTT having more positive effects in addition to PU.}
  \label{fig:tinygsm_passk}
\end{figure}

\FloatBarrier

\subsection{Carry and gradient estimators}
\label{app:chain_ablation}

\Cref{fig:tinygsm_chain_curves} shows the training curves behind \Cref{tab:tinygsm_chain_ablation}, together with the runs that separate the carry from the multi-step objective, and \Cref{fig:tinygsm_chain_budget} evaluates their latest checkpoints across decoding budgets $u$.
All runs keep the PUMA recipe at its initial setting instead of annealing it (\Cref{sec:carry}).
They share the model, an effective batch size of 256, and a learning rate of $3\times10^{-4}$, and are evaluated zero-shot on GSM8K.
We sweep the Gumbel-softmax temperature over $\{0.5, 1, 2\}$ and the REINFORCE weight over $\{0.1, 1, 10\}$, with a leave-one-out baseline and normalized advantages, i.e., to reduce the variance of gradients, each sequence's reward is compared with the average reward of the other sequences in the batch, and the resulting differences are rescaled to unit standard deviation.
The carry without BPTT ($W{=}1$) tracks PU throughout training, and supervising two steps per update without a carry adds little, whereas the carry trained by BPTT at $W{=}2$ and $W{=}4$ separates from both early in training (\Cref{fig:tinygsm_chain_curves}, \textit{left}).
No discrete bridge exceeds the two-step objective without a carry (\textit{right}), so the gain of \Cref{tab:tinygsm_chain_ablation} comes from training the carry through the following steps rather than from supervising several steps per update.
The same ordering holds at every decoding budget we evaluate, and the gap narrows at $u{=}16$, where every variant degrades (\Cref{fig:tinygsm_chain_budget}).

\begin{figure}[ht]
  \centering
  \includegraphics[width=\appscale\linewidth]{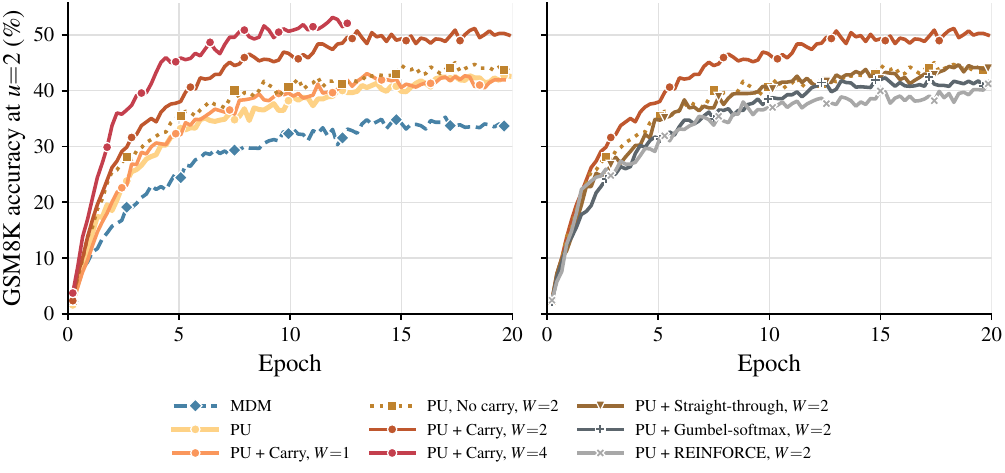}
  \caption{GSM8K accuracy at $u{=}2$ during training.
  \textit{Left}: MDM, PU, the carry without BPTT ($W{=}1$), a two-step objective without a carry, and the carry with BPTT ($W{=}2,4$).
  \textit{Right}: the carry and three gradient estimators at $W{=}2$.
  The $W{=}4$ run is at $63\%$ of its training budget.}
  \label{fig:tinygsm_chain_curves}
\end{figure}

\begin{figure}[ht]
  \centering
  \includegraphics[width=0.5\linewidth]{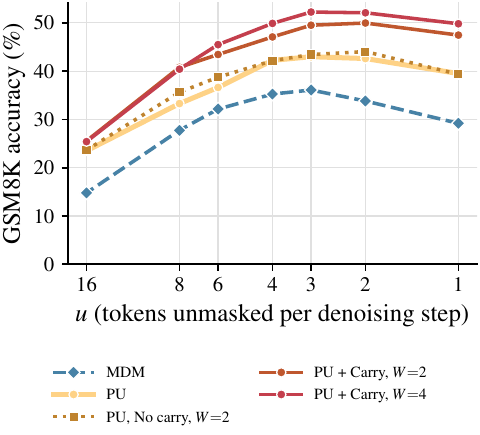}
  \caption{GSM8K accuracy against the number of tokens revealed per step, $u$, at the latest checkpoint of the runs of \Cref{fig:tinygsm_chain_curves}.}
  \label{fig:tinygsm_chain_budget}
\end{figure}

\FloatBarrier

\subsection{From teacher forcing to student forcing}
\label{app:student_forcing}
PU writes ground-truth tokens at the revealed positions (\Cref{sec:puma}), whereas at inference these positions hold the model's own predictions.
We test whether scheduled sampling~\citep{bengio2015scheduled}, the standard remedy for this mismatch, improves PU on TinyGSM.

\paragraph{Setup.}
We train PU without the carry, with the architecture and training budget of \Cref{tab:tinygsm_hparams}.
For simplicity, we fix the number of stages to $K{=}12$ instead of raising it from 12 to 42 as in \citet{kim2026puma}, so that the student-forcing schedule is the only schedule that changes during training.
Each revealed position receives the ground-truth token with probability $\varepsilon$ and the model's argmax prediction otherwise, with a straight-through gradient through the predicted tokens.
The ratio $\varepsilon$ follows the inverse-sigmoid decay of \citet{bengio2015scheduled} with a floor,
\[
  \varepsilon(s) = \max\Bigl(0.1,\ \frac{k}{k + \exp(s/k)}\Bigr),
\]
where $s$ is the optimizer step after a $1{,}000$-step warmup and $k$ sets how fast training moves from teacher to student forcing.
We sweep eight values of $k$ from $15{,}542$ to $98{,}862$, with one run per value, and report GSM8K accuracy at $u{=}2$ as the mean of the last five evaluations.

\paragraph{Results.}
\Cref{fig:scheduled_sampling} shows that no schedule improves on the PU baseline ($39.0\%$).
Accuracy increases with the average ratio of ground-truth tokens $\bar{\varepsilon}$ and reaches the baseline only as $\bar{\varepsilon}$ approaches $1$.
The two closest runs ($37.5\%$ and $38.7\%$) have $\bar{\varepsilon} = 0.97$ and $0.99$: their schedules never reach the floor and are almost pure teacher forcing.
Below $\bar{\varepsilon} \approx 0.9$, every run falls under the MDM baseline ($34.3\%$), and at $\bar{\varepsilon} = 0.24$ accuracy drops to $13.7\%$.
In additional runs, detaching the predicted tokens instead of using the straight-through gradient changes accuracy by less than one point, so the drop comes from the predicted tokens in the context rather than from the gradient estimator.

\paragraph{Why the carry avoids this failure.}
Committed positions are never revisited, so a wrong predicted token stays in the context for the rest of the trajectory, while the loss still asks for $\mathbf{x}_0$ at the remaining masked positions.
The model is then trained to predict targets that contradict its own context.
The carry exposes each step to its predecessor's imperfect computation through a continuous input, while the visible tokens stay consistent with $\mathbf{x}_0$ (\Cref{sec:carry}).
The model can therefore learn how much to rely on the carry, and BPTT trains the predecessor to produce a useful one.

We do not rule out a student-forcing schedule that outperforms teacher forcing, for example one that remasks wrong commitments, but none of the schedules we tried did.

\begin{figure}[ht]
  \centering
  \includegraphics[width=0.8\linewidth]{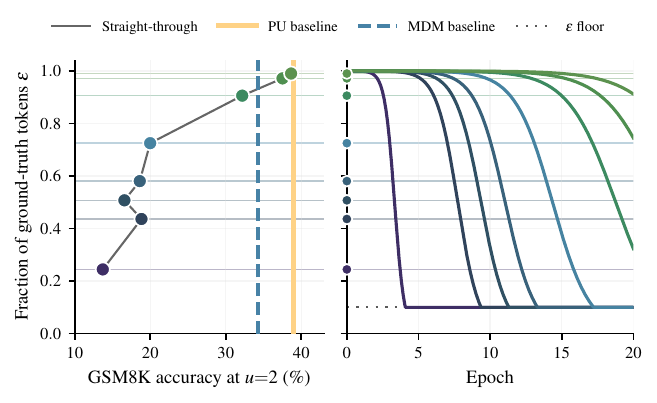}
  \caption{Scheduled sampling on PU (TinyGSM). \textit{Right}: ratio of ground-truth tokens $\varepsilon$ during training for each schedule; the dot on the $y$-axis marks its average $\bar{\varepsilon}$. \textit{Left}: GSM8K accuracy at $u{=}2$ against $\bar{\varepsilon}$, at the same height as the corresponding schedule. $\varepsilon=1$ is pure teacher forcing.}
  \label{fig:scheduled_sampling}
\end{figure}

\FloatBarrier

\subsection{Estimating the mask discrepancy}
\label{app:dmask}

This section details how we estimate $D_{\mathrm{mask}}$ (\Cref{def:dmask}) for a given checkpoint.

\paragraph{Masks and kernel.}
Prompt positions are never masked, so we restrict masks to the $N$ response positions of a validation problem, $\mathbf{m} \in \{0,1\}^N$, where $m^i = 1$ if position $i$ is masked.
The Hamming distance counts the positions on which two masks differ,
\[
  d_H(\mathbf{m}, \mathbf{m}') = \sum_{i=1}^{N} \bigl|m^i - m'^i\bigr|.
\]
We set the kernel bandwidth relative to the response length, $\sigma = 0.2\,N$, so that the kernel has the same scale for short and long responses. We evaluate the masking ratios $t \in \{0.1, 0.2, \ldots, 0.9\}$.

\paragraph{Inference side.}
We run the sampler used for evaluation: greedy decoding that, at each step, reveals the $u$ most confident masked positions and fills them with the most likely token.
For carry models, the carry starts at zero and is passed from step to step, as at evaluation.
This sampler is deterministic, so $P_{\mathrm{inf}}$ is a point mass at a single mask $\mathbf{m}_{\mathrm{inf}}$.
We take $\mathbf{m}_{\mathrm{inf}}$ to be the mask at the first step $j$ at which the masked fraction reaches $t$,
\[
  \frac{|\mathcal{M}(\mathbf{x}_{t_j})|}{N} \le t .
\]
For $u>1$, this mask can have up to $u-1$ fewer masked positions than $tN$.
This slightly increases $D_{\mathrm{mask}}$ in the same way for all methods.

\paragraph{Training side.}
Our MDM implementation first draws the number of masked positions and then masks a uniformly random subset of that size.
At ratio $t$, we therefore draw $N_{\mathrm{tr}} = 32$ masks, each masking a uniformly random subset of $\mathrm{round}(tN)$ response positions. The PU construction of \citet{kim2026puma} does not allow sampling a mask at a requested ratio: masks arise along trajectories whose reveal counts vary.
We therefore run the training construction itself on $\mathbf{x}_0$ and collect the masks it produces.
\begin{enumerate}[nosep, leftmargin=*]
  \item We create a pool of $2K$ trajectories, where $K$ is the number of stages the checkpoint was trained with.
  Every trajectory is a copy of $\mathbf{x}_0$ and starts at a random stage, as in training.
  \item We advance the pool with the training update: reveal the most confident positions, reveal every position whose confidence exceeds $0.9$, and restart a trajectory from a new random stage once it is complete.
  For carry models, the carry is passed along each trajectory; for $W>1$, the pool is advanced once per window of $W$ steps, as in \Cref{alg:train}.
  \item At every step, we record the mask of each trajectory and assign it to the nearest ratio $t$ on the grid, if it lies within $0.05$ of it.
  Each ratio keeps at most $N_{\mathrm{tr}} = 32$ masks.
\end{enumerate}
We run each pool for $K{+}1$ steps and use up to five independent pools, stopping once every ratio holds $32$ masks.
At the checkpoints we report, all ratios are full for all problems.

\paragraph{Estimator.}
Let $\mathbf{m}_1, \ldots, \mathbf{m}_{N_{\mathrm{tr}}}$ be the training masks at ratio $t$.
The unbiased MMD estimator~\citep{gretton2012kernel} compares the average kernel value within the training masks with the average kernel value between training and inference masks.
Since $P_{\mathrm{inf}}$ is a point mass, its within-set term equals $k(\mathbf{m}_{\mathrm{inf}}, \mathbf{m}_{\mathrm{inf}}) = 1$, and the estimator becomes
\begin{equation}
  \widehat{\mathrm{MMD}}_k^2
  = \frac{1}{N_{\mathrm{tr}}(N_{\mathrm{tr}}-1)}\sum_{i \neq j} k(\mathbf{m}_i, \mathbf{m}_j)
  \;+\; 1
  \;-\; \frac{2}{N_{\mathrm{tr}}}\sum_{i=1}^{N_{\mathrm{tr}}} k(\mathbf{m}_i, \mathbf{m}_{\mathrm{inf}}).
\end{equation}
The first term is large when the training masks are similar to each other.
The last term is large when they are similar to the inference mask.
$D_{\mathrm{mask}}(t)$ is the average of this estimate over $50$ problems from the validation split, using the EMA weights at the final checkpoint.

\paragraph{Binning control.}
MDM masks have exactly $\mathrm{round}(tN)$ masked positions, whereas PU masks are grouped within $\pm 0.05$ of $t$ and therefore have varying counts.
To check that this difference does not drive the comparison, we also build MDM masks with the same procedure as PU: we draw i.i.d.\ MDM masks at random ratios and group them in the same way.
At $u{=}2$, this re-binned control differs from MDM by at most $0.0015$, so we omit it from \Cref{fig:tinygsm_dmask}; the gap between MDM and PU is therefore not an artifact of the grouping.

\clearpage
\section{Local overfitting at small training \texorpdfstring{$u$}{u}}
\label{app:local_overfitting}

This section details the TinyGSM experiments behind \Cref{fig:local_overfitting} (\Cref{sec:puma}).
We describe the runs and how we follow individual sequences (\Cref{app:lo_setup}), report the sweep over $u$ and the ablations that separate the causes of the failure (\Cref{app:lo_results}), and explain why, at $u{=}2$, the loss of a sequence rises again before the sequence leaves the batch (\Cref{app:lo_rebound}).

\subsection{Setup}
\label{app:lo_setup}

\paragraph{Runs.}
We use the model, data, optimizer and batch size of \Cref{app:setup_tinygsm}, with three changes that isolate the effect of $u$.
First, each run lasts $3{,}000$ updates: the learning rate warms up linearly over $600$ updates and then stays at $3\times10^{-4}$, so that every insertion (below) happens at the same learning rate.
Second, the number of stages $K$ is fixed within a run, with $K \in \{231, 115, 57, 7, 4\}$, that is $K \approx L/u$ for $u \in \{2, 4, 8, 64, 128\}$ and the $L \approx 460$ maskable positions of a sample.
Third, the stage-indexed construction of \Cref{app:pu_details} runs without its confidence threshold $\tau$, so that no trajectory skips a stage: every trajectory has $J = K$ decoding steps, and every sequence stays in the batch for exactly $J$ updates.
The $32$ trajectories of a GPU start at distinct stages drawn at random from $\{0,\dots,K-1\}$ when $K > 32$, and cover all stages evenly otherwise.
There is no carry and no BPTT, and MDM follows the same schedule.
Each configuration is trained once.

\paragraph{Inserted sequences.}
We follow $16$ sequences from the validation split, which are not trained on otherwise.
Twelve of them are inserted into the batch, four at each of updates $750$, $1{,}500$ and $2{,}250$, and the other four are never inserted.
Under PU, an inserted sequence takes the place of the next sequence to enter the pool of one GPU and starts at stage $0$; it is then trained on for exactly $J$ consecutive updates, as one of the $256$ sequences of each batch.
Under MDM, it replaces one sequence of one batch, for a single update.
In a separate run at $u{=}2$, we also followed the next $16$ training sequences to enter the pool, at their natural times.
They respond like the sequences inserted in the same run, with a completion loss $65\%$ lower $20$ updates after entry, so inserted sequences behave like ordinary training sequences.

\paragraph{Completion loss.}
To measure how well the model predicts a sequence, we reveal its prompt and the first half of its answer, and average the cross-entropy over the rest of the answer, up to and including its first EOS; the padding that follows is not included in the loss.
This mask is the same throughout the run, and the loss is computed with the raw weights, not the EMA, after every update.
The validation completion loss averages the same quantity over $256$ other validation sequences, computed every $20$ updates, and we report its mean over the last $200$ updates.
The relative completion loss of an inserted sequence is its completion loss divided by its mean over the $20$ updates before insertion, divided by the same ratio for the validation completion loss, which removes the drift shared by all sequences.
\Cref{fig:local_overfitting} (\textit{middle}) averages its logarithm over the $12$ inserted sequences and smooths it over $9$ updates; the bands show one standard error.

\subsection{Results}
\label{app:lo_results}

\paragraph{Sweep over $u$.}
\Cref{tab:lo_sweep} summarizes the runs of \Cref{fig:local_overfitting}, and \Cref{fig:app_local_overfitting} (\textit{left}) plots their validation completion loss against $u$.
As $u$ decreases, each sequence stays longer in the batch and a run visits fewer distinct sequences, $3.6$k at $u{=}2$ against $768$k for MDM.
The inserted sequences are then fitted more deeply while they are in the batch, and the validation completion loss is higher: PU beats MDM when each sequence stays in the batch for only a few steps, but trails it for $u \leq 8$.
At every $u$, the gain has disappeared $300$ updates after the sequence leaves the batch.
The training loss moves in the opposite direction (\Cref{fig:local_overfitting}, \textit{right}): at $u{=}2$, it falls to $0.07$, while the validation completion loss is $6.92$, the highest of the sweep.

\begin{table}[ht]
  \centering
  \caption{Runs of \Cref{fig:local_overfitting}.
  Distinct sequences: training sequences visited over the $3{,}000$ updates.
  Relative completion loss of the inserted sequences: its minimum, its value when the sequence leaves the batch ($J$ updates after insertion, one update for MDM), and its value $300$ updates later.
  Validation completion loss: mean over the last $200$ updates.}
  \label{tab:lo_sweep}
  \small
  \renewcommand{\arraystretch}{1.08}
  \begin{tabular}{@{}lrrcccc@{}}
    \toprule
    & & Distinct & \multicolumn{3}{c}{Relative completion loss} & Validation \\
    \cmidrule(lr){4-6}
    & $J$ & sequences & Minimum & At exit & $300$ later & completion loss \\
    \midrule
    PU, $u{=}2$   & $231$ & $3.6$k  & $0.54$ & $0.88$ & $0.98$ & $6.92$ \\
    PU, $u{=}4$   & $115$ & $6.9$k  & $0.70$ & $0.71$ & $0.98$ & $6.58$ \\
    PU, $u{=}8$   & $57$  & $13.7$k & $0.75$ & $0.75$ & $0.98$ & $5.72$ \\
    PU, $u{=}64$  & $7$   & $110$k  & $0.97$ & $0.99$ & $1.00$ & $4.37$ \\
    PU, $u{=}128$ & $4$   & $192$k  & $0.98$ & $1.00$ & $1.03$ & $4.36$ \\
    MDM           & ---   & $768$k  & $0.98$ & $0.98$ & $1.00$ & $4.72$ \\
    \bottomrule
  \end{tabular}
\end{table}

\begin{figure}[t]
  \centering
  \includegraphics{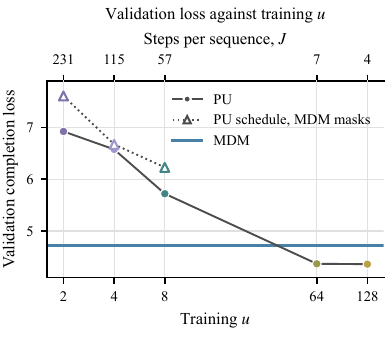}\hfill
  \includegraphics{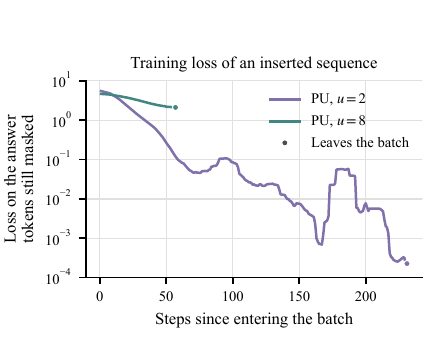}%
  \caption{Complements to \Cref{fig:local_overfitting}.
  \textit{Left}: validation completion loss at the end of training. PU beats MDM when each sequence stays in the batch for only a few steps, but degrades as $u$ shrinks, and training on the PU schedule with MDM's random masks does not help.
  \textit{Right}: loss of the trajectory of an inserted sequence on the answer tokens it still masks, while the sequence is in the batch. At $u{=}2$, the trajectory fits its answer within a quarter of its stay.}
  \label{fig:app_local_overfitting}
\end{figure}

\paragraph{Ablations.}
\Cref{tab:lo_ablations} changes one aspect of PU at a time, at $u \in \{2, 4, 8\}$, and \Cref{fig:app_lo_ablations} follows the inserted sequences under the first two changes at $u{=}2$.
\begin{itemize}[nosep, leftmargin=*]
  \item \emph{PU schedule, MDM masks.} Each sequence still stays in the batch for $J$ consecutive updates, but every update draws a fresh MDM mask for it instead of advancing its trajectory.
  The inserted sequences are fitted even more deeply, with a relative completion loss of $0.02$ at $u{=}2$ that is still $0.52$ $300$ updates after they leave, and the validation loss is as high as with PU or higher.
  The failure therefore does not come from the masks of PU (\Cref{fig:app_lo_ablations}, \textit{middle}).
  \item \emph{Spread visits.} Each GPU keeps a pool of $3{,}000 \cdot 32/J$ sequences, and each update advances the next $32$ of them along their trajectories, so that a sequence is visited once every $3{,}000/J$ updates, about $13$ at $u{=}2$.
  The trajectories, the number of visits per sequence and the number of distinct sequences are the same as with PU, but consecutive batches share no sequence.
  The validation loss is higher than with PU at every $u$: the model memorizes the whole pool over the run, with a training loss of $0.03$ at $u{=}2$, instead of one batch at a time (\Cref{fig:app_lo_ablations}, \textit{right}).
  Reuse is therefore what costs, and the consecutive visits of PU only make the overfitting local.
  \item \emph{Initial stages $0$ to $31$.} Instead of drawing the initial stages at random, the $32$ trajectories of a GPU start at stages $0$ to $31$.
  Since every trajectory lasts exactly $J$ updates, the whole pool then moves along the trajectory as a single window of $32$ consecutive stages, and all losses oscillate with period $J$.
  The validation loss is much higher at $u \leq 4$; at $u{=}8$, where the window covers $32$ of the $57$ stages, it is unchanged.
  This is the noise-level concentration of \Cref{subsec:smallu}, which here lasts for the whole run because no trajectory skips a stage.
  \item \emph{No $1/t$ factor.} Every masked token gets the same weight, instead of the per-sequence normalization of \Cref{eq:mdm}.
  This deepens the drop at $u{=}2$, to a minimum relative completion loss of $0.36$, but changes the validation loss by at most $0.16$ at every $u$, including $u \in \{64, 128\}$ and MDM.
  \item \emph{Batch of $128$ sequences.} Halving the batch makes each sequence a larger share of each update and halves the number of distinct sequences.
  The drop deepens, to $0.37$ at $u{=}2$, and the validation loss is higher at every $u$.
\end{itemize}

\begin{figure}[t]
  \centering
  \includegraphics{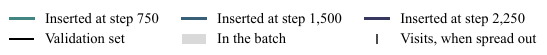}\\
  \includegraphics{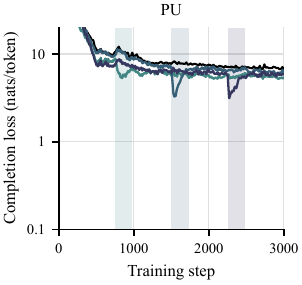}\hfill
  \includegraphics{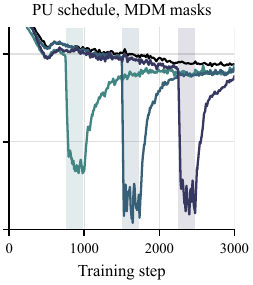}\hfill
  \includegraphics{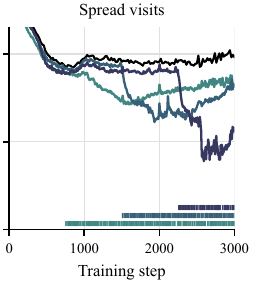}%
  \caption{Inserted sequences under two ablations of PU, at $u{=}2$.
  Each colored curve is the mean completion loss of the four sequences inserted at the same update, smoothed over $9$ updates.
  With MDM's masks, an inserted sequence is almost memorized during its stay and forgotten after it; with spread visits, the model memorizes the inserted sequences over the rest of the run while the validation loss rises.}
  \label{fig:app_lo_ablations}
\end{figure}

\begin{table}[ht]
  \centering
  \caption{Validation completion loss when one aspect of PU is changed at a time (MDM: $4.72$).}
  \label{tab:lo_ablations}
  \small
  \renewcommand{\arraystretch}{1.08}
  \begin{tabular}{@{}lccc@{}}
    \toprule
    & $u{=}2$ & $u{=}4$ & $u{=}8$ \\
    \midrule
    PU                         & $6.92$ & $6.58$ & $5.72$ \\
    PU schedule, MDM masks     & $7.61$ & $6.67$ & $6.23$ \\
    Spread visits              & $9.70$ & $7.18$ & $6.03$ \\
    Initial stages $0$ to $31$ & $9.71$ & $7.64$ & $5.73$ \\
    No $1/t$ factor            & $7.08$ & $6.44$ & $5.81$ \\
    Batch of $128$ sequences   & $8.22$ & $7.12$ & $6.24$ \\
    \bottomrule
  \end{tabular}
\end{table}

\subsection{Why the loss rises again before the sequence leaves the batch}
\label{app:lo_rebound}

At $u{=}2$, the relative completion loss of an inserted sequence bottoms out about $50$ updates after insertion, and three quarters of the drop are gone when the sequence leaves the batch, $231$ updates after insertion (\Cref{fig:local_overfitting}, \textit{middle}).
At $u{=}4$ and $u{=}8$, it keeps falling until the sequence leaves (\Cref{tab:lo_sweep}).
Diagnostics logged in further runs at the same settings explain this difference.

\paragraph{The trajectory fits its answer early.}
While an inserted sequence is in the batch, we log the loss of its trajectory on the answer tokens that are still masked, which is what training minimizes on this sequence, restricted to its answer (\Cref{fig:app_local_overfitting}, \textit{right}).
At $u{=}2$, this loss falls below $0.1$ nats $55$ updates after insertion, while $95\%$ of the answer is revealed only after $137$ updates.
From then on, the sequence contributes almost no gradient, and the updates on the other sequences of the batch erase what it had taught the model.
At $u{=}4$, the trajectory fits its answer only $112$ updates into its $115$-update stay, and at $u{=}8$ it does not fit it at all ($1.7$ nats when the sequence leaves), so the completion loss keeps falling until the sequence leaves and rises only afterwards.
Revealing positions in a random order instead of by confidence still fits the trajectory early, $26$ updates after insertion at $u{=}2$, and keeps a smaller rebound, so the reveal order is not its cause.

\paragraph{A revealed token is no longer maintained.}
In two further runs at $u{=}2$ with $24$ inserted sequences, we record the stage at which each position is revealed, and follow the completion loss at each position of the second half of the answer, whose tokens are the targets of the completion loss.
Grouped by the stage at which they are revealed, the tokens of each group start degrading at their own reveal (\Cref{fig:app_lo_reveal}, \textit{left}).
Under confidence order, the $1{,}179$ tokens revealed between stages $60$ and $170$ keep improving while they are still masked, by $0.62 \pm 0.11$ nats between the bottom of the curve and their reveal, and their loss rises by $1.10 \pm 0.07$ nats in the $50$ updates after their reveal (\Cref{fig:app_lo_reveal}, \textit{right}).
Under random order, masked and revealed tokens both degrade by about $0.45$ nats per $50$ updates once the trajectory has fitted its answer, and the reveal does not change this rate.
A token's prediction is therefore maintained only while the token remains a target that the trajectory has not fitted yet.
Under confidence order, the tokens that remain masked are the ones the model is least confident about, so they keep receiving gradient while the revealed ones are forgotten.

\begin{figure}[t]
  \centering
  \includegraphics{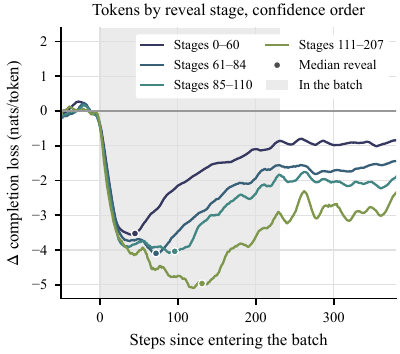}\hfill
  \includegraphics{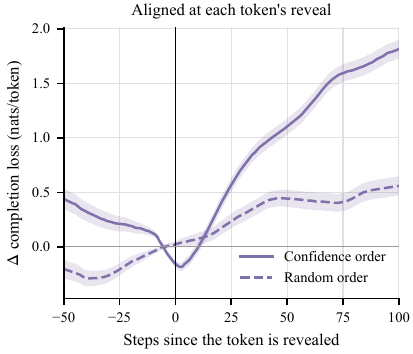}%
  \caption{A token is forgotten once its trajectory reveals it ($u{=}2$).
  Tokens: the second-half answer tokens of the inserted sequences, the targets of the completion loss.
  \textit{Left}: tokens grouped into quartiles of the stage at which they are revealed; change in their completion loss relative to before insertion.
  \textit{Right}: tokens revealed at stages $60$ to $170$; change relative to the $10$ updates before their reveal (mean $\pm$ one standard error).
  Under confidence order, each group degrades from its own reveal on; under random order, the reveal makes no difference, and the four groups of the left panel (not shown) coincide before insertion and do not turn up at their reveal.}
  \label{fig:app_lo_reveal}
\end{figure}

\FloatBarrier

\clearpage
\section{Theory for the carry and the BPTT window}
\label{app:theory}

\paragraph{Setting.}
We use the notation of \Cref{sec:background,sec:exp_tinygsm}.
The prompt is fixed and omitted from the notation.
Let $\Omega$ be the set of response positions.
For a set of positions $A$, we write $\mathbf{x}^A = (x^i)_{i \in A}$.
Let $p^\star$ be the data distribution of the clean response $\mathbf{x}_0^\Omega$.
The carry-augmented denoiser (\Cref{eq:carry}) returns a distribution per position and a new carry:
\[
  f_{\mathrm{carry}}(\mathbf{x}, h) = \bigl(f_\theta(\cdot \mid \mathbf{x}, h),\, h'\bigr).
\]
The sampler starts from $\mathbf{x}_{t_0}$, with the response fully masked, and $h_0 = 0$.
At each step $j$, it proceeds as follows.
\begin{enumerate}[nosep, leftmargin=*]
  \item It runs the denoiser on the current sequence and carry:
  \[
    \bigl(f_\theta(\cdot \mid \mathbf{x}_{t_j}, h_j),\, h_{j+1}\bigr) = f_{\mathrm{carry}}(\mathbf{x}_{t_j}, h_j).
  \]
  \item The unmasking policy selects $S_j = g(\mathbf{x}_{t_j}, h_j)$, the $u$ most confident positions of $\mathcal{M}(\mathbf{x}_{t_j})$ (or all remaining ones, if fewer), breaking ties by index.
  \item It fills each selected position independently:
  \[
    x^i \sim f_\theta^i(\cdot \mid \mathbf{x}_{t_j}, h_j), \qquad i \in S_j .
  \]
\end{enumerate}
We write $A_j = \Omega \setminus \mathcal{M}(\mathbf{x}_{t_j})$ for the response positions revealed before step $j$.
The sampler stops after $J$ steps, when no masked position remains, and we denote by $p_\theta$ the resulting distribution over responses.
Training follows the same policy $g$ but commits ground-truth tokens (\Cref{alg:train}).
We write $\mathcal{L}^{(j)}$ for the loss at step $j$, and the total loss is
\[
  \mathcal{L}_\infty = \sum_{j<J} \mathcal{L}^{(j)} .
\]

\begin{assumption}[Denoiser]
\label{as:denoiser}
The denoiser never predicts the mask token and puts positive mass on every other token: for all $\theta$, $\mathbf{x}$, $h$ and $i$,
\[
  f_\theta^i(m \mid \mathbf{x}, h) = 0,
  \qquad
  f_\theta^i(v \mid \mathbf{x}, h) > 0 \quad \text{for } v \in \mathcal{V} \setminus \{m\}.
\]
Moreover, for every fixed input sequence $\mathbf{x}$, the map $(\theta, h) \mapsto f_{\mathrm{carry}}(\mathbf{x}, h)$ is real-analytic.
\end{assumption}

The first part holds for a softmax restricted to $\mathcal{V} \setminus \{m\}$, i.e., with the mask logit set to $-\infty$.
The second part holds for transformers built from analytic operations, such as attention, softmax, SiLU or GELU activations, and LayerNorm or RMSNorm with $\epsilon > 0$, but not for ReLU activations.
Since the data contain no mask token, $p^\star$ is supported on $(\mathcal{V} \setminus \{m\})^\Omega$, which is finite.

\paragraph{Regular parameters.}
Since $S_j$ depends on $\theta$ through the confidences, $\mathcal{L}_\infty$ can jump where a selection changes, and it is not differentiable there.
We call $\theta$ \emph{regular} if, for every $\mathbf{x}_0$ in the support of $p^\star$ and every step $j$ of its teacher-forced trajectory with $|\mathcal{M}(\mathbf{x}_{t_j})| > u$, the $u$-th largest confidence in $\mathcal{M}(\mathbf{x}_{t_j})$ is strictly larger than the $(u{+}1)$-th.
The confidences are continuous in $\theta$ (\Cref{as:denoiser}) and the support of $p^\star$ is finite, so, by induction on $j$, every set $S_j$ is constant in a neighborhood of a regular $\theta$.
In that neighborhood, $\mathcal{L}_\infty$ is a finite sum of analytic functions, and $\nabla\mathcal{L}_\infty(\theta)$ is the gradient obtained by backpropagating through all carries with the sets $S_j$ held fixed.

\begin{assumption}[Non-degenerate selection]
\label{as:ties}
The set of non-regular $\theta$ has Lebesgue measure zero.
\end{assumption}

This holds, for instance, if no two token probabilities are identical as functions of $\theta$: by \Cref{as:denoiser}, each of them is real-analytic in $\theta$, and the zero set of a real-analytic function that is not identically zero has Lebesgue measure zero~\citep{mityagin2015zero}.

For simplicity, we assume that every trajectory has $J$ steps and that $W$ divides $J$.
Training splits the trajectory into consecutive windows of $W$ steps and detaches the carry at each window boundary (\Cref{alg:train}).
At a regular $\theta$, we denote by $g_W(\theta)$ the resulting truncated gradient, with the sets $S_j$ held fixed; in particular, $g_J = \nabla\mathcal{L}_\infty$.

\begin{proposition}[Likelihood identity]
\label{prop:nll}
Define the teacher-forced loss on revealed positions,
\[
  \widetilde{\mathcal{L}}(\theta) = \mathbb{E}_{\mathbf{x}_0 \sim p^\star}\Bigl[\sum_{j<J}\sum_{i \in S_j} -\log f_\theta^i\bigl(x_0^i \mid \mathbf{x}_{t_j}, h_j\bigr)\Bigr],
\]
where the trajectory is built from $\mathbf{x}_0$ by teacher forcing.
Then
\[
  \widetilde{\mathcal{L}}(\theta) = H(p^\star) + \mathrm{KL}\bigl(p^\star \,\Vert\, p_\theta\bigr).
\]
\end{proposition}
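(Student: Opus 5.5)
The plan is to show that, for every fixed clean response $\mathbf{x}_0$ in the support of $p^\star$, the inner sum in $\widetilde{\mathcal{L}}$ equals $-\log p_\theta(\mathbf{x}_0^\Omega)$ exactly. Taking the expectation under $p^\star$ then gives $\mathbb{E}_{p^\star}[-\log p_\theta] = H(p^\star) + \mathrm{KL}(p^\star \Vert p_\theta)$, which is the standard decomposition of cross-entropy. That last step is legitimate because the support of $p^\star$ is finite and, by \Cref{as:denoiser}, $p_\theta > 0$ on $(\mathcal{V}\setminus\{m\})^\Omega$, so every term is finite.

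The core step is to compute $p_\theta(\mathbf{x}_0^\Omega)$ by decomposing over sampler runs. The sampler is a Markov chain on pairs $(\mathbf{x}_{t_j}, h_j)$. Given the current state, the selection $S_j = g(\mathbf{x}_{t_j}, h_j)$ and the next carry $h_{j+1}$ are deterministic functions of that state, with ties broken by index. Only the filled tokens are random, and they are drawn independently across $i \in S_j$. Since committed positions are never revisited, a run outputs $\mathbf{x}_0^\Omega$ if and only if every filled token agrees with $\mathbf{x}_0$. I will then argue by induction on $j$ that, on this event, the state $(\mathbf{x}_{t_j}, h_j)$ coincides with the teacher-forced state built from $\mathbf{x}_0$ in \Cref{alg:train}. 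This holds because both start from the fully masked sequence with $h_0 = 0$, both apply the same $g$ and the same deterministic carry update, and both commit the same tokens $x_0^i$ at the same positions. Hence there is exactly one sampler path producing $\mathbf{x}_0^\Omega$, namely the teacher-forced trajectory, and its probability is
\[
p_\theta(\mathbf{x}_0^\Omega) = \prod_{j<J} \prod_{i\in S_j} f_\theta^i\bigl(x_0^i \mid \mathbf{x}_{t_j}, h_j\bigr).
\]
Taking $-\log$ of this product yields the inner sum of $\widetilde{\mathcal{L}}$.

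The subtle point I expect is making sure the selection and carry really are path-deterministic. In particular, $g$ must depend only on quantities fixed before the step's sampling, namely the confidences computed from $(\mathbf{x}_{t_j}, h_j)$, and not on the sampled tokens. Otherwise distinct sampling paths could reach $\mathbf{x}_0$ through different orders, and $p_\theta$ would become a sum over paths rather than a single product. The sampler definition in the Setting paragraph guarantees this determinism, since ties are broken by index.

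It also matters that the sets $S_j$ partition $\Omega$ along the trajectory. Each step selects only masked positions and the trajectory ends after $J$ steps with nothing masked, so every position of $\mathbf{x}_0$ is filled exactly once. This is what makes the product cover each coordinate exactly once, with no double counting or missing factors.
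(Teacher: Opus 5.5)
Your proposal is correct and follows the paper's proof essentially step for step. The paper likewise shows by induction that $(\mathbf{x}_{t_j}, h_j)$ and $S_j$ are deterministic functions of the revealed values, so each response comes from exactly one sampling path whose probability is the product of the teacher-forced factors, and then takes $-\log$ and the expectation under $p^\star$. Your explicit remarks on the path-determinism of $g$ and on the sets $S_j$ partitioning $\Omega$ spell out what the paper's induction uses implicitly.
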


\begin{proof}
By induction on $j$, the pair $(\mathbf{x}_{t_j}, h_j)$ and the set $S_j$ are deterministic functions of the revealed values $\mathbf{x}^{A_j}$.
Indeed, the initial state depends only on the prompt, and the state at step $j{+}1$ is computed from the state at step $j$ and the values written at $S_j$.
Hence each response $\mathbf{x}^\Omega$ is produced by exactly one sampling path, namely the one that selects the sets $S_0, S_1, \ldots$ computed from $\mathbf{x}$.
Since the draws within a step are independent,
\[
  p_\theta(\mathbf{x}^\Omega) = \prod_{j<J}\prod_{i \in S_j} f_\theta^i\bigl(x^i \mid \mathbf{x}_{t_j}, h_j\bigr).
\]
By \Cref{as:denoiser}, the sampler never writes the mask token, so $p_\theta$ is a probability distribution on $(\mathcal{V} \setminus \{m\})^\Omega$, which contains the support of $p^\star$.
Taking $-\log$ and the expectation under $p^\star$ gives the claim.
All terms are finite because $f_\theta^i(v \mid \cdot) > 0$ for every $v \neq m$ (\Cref{as:denoiser}).
\end{proof}

\begin{assumption}[Contractive carry]
\label{as:contract}
There exist constants $\gamma < 1$, $C_h$ and $C_L$ such that, for all $\theta$, every $\mathbf{x}_0$ in the support of $p^\star$ and every step $j$ of its teacher-forced trajectory,
\[
  \Bigl\|\frac{\partial h_{j+1}}{\partial h_j}\Bigr\| \le \gamma,
  \qquad
  \Bigl\|\frac{\partial h_{j+1}}{\partial \theta}\Bigr\| \le C_h,
  \qquad
  \Bigl\|\frac{\partial \mathcal{L}^{(j)}}{\partial h_j}\Bigr\| \le C_L .
\]
\end{assumption}

\begin{proposition}[Truncation bias]
\label{prop:trunc}
Under \Cref{as:denoiser,as:contract}, for every regular $\theta$,
\[
  \bigl\|g_W(\theta) - \nabla\mathcal{L}_\infty(\theta)\bigr\| \le B\Bigl(\frac{J}{W} - 1\Bigr),
  \qquad
  B = \frac{C_L C_h}{(1-\gamma)^2}.
\]
\end{proposition}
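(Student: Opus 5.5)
The plan is to write both gradients as explicit sums over paths through the carry chain, using the chain rule, and bound their difference term by term using the contraction factor $\gamma$.

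\textbf{Setup.} Fix a regular $\theta$ and an $\mathbf{x}_0$ in the support of $p^\star$. By regularity, every set $S_j$ is constant in a neighborhood of $\theta$. With teacher forcing, the token sequences $\mathbf{x}_{t_j}$ therefore do not depend on $\theta$ locally. Hence the only $\theta$-dependence of $\mathcal{L}^{(j)}$ is direct, or passes through $h_j$. Because $h_0=0$ is independent of $\theta$, the chain rule (valid by the analyticity in \Cref{as:denoiser}) gives
\[
  \frac{d\mathcal{L}^{(j)}}{d\theta} = \frac{\partial \mathcal{L}^{(j)}}{\partial\theta} + \frac{\partial \mathcal{L}^{(j)}}{\partial h_j}\sum_{k<j}\Bigl(\prod_{l=k+1}^{j-1}\frac{\partial h_{l+1}}{\partial h_l}\Bigr)\frac{\partial h_{k+1}}{\partial \theta}.
\]
Summing over $j$ and taking the expectation over $\mathbf{x}_0$ gives $\nabla\mathcal{L}_\infty$.

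\textbf{Truncated gradient.} Detaching the carry at window boundaries has a simple effect on this sum. For a step $j$ in the window starting at index $s(j)$, $g_W$ keeps exactly the terms with $k \ge s(j)$. At $k=s(j)-1$, the product $\partial h_{s}/\partial\theta$ is cut by the stop-gradient, and so are all earlier terms. The difference $g_W-\nabla\mathcal{L}_\infty$ is therefore the sum over $j$ of the dropped terms with $k<s(j)$.

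\textbf{Bound.} By \Cref{as:contract}, each dropped term has norm at most $C_L C_h\gamma^{j-1-k}$. Summing over $k<s(j)$ gives at most $C_L C_h\gamma^{j-s(j)}/(1-\gamma)$. Within a window, $j-s(j)$ ranges over $0,\dots,W-1$, so the window contributes at most $C_L C_h/(1-\gamma)^2$. The first window has $s=0$ and drops nothing. Only the remaining $J/W-1$ windows contribute, which yields $B(J/W-1)$ for each $\mathbf{x}_0$. The bound survives the expectation over $p^\star$ by the triangle inequality, since the support is finite.

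\textbf{Main obstacle.} The delicate point is justifying that the gradients may be computed with the sets $S_j$ held fixed. This requires the selections to be locally constant at $\theta$, which is exactly the regularity argument given before the statement. It also requires that $\mathbf{x}_{t_j}$ carries no gradient, which holds because commits are teacher-forced. The rest is a geometric-series computation.
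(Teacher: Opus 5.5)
Your proposal is correct and follows the paper's proof essentially step for step. You unroll the carry so that the gradient of $\mathcal{L}^{(j)}$ becomes a sum over lags, bound each lag-$r$ term by $C_L C_h\gamma^{r-1}$, and note that truncation drops exactly the lags beyond the step's position in its window (your $k<s(j)$). The geometric tails then sum to $\frac{C_L C_h}{1-\gamma}\gamma^{j-s(j)}$ per step and at most $B$ per window, and the first window drops nothing. Your explicit treatment of the expectation over $\mathbf{x}_0$, via the triangle inequality on the finite support, is a small detail the paper leaves implicit.
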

\begin{proof}
Since $\theta$ is regular, both $g_W(\theta)$ and $\nabla\mathcal{L}_\infty(\theta)$ are computed with the sets $S_j$ held fixed, and they differ only in the carry paths that cross a window boundary.
Unrolling the carry, the gradient of $\mathcal{L}^{(j)}$ contains, for each lag $1 \le r \le j$, the term
\[
  \frac{\partial \mathcal{L}^{(j)}}{\partial h_j}
  \Bigl(\prod_{s=1}^{r-1}\frac{\partial h_{j-s+1}}{\partial h_{j-s}}\Bigr)
  \frac{\partial h_{j-r+1}}{\partial\theta}.
\]
By \Cref{as:contract}, its norm is at most $C_L C_h \gamma^{r-1}$.
Let $p \in \{0, \ldots, W-1\}$ be the position of step $j$ in its window.
The truncated gradient keeps exactly the lags $r \le p$, so the dropped terms of step $j$ sum to at most
\[
  \sum_{r > p} C_L C_h\, \gamma^{r-1} = \frac{C_L C_h}{1-\gamma}\,\gamma^{p}.
\]
In the first window, $p = j$ and nothing is dropped.
Each of the remaining $J/W - 1$ windows contributes at most
\[
  \frac{C_L C_h}{1-\gamma}\sum_{p=0}^{W-1}\gamma^{p} \;\le\; \frac{C_L C_h}{(1-\gamma)^2} = B. \qedhere
\]
\end{proof}

\begin{definition}[Polyak--{\L}ojasiewicz inequality]
\label{def:pl}
Let $F$ be a function with infimum $F^\star = \inf_\theta F(\theta)$.
It satisfies the Polyak--{\L}ojasiewicz inequality at $\theta$ with constant $\mu > 0$ if $F$ is differentiable at $\theta$ and
\[
  \tfrac{1}{2}\,\|\nabla F(\theta)\|^2 \;\ge\; \mu\,\bigl(F(\theta) - F^\star\bigr).
\]
It satisfies the Polyak--{\L}ojasiewicz inequality~\citep{karimi2016linear} if this holds at every $\theta$.
\end{definition}

The PL inequality does not require convexity.
When it holds everywhere, the gradient is small only near the minimum value, and every stationary point is a global minimizer.
Since $\mathcal{L}_\infty$ jumps where the selected sets change, it cannot satisfy the inequality everywhere in general, and \Cref{thm:window} only requires it at the point of interest.

\begin{theorem}[Window and approximation]
\label{thm:window}
Assume \Cref{as:denoiser,as:contract} and $\mathcal{L}_\infty = \widetilde{\mathcal{L}}$.
Let $\theta_W$ be a regular point with $g_W(\theta_W) = 0$, at which $\mathcal{L}_\infty$ satisfies the PL inequality (\Cref{def:pl}) with constant $\mu$.
Then
\[
  \mathrm{KL}\bigl(p^\star \,\Vert\, p_{\theta_W}\bigr) - \inf_\theta \mathrm{KL}\bigl(p^\star \,\Vert\, p_\theta\bigr)
  \;\le\; \frac{B^2}{2\mu}\Bigl(\frac{J}{W} - 1\Bigr)^2 .
\]
The bound decreases as $W$ grows and vanishes at $W = J$.
\end{theorem}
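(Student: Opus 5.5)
The plan is to combine the three earlier results in the obvious chain: Proposition~\ref{prop:nll} converts the loss gap into a KL gap, the PL inequality converts the loss gap into a gradient norm, and Proposition~\ref{prop:trunc} bounds that gradient norm at a point where the truncated gradient vanishes.

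\textbf{Step 1: from the KL gap to a loss gap.} Since $\mathcal{L}_\infty = \widetilde{\mathcal{L}}$ by assumption, Proposition~\ref{prop:nll} gives, for every $\theta$,
\[
  \mathcal{L}_\infty(\theta) = H(p^\star) + \mathrm{KL}(p^\star \Vert p_\theta).
\]
The entropy $H(p^\star)$ is finite because the support is finite, and it does not depend on $\theta$. Taking the infimum over $\theta$ therefore gives $\mathcal{L}_\infty^\star = H(p^\star) + \inf_\theta \mathrm{KL}(p^\star\Vert p_\theta)$. Subtracting, the left-hand side of the theorem equals $\mathcal{L}_\infty(\theta_W) - \mathcal{L}_\infty^\star$. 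Here $\mathcal{L}_\infty^\star$ is the infimum of Definition~\ref{def:pl}, taken over all $\theta$, including non-regular ones where $\mathcal{L}_\infty$ jumps. This causes no difficulty, because the identity holds pointwise for every $\theta$.

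\textbf{Step 2: PL at $\theta_W$.} Since $\theta_W$ is regular, $\mathcal{L}_\infty$ is differentiable there, as argued in the paragraph on regular parameters. The PL inequality then gives
\[
  \mathcal{L}_\infty(\theta_W) - \mathcal{L}_\infty^\star \le \frac{1}{2\mu}\|\nabla \mathcal{L}_\infty(\theta_W)\|^2 .
\]

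\textbf{Step 3: bound the true gradient by the truncation bias.} Because $g_W(\theta_W) = 0$, we have
\[
  \|\nabla\mathcal{L}_\infty(\theta_W)\| = \|g_W(\theta_W) - \nabla\mathcal{L}_\infty(\theta_W)\| \le B\Bigl(\frac{J}{W}-1\Bigr)
\]
by Proposition~\ref{prop:trunc}, which applies at the regular point $\theta_W$ under Assumptions~\ref{as:denoiser} and~\ref{as:contract}. Squaring this and inserting it into Step~2 yields the claimed bound $\frac{B^2}{2\mu}(J/W-1)^2$.

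\textbf{Monotonicity.} The bound decreases in $W$ over the admissible range $W \le J$, since $J/W - 1 \ge 0$ there. At $W=J$ it vanishes, consistent with $g_J = \nabla\mathcal{L}_\infty$.

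\textbf{Main obstacle.} The only delicate point is that the chain is applied pointwise at the single point $\theta_W$, not along an optimization path. I would therefore check two things. First, regularity of $\theta_W$ is what makes both $g_W(\theta_W)$ and $\nabla\mathcal{L}_\infty(\theta_W)$ well defined with the same fixed sets $S_j$, so Proposition~\ref{prop:trunc} compares like with like. Second, Step~1 is consistent with the infimum in the PL inequality ranging over non-regular parameters, which holds because Proposition~\ref{prop:nll} is an exact identity for every $\theta$.
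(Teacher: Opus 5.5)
Your proposal is correct and follows the paper's proof: since $g_W(\theta_W)=0$, \Cref{prop:trunc} bounds $\|\nabla\mathcal{L}_\infty(\theta_W)\|$ by $B(J/W-1)$, the PL inequality at $\theta_W$ turns this into a bound on the loss gap, and \Cref{prop:nll} identifies that gap with the KL gap, because the two differ only by the constant $H(p^\star)$. Your two checks are sound: regularity makes both gradients use the same fixed sets $S_j$, and the pointwise identity covers the infimum over non-regular $\theta$. One caveat, which the paper shares: the claim that the bound decreases in $W$ implicitly takes the PL constant $\mu$ at $\theta_W$ to be independent of $W$, which is what \Cref{cor:ar} assumes explicitly.
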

\begin{proof}
Since $\theta_W$ is regular and $g_W(\theta_W) = 0$, \Cref{prop:trunc} gives
\[
  \|\nabla\mathcal{L}_\infty(\theta_W)\| \le B\Bigl(\frac{J}{W} - 1\Bigr).
\]
The PL inequality at $\theta_W$ then yields
\[
  \mathcal{L}_\infty(\theta_W) - \inf_\theta \mathcal{L}_\infty
  \;\le\; \frac{\|\nabla\mathcal{L}_\infty(\theta_W)\|^2}{2\mu}
  \;\le\; \frac{B^2}{2\mu}\Bigl(\frac{J}{W} - 1\Bigr)^2 .
\]
By \Cref{prop:nll}, $\mathcal{L}_\infty$ and the KL differ only by the constant $H(p^\star)$, which gives the claim.
\end{proof}

\begin{corollary}[Comparison with an autoregressive model]
\label{cor:ar}
Let $q$ be a distribution over responses, for instance that of an autoregressive model, with error $\delta = \mathrm{KL}(p^\star \Vert q) > 0$, and let $\varepsilon^\star = \inf_\theta \mathrm{KL}(p^\star \Vert p_\theta)$.
Assume that, for every window $W$ dividing $J$, the hypotheses of \Cref{thm:window} hold at $\theta_W$ with the same constant $\mu$.
If $\varepsilon^\star < \delta$, then $\mathrm{KL}(p^\star \Vert p_{\theta_W}) < \delta$ for every such $W$ with
\[
  W \;>\; \frac{J}{1 + \sqrt{2\mu(\delta - \varepsilon^\star)}/B},
\]
and $W = J$ always qualifies.
In particular, under the zero-infimum condition $\varepsilon^\star = 0$, a large enough window beats $q$ for every $\delta > 0$.
\end{corollary}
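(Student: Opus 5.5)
The plan is to apply \Cref{thm:window} separately for each admissible window $W$ and then solve the resulting inequality for $W$. Fix $W$ dividing $J$. By assumption, the hypotheses of \Cref{thm:window} hold at $\theta_W$ with a common constant $\mu$. The constant $B = C_L C_h/(1-\gamma)^2$ of \Cref{prop:trunc} depends only on \Cref{as:contract}, so it is also the same for every $W$. The theorem therefore gives
\[
  \mathrm{KL}\bigl(p^\star \,\Vert\, p_{\theta_W}\bigr) \le \varepsilon^\star + \frac{B^2}{2\mu}\Bigl(\frac{J}{W}-1\Bigr)^2 .
\]
It thus suffices to show that the right-hand side is strictly below $\delta$ whenever $W$ exceeds the stated threshold.

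Write $a = \sqrt{2\mu(\delta-\varepsilon^\star)}/B$, which is strictly positive since $\varepsilon^\star<\delta$, $\mu>0$ and $B$ is finite. We may assume $B>0$: if $B=0$, the bound already equals $\varepsilon^\star<\delta$ for every $W$. Since $W \le J$, the quantity $J/W - 1$ is nonnegative. Hence the condition
\[
  \frac{B^2}{2\mu}\Bigl(\frac{J}{W}-1\Bigr)^2 < \delta - \varepsilon^\star
\]
is equivalent, after taking nonnegative square roots, to $J/W - 1 < a$. This rearranges to $W > J/(1+a)$, which is exactly the stated threshold. Combining with the display above yields $\mathrm{KL}(p^\star\Vert p_{\theta_W}) < \delta$.

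For $W=J$, the bound term vanishes, so $\mathrm{KL}(p^\star\Vert p_{\theta_J}) \le \varepsilon^\star < \delta$. Consistently, $J > J/(1+a)$ holds since $a>0$. Under $\varepsilon^\star=0$, every $\delta>0$ satisfies $\varepsilon^\star<\delta$, so the previous case applies and $W=J$ (or any $W$ above the threshold) beats $q$.

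The only delicate points are bookkeeping. First, $B$ must be independent of $W$; this is immediate from \Cref{prop:trunc}. Second, the square-root step requires $J/W-1\ge 0$, which holds because $W$ divides $J$.
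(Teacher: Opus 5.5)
Your proposal is correct and follows the paper's proof. You apply \Cref{thm:window} to get $\mathrm{KL}(p^\star \Vert p_{\theta_W}) \le \varepsilon^\star + \frac{B^2}{2\mu}(J/W-1)^2$, rearrange the threshold condition into $J/W - 1 < \sqrt{2\mu(\delta-\varepsilon^\star)}/B$, and note that $W=J$ satisfies it; your extra checks ($B$ is independent of $W$, $J/W-1\ge 0$ justifies taking square roots, and the case $B=0$) spell out details the paper leaves implicit.
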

\begin{proof}
By \Cref{thm:window}, $\mathrm{KL}(p^\star \Vert p_{\theta_W}) \le \varepsilon^\star + \frac{B^2}{2\mu}\bigl(\frac{J}{W} - 1\bigr)^2$.
The condition on $W$ is equivalent to $\frac{J}{W} - 1 < \sqrt{2\mu(\delta - \varepsilon^\star)}/B$, so the second term is below $\delta - \varepsilon^\star$.
Since the right-hand side of the condition is smaller than $J$, $W = J$ satisfies it.
\end{proof}

The zero-infimum condition requires the carry-augmented sampler to approximate $p^\star$ arbitrarily well under the reveal rule $g$.
For $u = 1$, the chain rule makes this a capacity assumption.
For $u > 1$, each step draws its $u$ tokens independently given the state, so $p_\theta$ factorizes within each step and $\varepsilon^\star$ is generally positive; the corollary then applies only to AR models with $\delta > \varepsilon^\star$.

\paragraph{Scope of the bound.}
\label{app:theory_scope}
Both parts assume that training reveals positions as inference does. Part (i) holds along any teacher-forced trajectory, so it does not depend on the training $u$. Part (ii), however, bounds the sampler that decodes with the \emph{training} reveal rule, including its $u$ and its number of steps $J$. Local overfitting forces us to train at a much larger $u$ than we decode with (effective training $u \approx 133$ to $67$ on TinyGSM, against $u{=}2$ at inference). So the bound does not directly cover the sampler we evaluate, which takes more steps and visits different states. We read it as a guarantee for an idealized, aligned setting. Whether the benefit carries over to the smaller inference $u$ is an empirical question: \Cref{fig:tinygsm_dmask} addresses it for masking patterns, and \Cref{tab:tinygsm_compose} for accuracy.

\ifapplebuild\needspace{14\baselineskip}\fi %
\begin{theorem}[Credit assignment]
\label{thm:credit}
Assume \Cref{as:denoiser}.
Fix $\mathbf{x}_0$ in the support of $p^\star$ and its teacher-forced trajectory.
Let step $j$ produce the carry $h_{j+1}$, and consider the loss of step $j+r$, $r \ge 1$ steps later.
\begin{enumerate}[label=(\roman*), nosep, leftmargin=*]
  \item If the only state passed between steps is the committed tokens (no carry), then at every regular $\theta$, and hence under \Cref{as:ties} for almost every $\theta$, the total derivative $\mathrm{d}\mathcal{L}^{(j+r)}/\mathrm{d}\theta$ equals the partial derivative with the trajectory held fixed.
  \item With the carry, let $\theta$ be regular and let $\sigma = (S_0, \ldots, S_{j+r-1})$ be the sets it selects.
  If steps $j$ and $j+r$ lie in the same window, the truncated gradient of $\mathcal{L}^{(j+r)}$ contains the term
  \[
    G_{j,r}^{\sigma}(\theta) = \frac{\partial\mathcal{L}^{(j+r)}}{\partial h_{j+r}}
    \Bigl(\prod_{s=1}^{r-1}\frac{\partial h_{j+r-s+1}}{\partial h_{j+r-s}}\Bigr)
    \frac{\partial h_{j+1}}{\partial\theta},
  \]
  where the product is ordered from $s=1$ on the left to $s=r-1$ on the right, and all factors are evaluated along the trajectory defined by $\sigma$.
  As a function of $\theta$ with $\sigma$ fixed, $G_{j,r}^{\sigma}$ is real-analytic on the whole parameter space.
  Hence $G_{j,r}^{\sigma}$ is either identically zero or nonzero for almost every $\theta$, and the second case holds as soon as $G_{j,r}^{\sigma}(\theta') \neq 0$ at a single $\theta'$.
  If steps $j$ and $j+r$ lie in different windows, this term is absent.
  In particular, the loss of a step reaches only earlier steps of its own window, so $r \le W-1$.
\end{enumerate}
\end{theorem}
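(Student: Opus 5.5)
Both parts rest on the observation already made in the paragraph on regular parameters. At a regular $\theta$, induction on $j$ over the finitely many teacher-forced trajectories shows that every selected set $S_j$ is locally constant in $\theta$. On such a neighborhood, the whole trajectory is a fixed composition of analytic maps (\Cref{as:denoiser}), with the committed tokens written in from $\mathbf{x}_0$.

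\emph{Part (i).} Without a carry, the input to step $j+r$ is the sequence $\mathbf{x}_{t_{j+r}}$. This sequence is determined by $\mathbf{x}_0$ and by the sets $S_0,\ldots,S_{j+r-1}$, since committed positions receive ground-truth tokens, not sampled ones. Near a regular $\theta$ these sets do not change, so $\mathbf{x}_{t_{j+r}}$ is locally constant in $\theta$. Hence $\mathcal{L}^{(j+r)}(\theta)=\ell_\theta(\mathbf{x}_0,\mathbf{x}_{t_{j+r}})$ with its second argument frozen on that neighborhood. The total derivative therefore equals the partial derivative with the trajectory held fixed. \Cref{as:ties} turns ``every regular $\theta$'' into ``almost every $\theta$''. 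The one point to make explicit is that no chain-rule term can pass through a piecewise constant selection: its derivative is zero on the open set, not merely undefined.

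\emph{Part (ii).} I would expand the chain rule for $\mathcal{L}^{(j+r)}$ through the carry recursion $h_{s+1}=h_{s+1}(\theta,h_s;\mathbf{x}_{t_s})$, with $\sigma$ fixed. This is the same unrolling as in the proof of \Cref{prop:trunc}. The term for lag $r$ is exactly $G_{j,r}^\sigma$: it differentiates the loss with respect to $h_{j+r}$, passes through $r-1$ carry Jacobians, and ends at the explicit dependence of $h_{j+1}$ on $\theta$. Truncated BPTT keeps this path precisely when no detach occurs between steps $j+1$ and $j+r$, that is, when both steps lie in one window. Otherwise $\mathrm{sg}$ removes it, which gives the window claim and $r\le W-1$.

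For analyticity, I would fix $\sigma$ and define the trajectory globally in $\theta$ by committing $x_0^i$ on the prescribed sets. The sequences $\mathbf{x}_{t_s}$ are then constant, and each $h_s$ is a composition of the real-analytic maps $(\theta,h)\mapsto f_{\mathrm{carry}}(\mathbf{x}_{t_s},h)$. Every factor of $G^\sigma_{j,r}$ is a partial derivative of an analytic function evaluated along analytic curves. The loss factor is $-\log$ of a strictly positive analytic probability (\Cref{as:denoiser}), so it is analytic as well. Products of these are analytic on all of parameter space. The dichotomy then follows from the identity theorem together with the cited fact that the zero set of a real-analytic function that is not identically zero has measure zero \citep{mityagin2015zero}. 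For a vector-valued $G$, I would apply this to each coordinate: if some coordinate is nonzero at $\theta'$, that coordinate is nonzero almost everywhere.

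The main obstacle is bookkeeping rather than depth. We need to check that $G^\sigma_{j,r}$ is defined everywhere, including at $\theta$ where the policy would select a set other than $\sigma$. This is why $\sigma$ is frozen rather than recomputed. We also need to check that this frozen expression agrees with the actual truncated gradient on the regular neighborhood where $\sigma$ is selected.
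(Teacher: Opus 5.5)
Your proposal is correct and follows the paper's proof essentially step for step: in (i), local constancy of the selected sets at a regular $\theta$ makes the teacher-forced input locally constant. In (ii), freezing $\sigma$ both identifies $G_{j,r}^{\sigma}$ with the chain-rule term of the truncated gradient near $\theta$ and makes it globally real-analytic, from which the dichotomy follows; your coordinatewise treatment of the vector-valued case is a detail the paper leaves implicit. One small indexing slip: the detach that removes the term can act on $h_{j+1}$ itself, at a window boundary right after step $j$, so the criterion is ``no detach between steps $j$ and $j+r$'' rather than ``between steps $j+1$ and $j+r$'', which is exactly what your ``both steps lie in one window'' correctly states.
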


\begin{proof}
(i) Under teacher forcing, the committed tokens are the ground-truth values $x_0^i$, which do not depend on $\theta$.
Without a carry, the input $\mathbf{x}_{t_{j+r}}$ therefore depends on $\theta$ only through the sets $S_0, \ldots, S_{j+r-1}$.
At a regular $\theta$, these sets are constant in a neighborhood of $\theta$ (see the paragraph on regular parameters).
The input $\mathbf{x}_{t_{j+r}}$ is thus locally constant, and $\mathrm{d}\mathcal{L}^{(j+r)}/\mathrm{d}\theta$ equals the partial derivative with the trajectory held fixed.
Under \Cref{as:ties}, regular points form a set of full measure.

(ii) The sets $\sigma$ are constant near $\theta$, so near $\theta$ the carries $h_{j+1}, \ldots, h_{j+r}$ and the loss $\mathcal{L}^{(j+r)}$ are computed along the fixed trajectory defined by $\sigma$.
Within a window, the carry is never detached, and the chain rule along the path
\[
  \theta \to h_{j+1} \to \cdots \to h_{j+r} \to \mathcal{L}^{(j+r)}
\]
gives $G_{j,r}^{\sigma}$.
With $\sigma$ fixed, each carry is a composition of the maps $(\theta, h) \mapsto f_{\mathrm{carry}}(\mathbf{x}_{t_k}, h)$, which are real-analytic by \Cref{as:denoiser}.
The loss is analytic in the predicted probabilities, since they are positive at the target tokens.
Partial derivatives and products of real-analytic functions are real-analytic, so $G_{j,r}^{\sigma}$ is real-analytic on the whole parameter space.
The dichotomy follows from the fact that the zero set of a real-analytic function that is not identically zero has Lebesgue measure zero~\citep{mityagin2015zero}.
At initialization, the zero-initialized LayerNorm maps every carry to zero, so $\partial\mathcal{L}^{(j+r)}/\partial h_{j+r} = 0$ and $G_{j,r}^{\sigma}$ vanishes; the same holds at every $\theta$ where this LayerNorm has zero gain and bias.
Initialization alone therefore does not tell which of the two cases holds.
If the path crosses a window boundary, the carry is detached there, and the term is absent.
\end{proof}

\Cref{thm:window} assumes that training uses the inference policy $g$ and applies the loss only to the positions each step reveals.
Our TinyGSM training uses the stage-indexed schedule of \citet{kim2026puma} and applies the loss to all masked positions with the weighting of \Cref{eq:mdm}, so the theorem describes an idealized version of it.
In particular, $p_\theta$ in \Cref{prop:nll,thm:window} is the sampler that decodes with the training reveal rule. Whenever inference uses a smaller $u$, as in all our experiments, the bound applies to that sampler and not to the one we evaluate.
The theorem also treats $g_W$ as the sum of the window gradients at a single $\theta$, whereas \Cref{alg:train} divides each window's loss by its length and updates $\theta$ after every window, so it characterizes the fixed points of the idealized full-trajectory update rather than the iterates of \Cref{alg:train}.
The theorem bounds the excess KL of the sampler at points where the truncated gradient vanishes, and this bound decreases with $W$; it says nothing directly about the accuracy of the trained model.
\Cref{thm:credit} concerns exact gradients.
Straight-through, relaxed and score-function estimators provide surrogate signals, which is why they can help but do not match the carry (\Cref{sec:carry}).
Finally, with teacher-forced commits, the gradient through the carry trains each step to pass useful information forward, not to correct its own committed errors.

\clearpage
\section{Additional results on LLaDA}
\label{app:llada_results}

\FloatBarrier
\subsection{Full-canvas SFT schedule}
\label{app:llada_sft_schedule}

All full-canvas Post-SFT runs start from an SFT checkpoint trained for $1.5$ epochs (25k steps) with a cosine schedule.
We choose this length because SFT has largely saturated by then: doubling the schedule to $3$ epochs (50k steps) raises the score at $\tau=1$ by $0.7$ points, from $60.9$ to $61.6$ (\Cref{fig:app_sft_epochs_tau_grid} and \Cref{tab:llada_postsft}).
At this length, we select the peak learning rate from $\{10^{-8}, 10^{-7}, 10^{-6}, 5\times10^{-6}, 10^{-5}, 10^{-4}\}$ (\Cref{fig:app_sft_lr_grid}).
The selection criterion is the score at $\tau=1$, which decodes one token per step, as diffusion language models are usually benchmarked~\citep{nie2025llada,ye2025dream}.
The peak learning rate $5\times10^{-6}$ scores highest ($60.9$, against $60.5$ at $10^{-5}$), and its checkpoint is the starting point of all full-canvas Post-SFT runs.
The Post-SFT MDM control, which continues this checkpoint for 4000 steps, scores $61.2$ against its $60.9$; we attribute the small change to the short warm-up and re-annealing of the learning rate.
All scores use the chat prompt format (\Cref{app:setup_llada}); \Cref{app:llada_comparison} compares them with published LLaDA results.
IFEval is the benchmark that gains the most from SFT, from $23.3$ to $68.0$ at $\tau=1$, against $60.3$ to $75.9$ on GSM8K and $36.6$ to $38.8$ on MBPP (\Cref{tab:llada_published_comparison}).
This is consistent with the instruction-following focus of the Dolci-Instruct-SFT data.

\begin{figure}[ht]
  \centering
  \includegraphics[width=\appscale\linewidth]{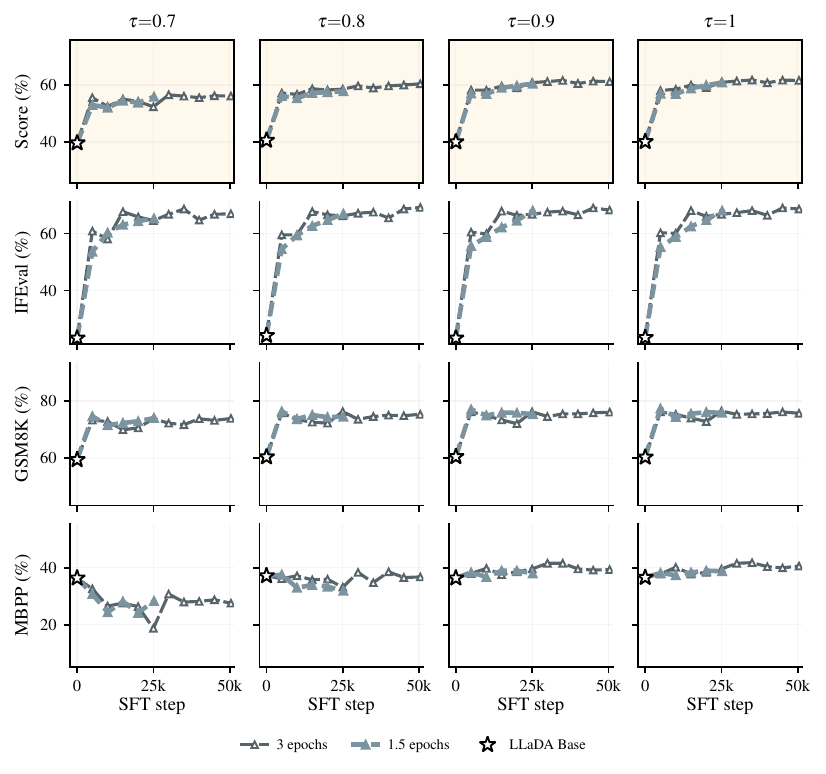}
  \caption{Full-canvas SFT of LLaDA-8B-Base for $1.5$ and $3$ epochs, against SFT step.
  Columns: decode threshold $\tau$.
  Rows: score, then each benchmark.
  The star marks LLaDA-8B-Base.}
  \label{fig:app_sft_epochs_tau_grid}
\end{figure}

\begin{figure}[ht]
  \centering
  \includegraphics[width=\appscale\linewidth]{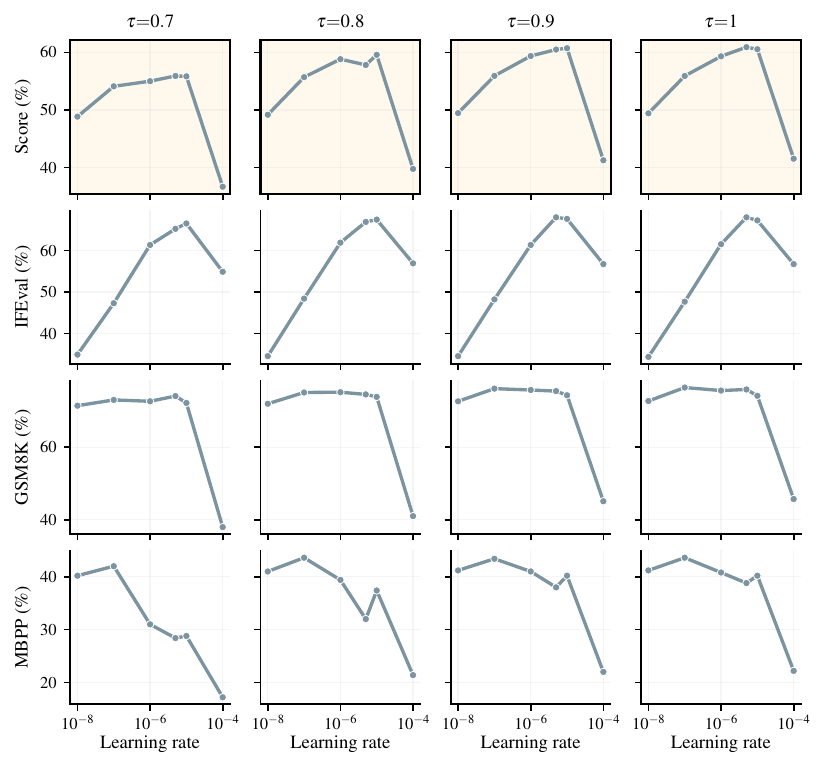}
  \caption{Full-canvas SFT for $1.5$ epochs, against peak learning rate.
  Columns: decode threshold $\tau$.
  Rows: score, then each benchmark.}
  \label{fig:app_sft_lr_grid}
\end{figure}

\FloatBarrier
\subsection{Full-canvas BPTT}
\label{app:llada_bptt}

\Cref{tab:llada_postsft} summarizes the best score of every full-canvas Post-SFT configuration.

\begin{table}[ht]
  \centering
  \caption{Best score of each full-canvas Post-SFT variant, for every training $u$ of the BPTT sweep, with the change relative to the departure SFT checkpoint in parentheses.
  \textbf{Best} and \underline{2nd best} highlighted.}
  \label{tab:llada_postsft}
  \small
  \setlength{\tabcolsep}{5pt}
  \begin{tabular}{@{}lcccc@{}}
    \toprule
    \multicolumn{5}{@{}l}{\textbf{Full canvas}} \\
    & $u{=}16$ & $u{=}32$ & $u{=}64$ & $u{=}128$ \\
    \midrule
      PU & $62.5$\,{\tiny$(+1.6)$} & $62.8$\,{\tiny$(+1.9)$} & $61.9$\,{\tiny$(+1.0)$} & $61.2$\,{\tiny$(+0.3)$} \\
      PU + Carry, $W{=}1$ & $61.9$\,{\tiny$(+1.0)$} & $62.6$\,{\tiny$(+1.7)$} & $62.4$\,{\tiny$(+1.5)$} & $61.7$\,{\tiny$(+0.8)$} \\
      \phantom{PU + Carry,} $W{=}2$ & $61.8$\,{\tiny$(+0.9)$} & $61.4$\,{\tiny$(+0.5)$} & $62.5$\,{\tiny$(+1.6)$} & $62.0$\,{\tiny$(+1.1)$} \\
      \phantom{PU + Carry,} $W{=}4$ & $61.9$\,{\tiny$(+1.0)$} & $\underline{63.0}$\,{\tiny$(+2.1)$} & $62.6$\,{\tiny$(+1.7)$} & $62.6$\,{\tiny$(+1.7)$} \\
      \phantom{PU + Carry,} $W{=}8$ & $62.6$\,{\tiny$(+1.7)$} & $\mathbf{63.5}$\,{\tiny$(\mathbf{+2.6})$} & $\underline{63.0}$\,{\tiny$(+2.1)$} & $62.4$\,{\tiny$(+1.5)$} \\
    \midrule
    SFT, 1.5 epochs (departure) & \multicolumn{4}{c}{$60.9$} \\
    SFT, 3 epochs               & \multicolumn{4}{c}{$61.6$\,{\tiny$(+0.7)$}} \\
    Post-SFT MDM                & \multicolumn{4}{c}{$61.2$\,{\tiny$(+0.3)$}} \\
    \bottomrule
  \end{tabular}
\end{table}

\Cref{fig:app_bptt_grid} reports full-canvas BPTT for $u \in \{16, 32, 64, 128\}$ and $W \in \{1, 2, 4, 8\}$, for the score and for each benchmark.
Every carry configuration reaches a higher best score than the Post-SFT MDM control ($61.2$).
The gain over PU without carry is not monotonic in $W$, but the best window improves on PU at every $u$.
This improvement grows with $u$, from $+0.1$ at $u=16$ to $+1.4$ at $u=128$, mostly because PU alone weakens as $u$ grows: at $u=128$ it only matches the Post-SFT MDM control (\Cref{subsec:smallu}).
The best score overall is reached at $u=32$ (\Cref{tab:llada_postsft}).
Below $u=16$, PU without carry already falls under both baselines ($59.4$ at $u=8$ and $47.6$ at $u=4$, \Cref{fig:app_trainu_taus}), a consequence of the small-$u$ optimization issues of \Cref{subsec:smallu}, so we do not extend BPTT there.

\begin{figure}[ht]
  \centering
  \includegraphics[width=\appscale\linewidth]{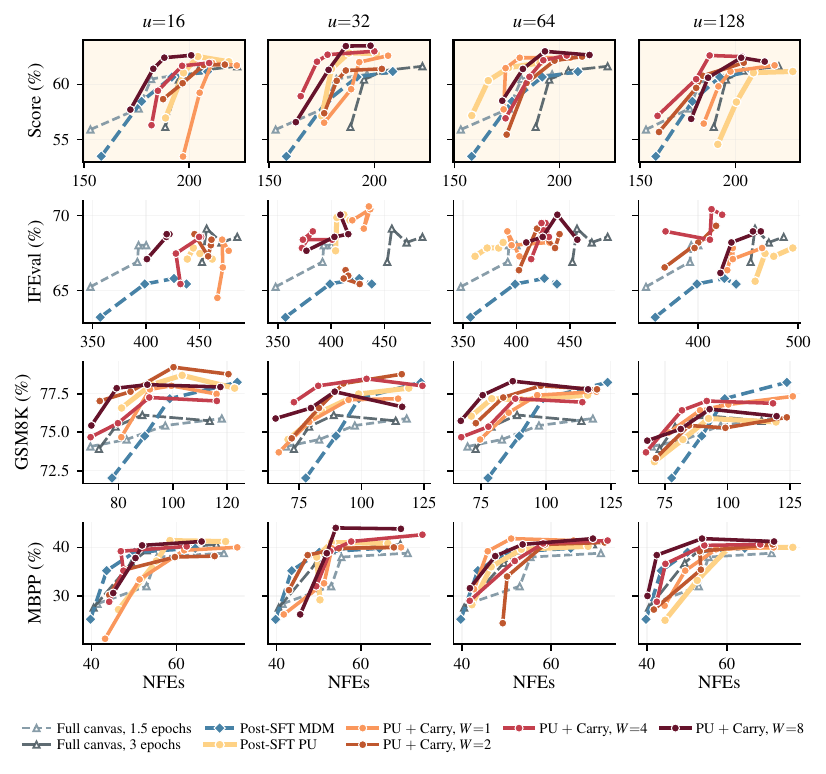}
  \caption{Full-canvas Post-SFT with BPTT, against NFEs.
  Columns: train $u$.
  Rows: score, then each benchmark.
  Each curve sweeps the decode threshold $\tau \in \{0.7, 0.8, 0.9, 1\}$.}
  \label{fig:app_bptt_grid}
\end{figure}

\FloatBarrier
\subsection{Block diffusion SFT}
\label{app:llada_semiar_sft}

As on the full canvas, we select the learning rate for block diffusion SFT at $1.5$ epochs (25k steps), with the score at $\tau=1$ as criterion (\Cref{fig:app_semiar_sft_lr_grid}).
The peak learning rate $5\times10^{-6}$ again scores highest ($57.8$, against $57.4$ at $10^{-5}$), and all block diffusion SFT runs use it.
Unlike the runs in \Cref{fig:app_semiar_sft_epochs}, these sweep runs do not train on the EOS padding that completes the last block of each response, so their scores differ slightly from those runs.\footnote{Training on this padding also shortens generation: at $\tau=0.9$, the $1.5$-epoch run of \Cref{fig:app_semiar_sft_epochs} needs $135$ NFEs, against $190$ for the sweep run with the same learning rate.
Nonetheless, since all sweep runs share the same setting, we do not expect it to change the ranking of learning rates.}

Unlike the full canvas, block diffusion SFT has not saturated at $1.5$ epochs.
The score at $\tau=1$ rises from $57.2$ at $1.5$ epochs to $59.8$ at $3$ epochs and $61.4$ at $6$ epochs (\Cref{fig:app_semiar_sft_epochs}), against a $0.7$-point gain from $1.5$ to $3$ epochs on the full canvas.
We attribute the slower convergence to the mismatch between the block-wise attention pattern, causal across blocks and bidirectional within the current block, and the fully bidirectional attention of LLaDA's pre-training.
We therefore start the block diffusion Post-SFT runs from the $6$-epoch checkpoint (100k steps).
As on the full canvas, IFEval gains the most, from $23.3$ to $65.4$ at $6$ epochs, against $60.3$ to $77.0$ on GSM8K and $36.6$ to $41.6$ on MBPP.

\Cref{fig:app_sft_vs_baselines} compares the block diffusion checkpoints at $1.5$, $3$, and $6$ epochs with the full-canvas checkpoints at $1.5$ and $3$ epochs, as a function of NFEs.
At $6$ epochs, the block diffusion checkpoint lies above both full-canvas checkpoints over most of the NFE range: it reaches $61.1$ at $153$ NFEs, which the $3$-epoch full-canvas checkpoint matches only at $204$ NFEs.
The exception is the $3$-epoch full-canvas checkpoint at $\tau=1$, which scores $0.2$ points higher ($61.6$) at $223$ NFEs, against $196$.
We attribute this to block diffusion SFT training with the same block-wise generation order as the block-$32$ decoding used at evaluation.

\begin{figure}[ht]
  \centering
  \includegraphics[width=\ifapplebuild0.7\else1\fi\linewidth]{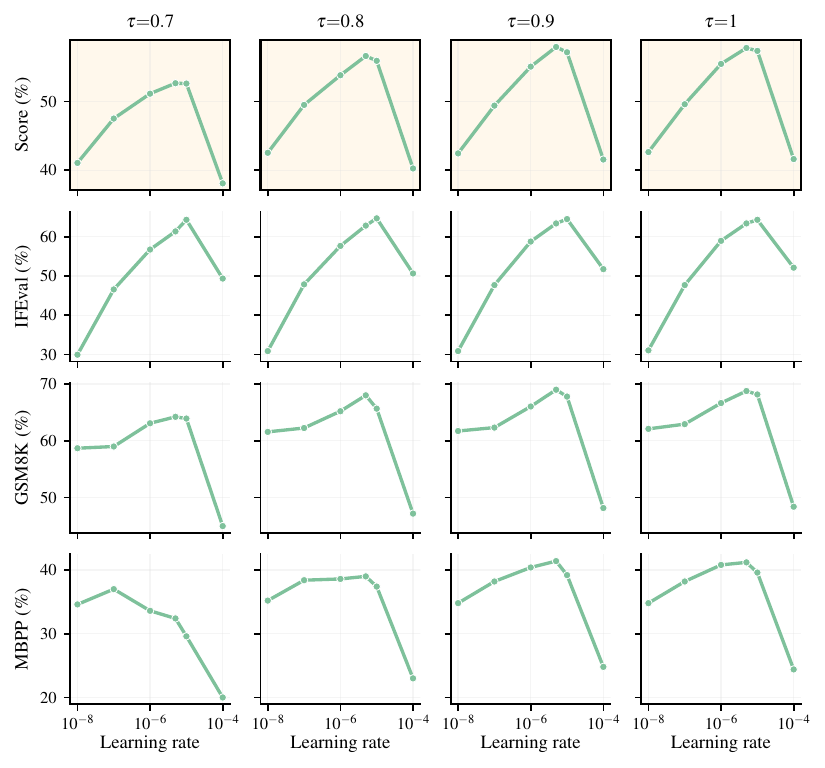}
  \caption{Block diffusion SFT for $1.5$ epochs, against peak learning rate.
  Columns: decode threshold $\tau$.
  Rows: score, then each benchmark.}
  \label{fig:app_semiar_sft_lr_grid}
\end{figure}

\begin{figure}[ht]
  \centering
  \includegraphics[width=\appscale\linewidth]{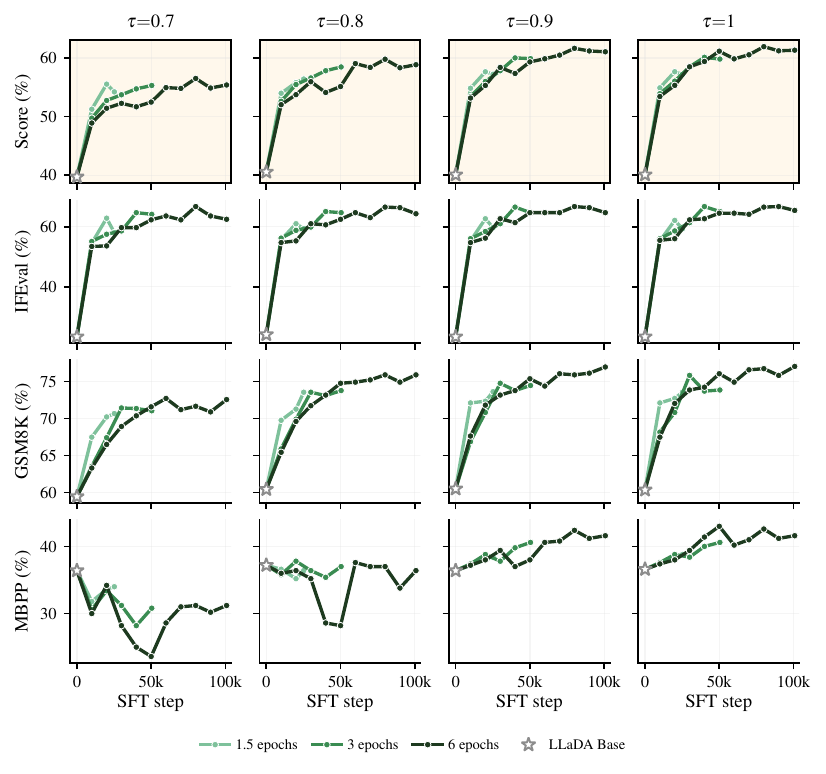}
  \caption{Block diffusion SFT of LLaDA-8B-Base for $1.5$, $3$, and $6$ epochs, against SFT step.
  Columns: decode threshold $\tau$.
  Rows: score, then each benchmark.
  The star marks LLaDA-8B-Base.}
  \label{fig:app_semiar_sft_epochs}
\end{figure}

\begin{figure}[ht]
  \centering
  \includegraphics[width=\appscale\linewidth]{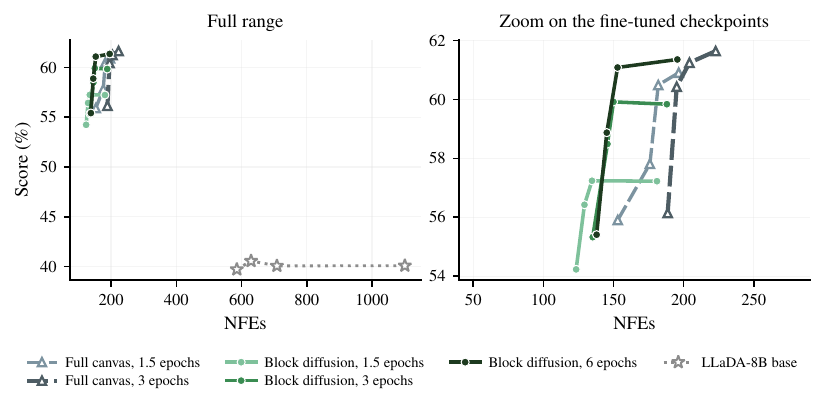}
  \caption{Score against NFEs for the SFT checkpoints and LLaDA-8B-Base.
  Each curve sweeps the decode threshold $\tau \in \{0.7, 0.8, 0.9, 1\}$.
  Left: full range.
  Right: fine-tuned checkpoints only.}
  \label{fig:app_sft_vs_baselines}
\end{figure}

\FloatBarrier
\subsection{Block diffusion \texorpdfstring{$u$}{u} sweep and BPTT}
\label{app:llada_semiar_bptt}

\Cref{fig:app_semiar_trainu} reports block diffusion PU Post-SFT for $u \in \{1, 2, 4, 8, 16\}$, starting from the $6$-epoch SFT checkpoint.
Unlike the full canvas, no value of $u$ fully collapses, although $u \in \{1, 2\}$ still trail the others in the score--NFE trade-off.

All blocks of a sample are unmasked in parallel, so its trajectory lasts at most $\lceil 32/u \rceil$ steps.
With the training-time threshold, which commits more than $u$ tokens per step when the model is confident, a trajectory at $u=1$ lasts $16.0$ steps on average, against $160.2$ on the full canvas.

\Cref{fig:app_semiar_bptt_grid} reports block diffusion BPTT for $u \in \{4, 8, 16\}$.
The window cannot exceed the trajectory length, so $W \leq \lceil 32/u \rceil$: $u=4$ admits $W \in \{1, 2, 4, 8\}$, $u=8$ admits $W \in \{1, 2, 4\}$, and $u=16$ admits $W \in \{1, 2\}$.
The best window improves on PU without carry by $1.0$--$1.5$ points at every $u$, while the carry-only variant ($W=1$) does not.
\Cref{tab:llada_postsft_block} summarizes the best score of every block diffusion Post-SFT configuration.

\begin{table}[ht]
  \centering
  \caption{Best score of each block diffusion Post-SFT variant, for every training $u$ of the BPTT sweep, with the change relative to the departure SFT checkpoint in parentheses.
  \textbf{Best} and \underline{2nd best} highlighted.}
  \label{tab:llada_postsft_block}
  \small
  \setlength{\tabcolsep}{5pt}
  \begin{tabular}{@{}lccc@{}}
    \toprule
    \multicolumn{4}{@{}l}{\textbf{Block diffusion}} \\
    & $u{=}4$ & $u{=}8$ & $u{=}16$ \\
    \midrule
      PU & $62.8$\,{\tiny$(+1.4)$} & $61.9$\,{\tiny$(+0.5)$} & $62.1$\,{\tiny$(+0.8)$} \\
      PU + Carry, $W{=}1$ & $61.1$\,{\tiny$(-0.2)$} & $62.0$\,{\tiny$(+0.6)$} & $62.1$\,{\tiny$(+0.7)$} \\
      \phantom{PU + Carry,} $W{=}2$ & $\mathbf{63.8}$\,{\tiny$(\mathbf{+2.5})$} & $62.0$\,{\tiny$(+0.6)$} & $\underline{63.6}$\,{\tiny$(+2.2)$} \\
      \phantom{PU + Carry,} $W{=}4$ & $62.3$\,{\tiny$(+0.9)$} & $62.9$\,{\tiny$(+1.5)$} & --- \\
      \phantom{PU + Carry,} $W{=}8$ & $62.1$\,{\tiny$(+0.8)$} & --- & --- \\
    \midrule
    SFT, 6 epochs (departure) & \multicolumn{3}{c}{$61.4$} \\
    SFT, 3 epochs             & \multicolumn{3}{c}{$59.9$\,{\tiny$(-1.4)$}} \\
    Post-SFT MDM              & \multicolumn{3}{c}{$62.0$\,{\tiny$(+0.7)$}} \\
    \bottomrule
  \end{tabular}
\end{table}

\begin{figure}[ht]
  \centering
  \includegraphics[width=\appscale\linewidth]{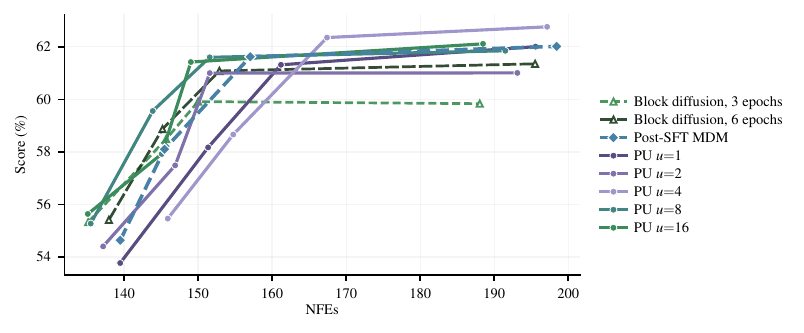}
  \caption{Score against NFEs for block diffusion PU Post-SFT across $u$.
  Each curve sweeps the decode threshold $\tau \in \{0.7, 0.8, 0.9, 1\}$.}
  \label{fig:app_semiar_trainu}
\end{figure}

\begin{figure}[ht]
  \centering
  \includegraphics[width=0.835\linewidth]{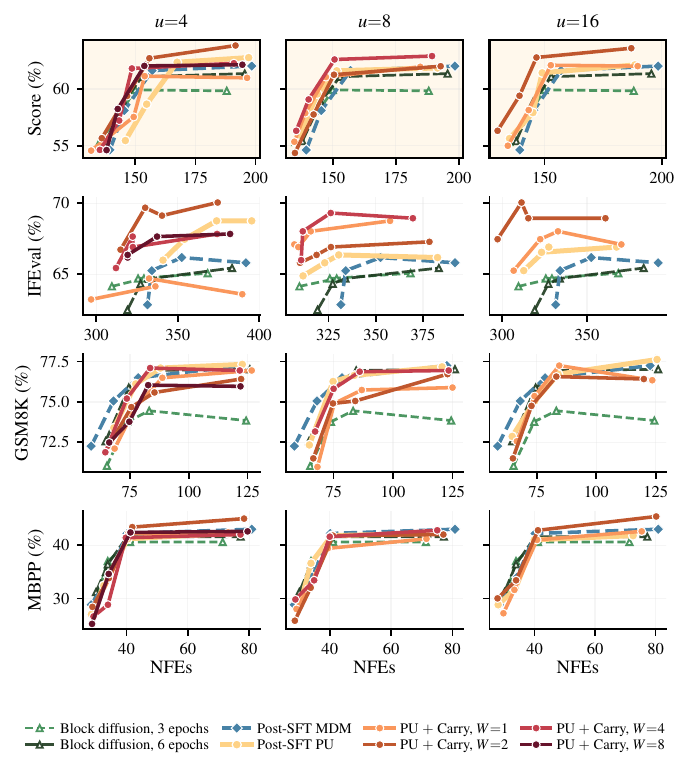}
  \caption{Block diffusion Post-SFT with BPTT, against NFEs.
  Columns: $u$.
  Rows: score, then each benchmark.
  Each curve sweeps the decode threshold $\tau \in \{0.7, 0.8, 0.9, 1\}$.}
  \label{fig:app_semiar_bptt_grid}
\end{figure}

\FloatBarrier
\subsection{The limit case of \texorpdfstring{$u$}{u} equal to the block size in block diffusion}
\label{app:llada_semiar_u_block}

With a block size of $32$, a training step at $u=32$ commits the whole block, so every trajectory lasts one step.
PU at $u=32$ is therefore a limit case of PU: it only trains on fully masked blocks that follow a clean prefix.
At inference, every block starts from such a sequence, except that the model generated the prefix.
The later reveals within a block receive no Post-SFT training at $u=32$, so the model makes them with what it learned in pre-training and SFT.

PU at $u=32$ reaches $63.1$ at $\tau=1$, against $62.0$ for Post-SFT MDM (\Cref{fig:app_semiar_u_block}).
Its score--NFE curve also lies above those of PU at every other $u$ (\Cref{fig:app_semiar_trainu}).
We believe this gain comes from the short Post-SFT horizon of $4000$ steps: the model improves its first reveal of each block, and pre-training and SFT still let it complete the block.
The later reveals receive no training signal, so we expect a longer Post-SFT at $u=32$ to collapse.

We cannot apply BPTT at $u=32$ because the trajectory length is $1$.
\Cref{fig:app_semiar_u_block} compares PU at $u=32$ with PU at $u=16$, alone and with BPTT at $W=2$, the configurations reported in \Cref{sec:exp_block_diff}.
We observe that the score--NFE trade-off of PU at $u=16$ with BPTT at $W=2$ dominates that of PU at $u=32$, although PU alone at $u=16$ scores lower ($62.1$ against $63.1$).
The best reveal count without BPTT is therefore not necessarily the best one with it: a smaller $u$ with BPTT can match or exceed a larger $u$ that rules BPTT out.
Our experiments do not isolate why $u{=}32$ suits PU alone; the cause may lie in our setup (model, dataset, and benchmarks), in the optimization issues of small $u$ (\Cref{subsec:smallu,app:llada_smallu}), or in the decode block of $32$.

\begin{figure}[ht]
  \centering
  \includegraphics[width=\appscale\linewidth]{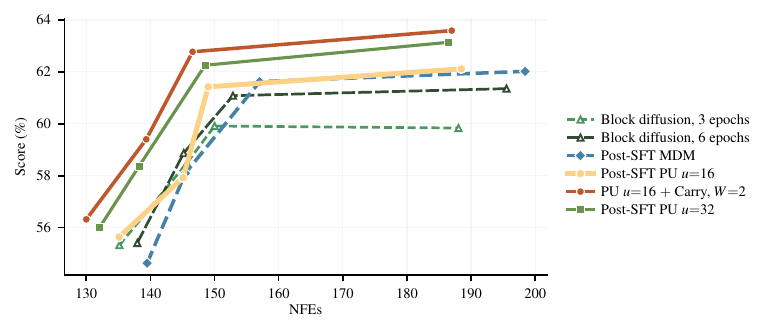}
  \caption{Score against NFEs for block diffusion Post-SFT with PU at $u=32$, and with PU at $u=16$ alone and with BPTT at $W=2$, against the SFT checkpoints and Post-SFT MDM.
  Each curve sweeps the decode threshold $\tau \in \{0.7, 0.8, 0.9, 1\}$.}
  \label{fig:app_semiar_u_block}
\end{figure}

\FloatBarrier
\subsection{Model collapse as a function of training \texorpdfstring{$u$}{u}}
\label{app:llada_smallu}

In \Cref{subsec:smallu}, we present the effect of decreasing the training $u$ on the learning dynamics. Below, we provide more evidence of the challenges brought by small training $u$, which we trace to three factors: sample diversity, noise diversity, and the objective mismatch.

  \paragraph{Sample diversity.} PU-style training causes consecutive steps to visit similar sequences: a sequence not yet fully unmasked at step $t$ reappears at step $t+1$, one unmasking step further along its trajectory. At a fixed token budget, training therefore covers a smaller set of distinct sequences (\Cref{fig:llada_diversity}) and keeps each of them in the batch for many consecutive steps, which is what makes the model overfit locally (\Cref{subsec:smallu}). \Cref{fig:llada_diversity_steps} measures how long sequences stay in the batch, by the Jaccard similarity---the intersection over union---between the sequences visited by two steps $\Delta$ apart. We estimate it from the dynamics observed on a single machine in a multi-GPU run and extrapolate to the full run. Small training $u$ yields substantially higher overlap that persists over large $\Delta$, whereas larger $u$ shows a faster decay as $\Delta$ increases. Confidence collapse dampens the effect: by committing several high-confidence positions at once, it allows sequences to complete in fewer steps and be replaced sooner.
\begin{figure}[ht]
    \centering
    \includegraphics[width=\appscale\linewidth]{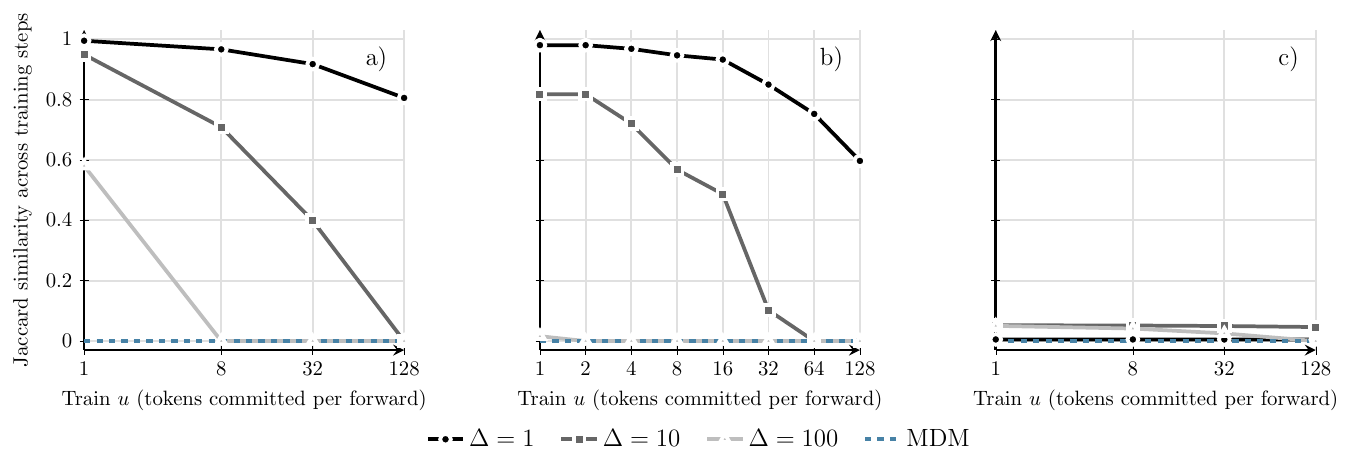}
    \caption{Intersection over union of samples within the training batch $\Delta$ training steps apart across training $u$: (a) PU training without confidence collapse; (b) PU training with confidence collapse; (c) PU training without confidence collapse when sampling batches from a pool of training trajectories 10$\times$ the size of the batch.}
    \label{fig:llada_diversity_steps}
\end{figure}
\paragraph{Noise diversity.} PU-style training iterates through sequences by progressively unmasking them, starting from fully masked sequences. Early in training the number of visible tokens across a batch is therefore concentrated near $0$. MDMs are not subject to this effect, as noise levels are drawn i.i.d. per sample at every training step. \Cref{fig:llada_noise_diversity} illustrates this behavior. We track the standard deviation, across training batches logged every $10$ steps, of the number of unmasking steps each sequence has undergone. For all training $u$, the standard deviation starts near zero as all the sequences are fully masked and grows as sequences complete and are renewed at different times. What changes between $u$ is the pace at which this mixing happens: for $u{=}1$, the progression toward reaching a steady state is longer than for $u{=}32$ and $u{=}128$, for which it can happen at a pace the logging frequency cannot track. While this mixing is slower at low $u$, it affects only the first few hundred training steps, so we expect its impact on the general performance to diminish as the number of training steps increases.
Two further factors shape this mixing. Sequences differ in length and are completed after different numbers of steps, so the within-batch noise-level distribution broadens as training progresses. In block diffusion, the parallel-block implementation unmasks $u$ tokens per block, so a small training $u$ corresponds to a higher effective number of tokens unmasked per step across the sequence.

\paragraph{Objective mismatch.} PU-style training computes the loss over all masked tokens, akin to MDMs. The weight given to uncommitted positions is therefore not set directly but follows from $u$: as $u$ decreases, the ratio of supervised to committed positions grows, and the gradient is increasingly dominated by predictions at positions the step does not commit. A single hyperparameter thus governs both the granularity of the trajectory and the balance of supervision between committed and uncommitted positions. Interestingly, \citet{xia2026metastate} propose an alternative loss which decouples the two, defining the loss as a weighted sum of the term over all masked tokens and the term over committed tokens alone.

\begin{figure}[ht]
    \centering
    \includegraphics[width=\appscale\linewidth]{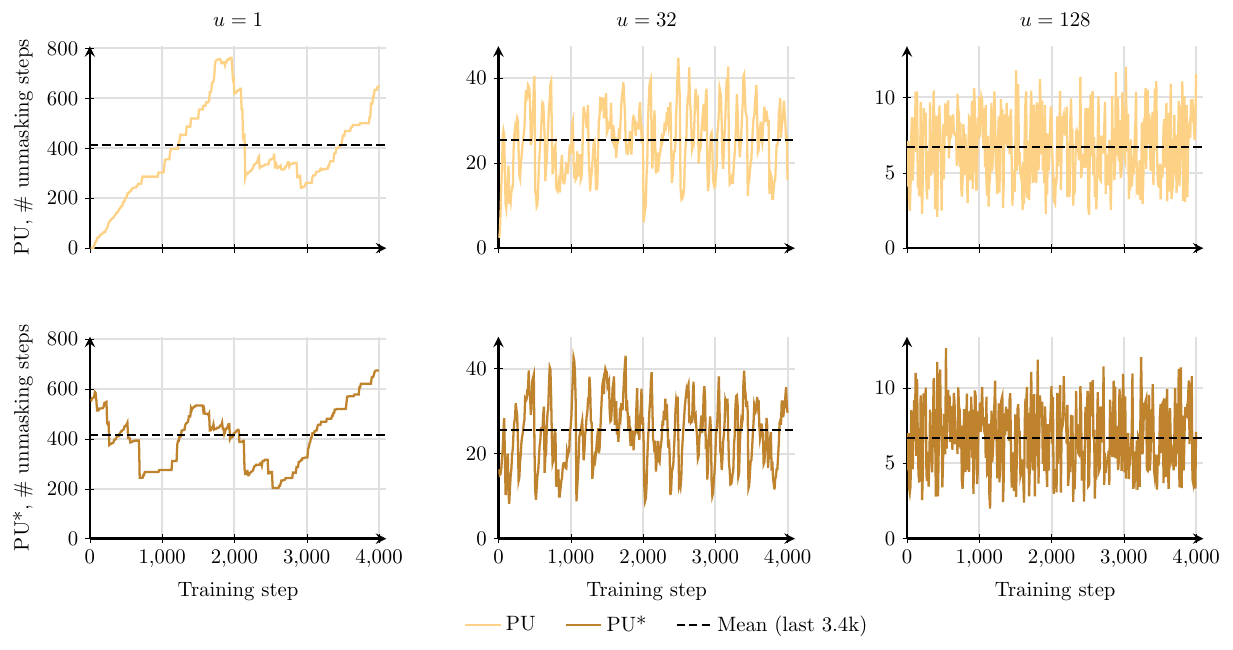}
    \caption{Standard deviation of the number of unmasking steps within a batch across training, for PU-style training (top) and PU-style training with randomly initialized masking ratios (bottom), at $u=1$ (left), $u=32$ (middle) and $u=128$ (right). All runs use full-canvas diffusion without confidence collapse. The dotted line refers to the mean computed on the last 3,000 training steps.}
    \label{fig:llada_noise_diversity}
\end{figure}

\subsubsection{Mitigating the gap at small train \texorpdfstring{$u$}{u}}
We attempt to mitigate the effect of training with a small $u$ by targeting the three
factors identified above: sample diversity, noise diversity, and
the objective mismatch.

\paragraph{Sample diversity.} We maintain a pool of $P$ training sequences, ten times
larger than the training batch size $B$. At each training step $s$, rather than
advancing the sequences seen at $s-1$, we advance the next $B$ sequences in the pool.
Consecutive steps therefore visit non-overlapping sets of sequences
(\Cref{fig:llada_diversity_steps} c) and the number of distinct sequences seen over
training increases. This comes at a cost in memory: the
pool holds $P$ sequences and their masks rather than $B$, and the overhead grows further
with carry and BPTT, which store additional tensors per sequence. In practice, this
bounds the pool size we can reasonably use.

\paragraph{Noise diversity.} We initialize training from partially unmasked sequences, with
masking ratios sampled uniformly. \Cref{fig:llada_noise_diversity} shows the effect on
early training at $u{=}1$: PU with random initial masking ratios maintains a higher
within-batch standard deviation than PU from the first step onward.

\paragraph{Objective mismatch.} Finally, we explicitly
balance the contributions of the tokens to be unmasked at the current step against
those of the remaining masked tokens. \citet{xia2026metastate} propose a weighted sum of
two cross-entropy terms, one over all masked tokens and one over the tokens revealed at
the current step:
\begin{equation}\label{eq:metastate}
  \mathcal{L}_\text{MS}^{k}
  = w_k \Big[ \lambda \, \ell_k^{\mathrm{dense}}
            + (1 - \lambda) \, \ell_k^{\mathrm{reveal}} \Big],
  \qquad
  w_k = \frac{n_k}{N_m},
\end{equation}
where $n_k$ is the number of positions revealed at step $k$, $N_m$ the total number of
maskable positions, and $\ell_k^{\mathrm{dense}}$ and $\ell_k^{\mathrm{reveal}}$ the
cross-entropy losses averaged over all masked tokens and over the tokens revealed at
step $k$, respectively. We adapt \Cref{eq:metastate} as
\begin{equation}\label{eq:pu_plus_loss}
  \mathcal{L}_{PU+}^{k}
  = \Big[ \lambda \, \ell_k^{\mathrm{reveal}}
            + (1 - \lambda) \, \ell_k^{\mathrm{differ}} \Big],
\end{equation}
where $\ell_k^{\mathrm{differ}}$ is averaged over the tokens that remain masked after step
$k$. We do not ablate $\lambda$ thoroughly, but instead
offer a proof of concept: we set $\lambda = 32/L$, where $L$ is the total number of
masked positions per sequence, matching the relative weight that the best-performing PU
run ($u{=}32$) assigns to revealed versus still-masked tokens.

\begin{figure}[ht]
    \centering
    \includegraphics[width=0.5\linewidth]{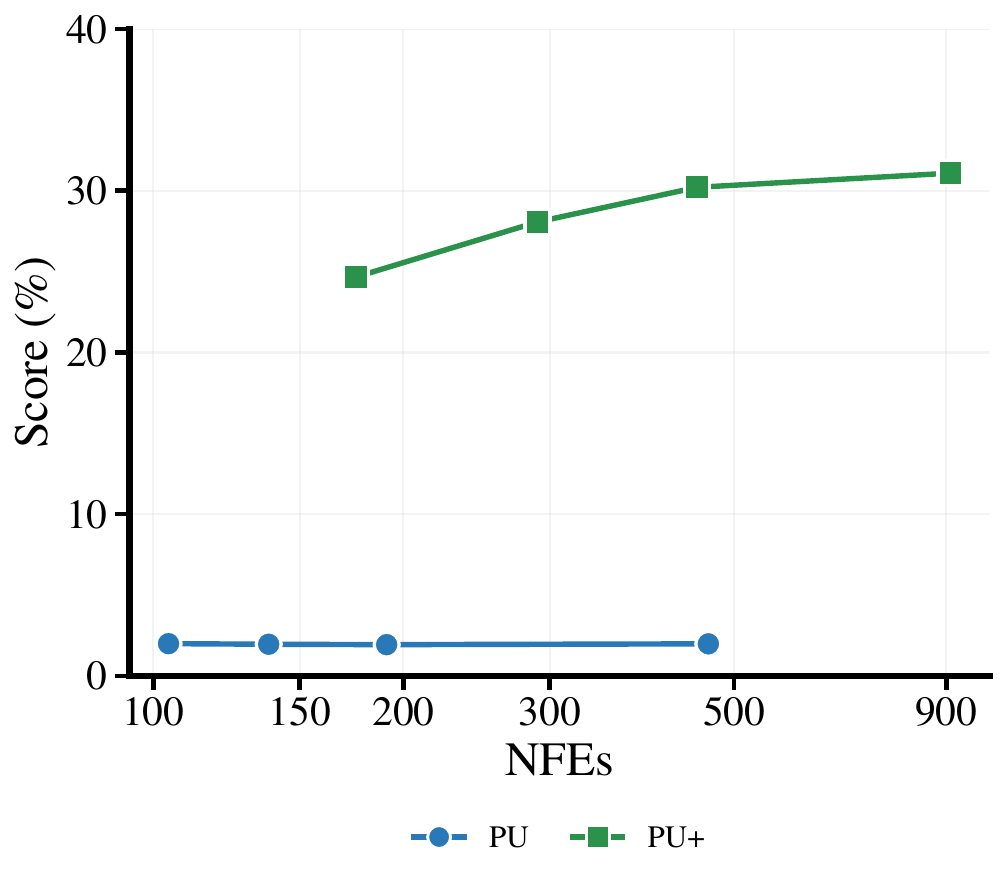}
    \caption{Mitigating model collapse in PU-style training at small training $u$: we compare PU, trained for full-canvas generation, top-$u$ unmasking with $u=1$, with PU+, which integrates tricks aiming at mitigating issues found in PU at small $u$. Inference here follows the Fast-dLLM policy with confidence threshold $\tau \in \{0.7,0.8,0.9,1\}$.}
    \label{fig:smallu_improve}
\end{figure}

  In \Cref{fig:smallu_improve}, we apply these changes to PU-style full-canvas training, where the top-$u$ tokens are unmasked at each step; at inference we use the Fast-dLLM policy and vary the confidence threshold $\tau$. Together, these changes substantially reduce the performance gap at small $u$ without closing it. Additionally, these tricks cost memory and tuning effort and only partially mitigate collapse; the simpler alternative remains to choose a training $u$ that trades off training–inference alignment against healthy dynamics.

These three effects make it impractical to do PU at small $u$, while this is the regime typically targeted at inference (e.g., unmask $\simeq8$ tokens at a time). Beyond memory and tuning, the adjustments above are also harder to integrate with complementary techniques such as the carry or BPTT.

\paragraph{Block diffusion reduces reuse.}
Block diffusion allays this trade-off. With blocks of $b$ tokens, all advancing in parallel, a sequence stays in the batch for at most $\lceil b/u \rceil$ steps instead of about $L/u$, so that at the same $u$ it is reused $L/b$ times less (\Cref{sec:exp_block_diff,app:setup_llada_postsft}). This is consistent with block diffusion being less exposed to the loss of sample diversity (\Cref{fig:llada_diversity}) and with no value of $u$ fully collapsing in that setting (\Cref{app:llada_semiar_bptt}).

\ifapplebuild\else\FloatBarrier\fi
\subsection{Relevance of confidence thresholding at training time}
\label{app:llada_trainu}

All PU and BPTT runs outside this subsection use a training-time confidence threshold $\tau_{\text{train}}=0.9$, the default of \citet{kim2026puma}, who find PU mostly insensitive to the threshold except for very extreme values (e.g., $\tau \leq 0.65$).
Each training step commits every masked position whose confidence exceeds $\tau_{\text{train}}$ and, if fewer than $u$ qualify, the most confident remaining positions up to $u$ (\Cref{sec:exp_llada,app:pu_details}).
Here we compare it with a threshold-free variant that commits exactly $u$ positions per step.

The threshold has two effects.
First, it brings training trajectories closer to those of the confidence-threshold decoding used at inference.
Second, it acts on the small-$u$ optimization issues of \Cref{subsec:smallu}: positions the model already predicts confidently are committed at once, so the model spends fewer steps on tokens it already knows and trajectories end sooner.
As a result, a run visits more distinct samples (\Cref{fig:app_sample_diversity}).
At $u=1$, a run with the threshold visits $0.6\%$ of the samples the Post-SFT MDM control visits, against $0.3\%$ without.
The difference shrinks as $u$ grows and is within the uncertainty bands from $u=8$ on ($2.5\%$ against $2.3\%$).
We estimate the number of samples a run visits from the mean number of response tokens revealed per step, divided by the mean response length measured on the Post-SFT MDM control.\footnote{Training logs one rank every tenth step, so the exact count is not recorded.
A trajectory ends only once all its response tokens are revealed, so each sample contributes exactly its response length and the ratio counts samples.
The bands are one standard error, combining the variation in response length over the logged rows and a block bootstrap over logged steps, with blocks of one trajectory length.}

\Cref{fig:app_trainu_taus,fig:app_trainu_full} report the effect on score.
The threshold helps most where PU is limited by small $u$: it adds $13.2$--$13.8$ points at $u=4$ and $2.4$--$5.4$ points at $u=8$, across decode thresholds.
For $u \in \{16, 32, 64\}$ the difference lies between $-0.3$ and $+2.0$ points, and at $u=128$ between $-0.8$ and $+0.3$.
At $u \leq 2$, both variants collapse below $6$ points.
These comparisons are at matched $u$ and $\tau$, not at matched NFEs, since the threshold also changes the NFEs a run needs at a given $\tau$.
\Cref{fig:app_nfe_topk} accounts for cost.
Without the threshold, no value of $u$ clearly dominates both the $1.5$-epoch SFT checkpoint and the Post-SFT MDM control across NFEs.
With it, several values of $u$, for example $u \in \{32, 64\}$, lie above both baselines over large ranges of NFEs (\Cref{fig:llada_collapse}).

\begin{figure}[ht]
  \centering
  \includegraphics[width=0.48\linewidth]{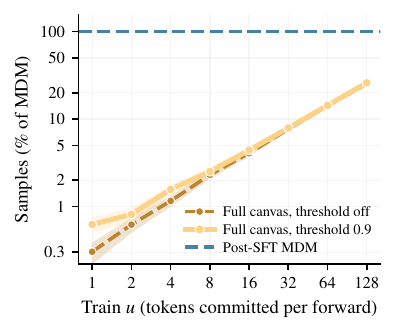}
  \caption{Distinct training samples visited as a percentage of Post-SFT MDM, for full-canvas PU with and without the training-time threshold.
  Bands: one standard error.}
  \label{fig:app_sample_diversity}
\end{figure}

\begin{figure}[ht]
  \centering
  \includegraphics[width=\appscale\linewidth]{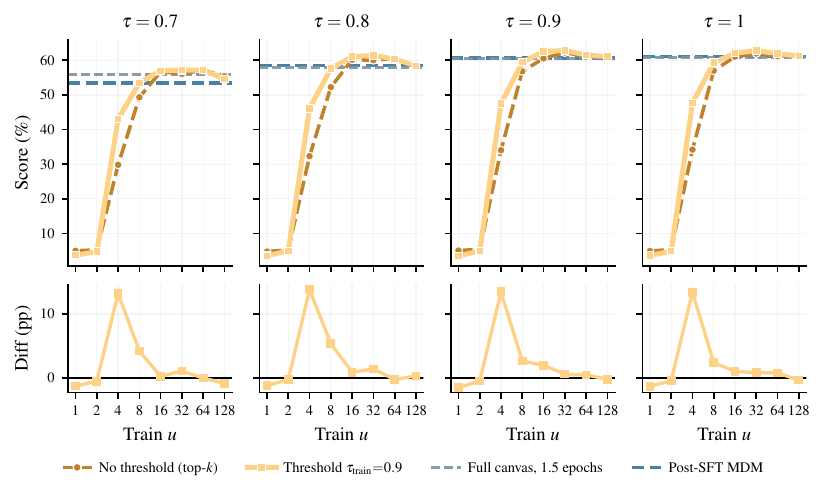}
  \caption{Full-canvas PU Post-SFT across $u$, with ($\tau_{\text{train}}=0.9$) and without the training-time threshold.
  Columns: decode threshold $\tau$.
  Top: score.
  Bottom: difference, with minus without, in points at matched train $u$ and $\tau$; NFEs are not matched.}
  \label{fig:app_trainu_taus}
\end{figure}

\begin{figure}[ht]
  \centering
  \includegraphics[width=\appscale\linewidth]{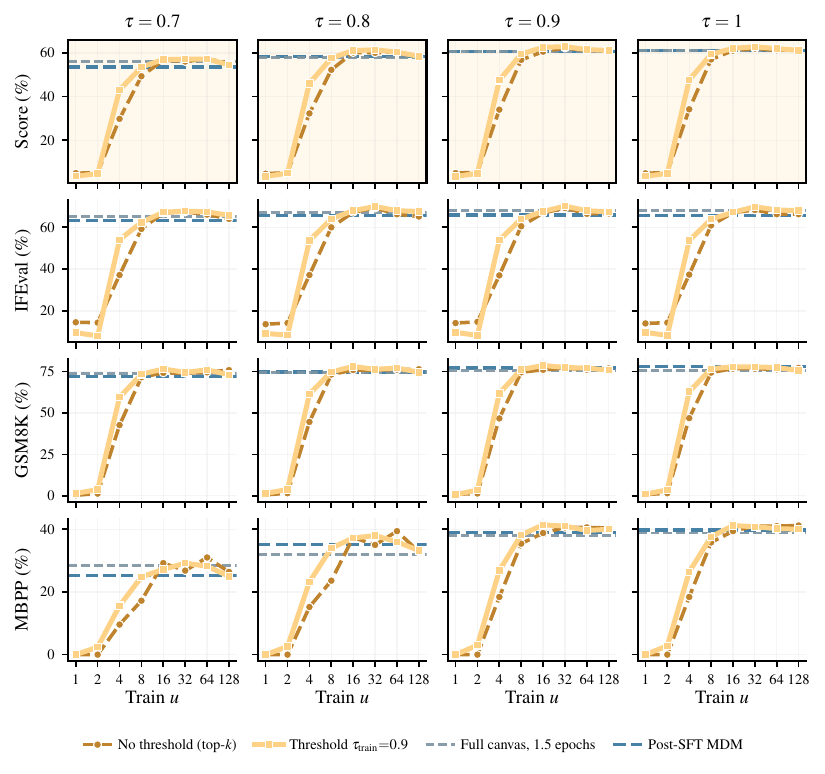}
  \caption{Per-benchmark scores for the runs in \Cref{fig:app_trainu_taus}.
  Columns: decode threshold $\tau$.
  Rows: score, then each benchmark.}
  \label{fig:app_trainu_full}
\end{figure}

\begin{figure}[ht]
  \centering
  \includegraphics[width=\appscale\linewidth]{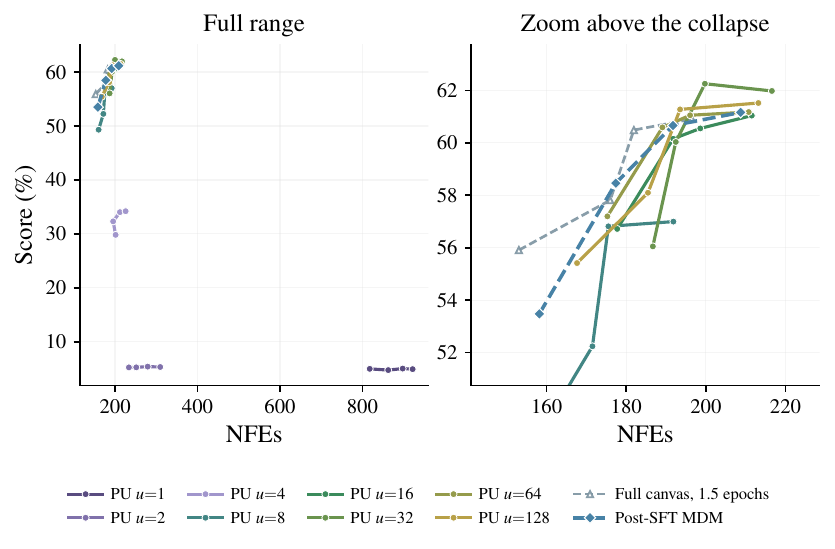}
  \caption{Score against NFEs for full-canvas PU Post-SFT across $u$, without the training-time threshold.
  Each curve sweeps the decode threshold $\tau \in \{0.7, 0.8, 0.9, 1\}$.
  Left: full range.
  Right: scores above $50\%$.}
  \label{fig:app_nfe_topk}
\end{figure}

\FloatBarrier
\subsection{Comparison with published LLaDA results}
\label{app:llada_comparison}

\Cref{tab:llada_published_comparison} compares our measurements with published LLaDA-8B values.
These are the only external numbers in the paper; every other comparison is between runs that share the data, decoding policy, and evaluation harness.
With the plain prompt format (\Cref{app:setup_llada}), our harness reproduces the published base figures.
All other results use the chat prompt format (\Cref{app:setup_llada}), which matches the format of our SFT data.
With it, the base model scores $10.1$ points lower on GSM8K, $3.2$ lower on MBPP, and $2.4$ higher on IFEval.
The prompt format is fixed across all our runs, so this shift does not affect our comparisons, but our absolute scores should not be compared directly with published ones.
Published figures also disagree with each other: for LLaDA-8B-Instruct on GSM8K, \citet{nie2025llada} report $69.4$ with full-canvas decoding and $77.5$ with block-$32$ decoding, and \citet{ye2025dream} report $78.6$.

The checkpoints our Post-SFT runs start from score $75.9$ and $77.0$ on GSM8K, $38.8$ and $41.6$ on MBPP, and $68.0$ and $65.4$ on IFEval.
These are in the same range as the published LLaDA-8B-Instruct figures, and higher on IFEval than LLaDA-8B-Instruct as evaluated by \citet{ye2025dream} ($59.9$).

\begin{table}[ht]
  \centering
  \caption{LLaDA-8B: published figures against our own measurements.
  The published rows are reference only and are not protocol-matched to anything else in this paper.
  Shot counts are given per row as IFEval/GSM8K/MBPP, with \emph{n/r} where the source does not report a value.
  Our rows decode with Fast-dLLM at $\tau=1$, left to right in blocks of $32$ tokens, and report IFEval 0-shot prompt-level strict accuracy, GSM8K 8-shot exact match, and MBPP 3-shot pass@1.
  The two LLaDA-8B-Base rows differ only in the prompt format; the two SFT rows are the checkpoints the Post-SFT runs start from, evaluated in the chat format.
  The last two published rows are LLaDA as evaluated by \citet{ye2025dream}, not Dream's own model.}
  \label{tab:llada_published_comparison}
  \small
  \renewcommand{\arraystretch}{1.08}
  \begin{tabular}{@{}llcccc@{}}
    \toprule
    Model & Source & Shots & IFEval & GSM8K & MBPP \\
    \midrule
    \multicolumn{6}{@{}l}{\emph{Published, reference only}} \\
    LLaDA-8B-Base & LLaDA & n/r\,/\,4\,/\,4 & n/r & $70.3$ & $40.0$ \\
    LLaDA-8B-Instruct, full canvas & LLaDA & n/r\,/\,4\,/\,4 & n/r & $69.4$ & $41.0$ \\
    LLaDA-8B-Instruct, block $32$ & LLaDA & n/r\,/\,4\,/\,4 & n/r & $77.5$ & $34.2$ \\
    LLaDA-8B-Base & Dream & n/r\,/\,8\,/\,4 & n/r & $70.9$ & $39.0$ \\
    LLaDA-8B-Instruct & Dream & n/r & $59.9$ & $78.6$ & $34.2$ \\
    \midrule
    \multicolumn{6}{@{}l}{\emph{Ours, LLaDA-8B-Base at $\tau=1$}} \\
    \quad Plain prompt format & This work & $0$\,/\,$8$\,/\,$3$ & $20.9$ & $70.4$ & $39.8$ \\
    \quad Chat prompt format & This work & $0$\,/\,$8$\,/\,$3$ & $23.3$ & $60.3$ & $36.6$ \\
    \midrule
    \multicolumn{6}{@{}l}{\emph{Ours, LLaDA-8B-Base after SFT on Dolci-Instruct-SFT, at $\tau=1$}} \\
    \quad Full canvas, $1.5$ epochs & This work & $0$\,/\,$8$\,/\,$3$ & $68.0$ & $75.9$ & $38.8$ \\
    \quad Block diffusion, $6$ epochs & This work & $0$\,/\,$8$\,/\,$3$ & $65.4$ & $77.0$ & $41.6$ \\
    \bottomrule
  \end{tabular}
\end{table}

\FloatBarrier

\clearpage
\section{Relation to the closest work}
\label{app:closest_work}

Our contribution concerns how to train masked diffusion models for the trajectories they encounter during generation: which states to train on, what information to propagate between them, and how far to propagate the learning signal. PUMA~\citep{kim2026puma}, Loopholing~\citep{jo2026loopholing}, MetaState~\citep{xia2026metastate}, and Relay~\citep{rozonoyer2026relay} establish complementary foundations for these questions. Building on them, we study trajectory alignment with the inference policy and temporal credit assignment via BPTT together, evaluate their contribution to general-purpose instruction post-training, and investigate the conditions under which closer alignment remains learnable.

\paragraph{PUMA: from trajectory construction to temporal credit assignment.}
\citet{kim2026puma} establish progressive unmasking as an effective training strategy, with a marginal-agreement guarantee under idealized posterior sampling. We build on their teacher-forced trajectories: the policy selects reveal positions, but ground-truth commitments still differ from an imperfect model's sampled content. Our carry adds the predecessor's model-produced representation, providing a weak form of student forcing and a differentiable path from later losses to earlier computations.
PUMA also recognizes that long chains can provide redundant supervision and motivates a coarse-to-fine schedule partly by the unreliable policy early in training. Its schedule increases the number of stages, reducing the fraction of each sequence revealed per step, whereas inference includes two- and three-token steps.\footnote{In the paper's notation, $K$ refers to the number of stages, which increases from $12$ to $42$ for the case of TinyGSM; the number of unmasked tokens per step is then nominally $L/K$, with $L$ close to $512$. The confidence threshold makes the measured value significantly larger (\Cref{app:pu_details}).} Its schedule ablations already suggest that matching the reveal order need not mean matching its granularity.
Our post-training results show that small-count degradation persists with a pretrained, instruction-tuned denoiser. An informed policy does not by itself ensure a learnable trajectory. We trace it to local overfitting caused by sample reuse (\Cref{subsec:smallu}), and also examine masking-level diversity and the balance of supervision on committed versus uncommitted positions (\Cref{app:llada_results}), connecting schedule sensitivity to concrete learning dynamics. PUMA's large-model experiments establish effectiveness in a specialized setting: LoRA adaptation of a code model, evaluated on HumanEval and MBPP. Our full-model, general instruction post-training tests a broader question: whether the gains extend to instruction following while retaining reasoning and code capabilities, under both full-canvas and block diffusion.

\paragraph{Loopholing: from self-conditioned context to trajectory-trained state.}
\citet{jo2026loopholing} establish the value of retaining continuous information across discrete sampling. They name the underlying limitation the \emph{sampling wall}: collapsing each predicted distribution to a single token discards what the model knew about the alternative tokens, and the carry lets part of this information survive the commitment. We adopt their simple, empirically effective hidden-state carry to isolate the role of training and trajectory alignment with minimal architectural changes. Their self-conditioning pass takes a corrupted sequence and an empty carry, then supplies its detached output (with stop-gradient) to a second prediction on the same sequence. Our cached carry instead comes from the preceding rollout step, which itself consumed its predecessor's carry after trajectory initialization. We therefore hypothesize that this recurrent history better matches inference-time conditioning than a context generated in isolation, while recognizing that teacher-forced token content still leaves a mismatch. A stored carry that crosses an optimizer update was also produced by the parameters before that update.
Loopholing reports approximately $30\%$ additional training time. Our caching approach removes its auxiliary context-generation pass, thus avoiding this additional training cost, and replaces it with storage of one hidden-width vector per token, rather than a vocabulary-sized distribution. This is a compact representation relative to logits, although total storage still scales with sequence length and the number of active trajectories. Within each unroll window, we can also train the computation that produced the carry through subsequent losses. \method thus develops a multi-step training direction, combining recurrent conditioning with temporal credit assignment. The removed auxiliary pass and the additional memory and computation of BPTT are separate costs, and we provide more details on this trade-off between them in \Cref{tab:llada_postsft_hparams}.

\paragraph{RCD and DiffusionGemma: distribution-derived carries.}
Residual Context Diffusion (RCD; \citealp{hu2026rcd}) and DiffusionGemma~\citep{team2026diffusiongemma} carry a different quantity from the backbone hidden state we adopt from Loopholing. Both pass the expected input embedding $\mathbf{E}^{\top}\mathbf{p}$ to the next step, where $\mathbf{p} \in \Delta^{|\mathcal{V}|}$ is the predicted distribution at a position and $\mathbf{E} \in \mathbb{R}^{|\mathcal{V}| \times d}$ is the input embedding table. The resulting vector $\mathbf{E}^{\top}\mathbf{p} \in \mathbb{R}^{d}$ has the same width as the hidden-state carry. RCD entropy-weights this residual. It trains the target model using residuals supplied by a frozen reference model, then recursively feeds back the target model's own predictions at inference. DiffusionGemma applies a temperature to the logits before computing $\mathbf{p}$ and passes the expected embedding through a feedforward network, which yields a self-conditioning signal for the next denoising step. Adapting such carries to our trajectory-trained setting is an interesting direction.

\paragraph{MetaState: memory architecture and trajectory alignment.}
\citet{xia2026metastate} address the \emph{information island}: continuous computations are lost between denoising steps when only discrete token decisions persist. Their fixed-size working memory improves reasoning with a frozen backbone. Their objective also combines supervision on all masked positions with a term focused on the positions about to be revealed. This emphasis connects to trajectory alignment, and we investigate a loss following that design in our small-$u$ analysis (\Cref{subsec:smallu,app:llada_results}).
Both approaches train recurrent state through unrolling, but MetaState uses random reveal orderings and Dirichlet--Multinomial reveal counts, whereas its decoder is confidence-based. \method follows the model's unmasking policy and updates the denoiser itself, thus better aligning with our training--inference alignment objective. MetaState uses four unrolled steps by default and reports broadly stable performance over $K\in\{3,4,5\}$. These steps partition the entire training trajectory; our window $W$ instead truncates gradients along trajectories that can span multiple updates.
MetaState trains on 50k examples from the general-purpose T\"ulu-3 mixture and evaluates mathematical reasoning and code generation. Our distinction is not general-purpose training data alone: we study full-model instruction post-training and explicitly evaluate instruction following alongside reasoning and code. Their work advances memory architecture and efficiency, while we fix a minimal carry to isolate training and alignment choices. Therefore, we do not compare the architectures directly; combining more effective or compact memories with \method is an important direction for future work (\Cref{sec:conclusion}).

\paragraph{Relay: shared mechanism, complementary evidence and analysis.}
Concurrent work by \citet{rozonoyer2026relay} is particularly close: it combines model-selected, teacher-forced rollouts with differentiable per-token state and truncated BPTT. It includes rollout-only and detached-carry (i.e., with stop-gradient) controls in both Sudoku and Fast-dLLM v2 (1.5B). The latter is adapted on a code-and-mathematics mixture and evaluated on HumanEval and MBPP. Our setting is general instruction post-training of LLaDA-8B: we evaluate instruction following as the target capability and mathematical reasoning and code generation for capability retention, under both full-canvas and block diffusion.
Relay's results also motivate studying the components across settings: the detached carry gives a substantial gain on Sudoku, while the language-model results show a different balance between rollout accuracy and the NFE benefit of temporal training. Their large-model study establishes effectiveness for a two-step unroll within block diffusion. We broaden this evidence by jointly varying training-trajectory granularity and the unroll horizon at 8B scale, under both full-canvas and block decoding: the question is how reliably the benefits extend across post-training configurations, and what controls their magnitude. In particular, we characterize when closer alignment improves the quality--NFE frontier, when repeated states and concentrated masking levels impair learning, and how targeted interventions mitigate these effects (\Cref{subsec:smallu,app:llada_results}). This analysis provides practical guidance for carry-based post-training beyond establishing that a differentiable channel can help.

\applefootnote{ \textcolor{textgray}{\sffamily Apple and the Apple logo are trademarks of Apple Inc., registered in the U.S. and other countries and regions.}}

\end{document}